\documentclass[11pt]{article}
\usepackage{amsmath}
\usepackage{amssymb}
\usepackage{mathrsfs}
\usepackage{algorithm}
\usepackage{algpseudocode}
\usepackage{amsthm}
\usepackage[framemethod=TikZ]{mdframed}
\usepackage{framed}
\usepackage[margin=1in]{geometry}
\usepackage{graphicx}
\usepackage[outdir=./]{epstopdf}
\usepackage{caption}
\usepackage{subcaption}
\usepackage{multirow}
\usepackage{comment}
\usepackage{color}
\usepackage{xcolor}
\definecolor{mydarkblue}{rgb}{0,0.08,0.45}
\usepackage[hidelinks,colorlinks]{hyperref}
\hypersetup{
	colorlinks=true,
	linkcolor=mydarkblue,
	citecolor=mydarkblue,
	filecolor=mydarkblue,
	urlcolor=mydarkblue,
}
\usepackage{thmtools,thm-restate}
\usepackage[numbers]{natbib}
\date{\vspace{-5ex}}

\allowdisplaybreaks

\newtheorem{corollary}{Corollary}
\newtheorem{lemma}{Lemma}

\newtheorem{fact}{Fact}

\theoremstyle{definition}
\newtheorem{definition}{Definition}
\theoremstyle{remark}

\newtheorem{example}{Example}
\numberwithin{equation}{section}

\let\Pr\relax
\newcommand*{\Pr}{Pr}
\newcommand*{\E}{\mathbb{E}}
\newcommand*{\poly}[1]{\text{poly}(#1)}

\newcommand{\norm}[2]{\| #1 \|_{#2}}



\newcommand\eps\epsilon

\begin{document}
\newcommand{\theTitle}{Prediction with a Short Memory}
\author{
 \fontsize{11}{13}\selectfont {\bf Vatsal Sharan} \\
 \fontsize{11}{13}\selectfont Stanford University \\
 \fontsize{11}{13}\selectfont {\tt vsharan@stanford.edu}
\and
 \fontsize{11}{13}\selectfont {\bf Sham Kakade} \\
 \fontsize{11}{13}\selectfont University of Washington \\
 \fontsize{11}{13}\selectfont {\tt sham@cs.washington.edu}
\and
 \fontsize{11}{13}\selectfont {\bf Percy Liang} \\
 \fontsize{11}{13}\selectfont Stanford University \\
 \fontsize{11}{13}\selectfont {\tt pliang@cs.stanford.edu}
\and
\fontsize{11}{13}\selectfont {\bf Gregory Valiant} \\
\fontsize{11}{13}\selectfont Stanford University \\
\fontsize{11}{13}\selectfont {\tt valiant@stanford.edu}
}

\title{\theTitle}
\date{}

\clearpage
\maketitle\begin{abstract}
We consider the problem of predicting the next observation given a sequence of past observations, and consider the extent to which accurate prediction requires complex algorithms that explicitly leverage long-range dependencies.  Perhaps surprisingly, our positive results show that for a broad class of sequences, there is an algorithm that predicts well on average, and bases its predictions only on the most recent few observation together with a set of simple summary statistics of the past observations.  Specifically, we show that for any distribution over observations, if the mutual information between past observations and future observations is upper bounded by $I$, then a simple Markov model over the most recent $I/\epsilon$ observations obtains expected KL error $\epsilon$---and hence $\ell_1$ error $\sqrt{\epsilon}$---with respect to the optimal predictor that has access to the entire past and knows the data generating distribution. For a Hidden Markov Model with $n$ hidden states, $I$ is bounded by $\log n$, a quantity that does not depend on the mixing time, and we show that the trivial prediction algorithm based on the empirical frequencies of length $O(\log n/\epsilon)$ windows of observations achieves this error, provided the length of the sequence is $d^{\Omega(\log n/\epsilon)}$, where $d$ is the size of the observation alphabet. 

We also establish that this result cannot be improved upon, even for the class of HMMs, in the following two senses: First, for HMMs with $n$ hidden states, a window length of $\log n/\epsilon$ is information-theoretically necessary to achieve expected KL error $\epsilon$, or $\ell_1$ error $\sqrt{\epsilon}$. Second, the $d^{\Theta(\log n/\epsilon)}$ samples required to accurately estimate the Markov model when observations are drawn from an alphabet of size $d$ is necessary for any computationally tractable learning/prediction algorithm, assuming the hardness of strongly refuting a certain class of CSPs.  



\end{abstract}

\thispagestyle{empty}
\newpage
\setcounter{page}{1}

\section{Memory, Modeling, and Prediction}

We consider the problem of predicting the next observation $x_t$
given a sequence of past observations, $x_1, x_2, \dots, x_{t-1}$,
which could have complex and long-range dependencies.
This \emph{sequential prediction} problem is one of the most basic learning tasks and is encountered throughout natural language modeling, speech synthesis, financial forecasting, and a number of other domains that have a sequential or chronological element.   
The abstract problem has received much attention over the last half century
from multiple communities including TCS, machine learning, and coding theory.
The fundamental question is: \emph{How do we consolidate and reference memories about the past in order to effectively predict the future?}

Given the immense practical importance of this prediction problem, there has been an enormous effort to explore different algorithms for storing and referencing information about the sequence, which have led to the development of several popular models such as $n$-gram models and Hidden Markov Models (HMMs). Recently, there has been significant interest in \emph{recurrent neural networks} (RNNs) \citep{bengio1994learning}---which encode the past as a real vector of fixed length that is updated after every observation---and specific classes of such networks, such as Long Short-Term Memory (LSTM) networks \citep{hochreiter1997lstm,gers2000learning}.  Other recently popular models  that have explicit notions of memory include neural Turing machines \cite{graves2014neural}, memory networks \cite{weston2015memory}, differentiable neural computers \cite{graves2016hybrid}, attention-based models~\cite{bahdanau2014neural,vaswani2017attention}, etc.  These models have been quite successful (see e.g.~\cite{luong2015translation,wu2016google}); nevertheless,  consistently learning long-range dependencies, in settings such as natural language, remains an extremely active area of research.

In parallel to these efforts to design systems that explicitly use memory, there has been much effort from the neuroscience community to understand how humans and animals are able to make accurate predictions about their environment.  Many of these efforts also attempt to understand the computational mechanisms behind the formation of memories (memory ``consolidation'') and retrieval ~\cite{chen2017deciphering,chen2016uncovering,wilson1994reactivation}.

Despite the long history of studying sequential prediction, many fundamental questions remain:
\begin{itemize}
\vspace{-.15cm}
\item How much memory is necessary to accurately predict future
observations, and what properties of the underlying sequence determine this requirement?
\item Must one remember significant information about the distant past or is a
short-term memory sufficient?
\item  What is the computational complexity of accurate prediction?
\item How do answers to the above questions depend on the metric that is used to evaluate prediction accuracy?
\end{itemize}
Aside from the intrinsic theoretical value of these questions, their answers could serve to guide the construction of effective practical prediction systems, as well as informing the discussion of the computational machinery of cognition and prediction/learning in nature.

In this work, we provide insights into the first three questions.  We begin by establishing the following proposition, which addresses the first two questions with respect to the pervasively used metric of \emph{average} prediction error:

\medskip

\noindent {\textbf{Proposition~\ref{prop:upperbnd}.}} \emph{Let $\mathcal{M}$ be any distribution over sequences with mutual
	information $I(\mathcal{M})$ between the past observations $\ldots,x_{t-2},x_{t-1}$ and future observations $x_{t},x_{t+1},\ldots$.
  The best $\ell$-th order Markov model, which makes predictions based only on the most recent $\ell$ observations, predicts the distribution of the next observation with average KL error
  $I(\mathcal{M})/\ell$ or average $\ell_1$ error $\sqrt{I(\mathcal{M})/\ell},$ with respect to the actual conditional distribution of $x_t$ given
  all past observations.}
\medskip

The ``best'' $\ell$-th order Markov model is the model which predicts $x_t$ based on the previous $\ell$ observations, $x_{t-\ell},\ldots,x_{t-1}$, according to the conditional distribution of $x_t$ given $x_{t-\ell},\ldots,x_{t-1}$ under the data generating distribution. If the output alphabet is of size $d$, then this conditional distribution can be estimated with small error given $O(d^{\ell+1})$ sequences drawn from the distribution. Without any additional assumptions on the data generating distribution beyond the bound on the mutual information, it is necessary to observe multiple sequences to make good predictions.  This is because the distribution could be highly non-stationary, and have different behaviors at different times, while still having small mutual information.  In some settings, such as the case where the data generating distribution corresponds to observations from an HMM, we will be able to accurately learn this ``best'' Markov model from a single sequence (see Theorem~\ref{thm:hmm}).



The intuition behind the statement and proof of this general proposition is the following: at time $t$, we either predict accurately and are unsurprised when $x_t$ is revealed to us; or, if we predict poorly and are surprised by the value of $x_t$, then $x_t$ must contain a significant amount of information about the history of the sequence, which can then be leveraged in our subsequent predictions of $x_{t+1}$, $x_{t+2}$, etc.  In this sense, every timestep in which our prediction is `bad', we learn some information about the past.  Because the mutual information between the history of the sequence and the future is bounded by $I(\mathcal{M})$, if we were to make $I(\mathcal{M})$ consecutive bad predictions, we have captured nearly this amount of information about the history, and hence going forward, as long as the window we are using spans these observations, we should expect to predict well.

This general proposition, framed in terms of the mutual information of the past and future,
has immediate implications for a number of well-studied models of sequential data, such as Hidden Markov Models (HMMs).  For an HMM with $n$ hidden states, the mutual information of the generated sequence is trivially bounded by $\log n$, which yields the following corollary to the above proposition.  We state this proposition now, as it provides a helpful reference point in our discussion of the more general proposition.

\medskip
\begin{corollary}\label{hmm}
	Suppose observations are generated by a Hidden Markov Model with at most $n$ hidden states.
	 The best $\frac{\log n}{\eps}$-th order Markov model, which makes predictions based only on the most recent $\frac{\log n}{\eps}$ observations, predicts the distribution of the next observation with average KL error
  $\le \eps$ or $\ell_1$ error $\le \sqrt{\eps}$, with respect to the optimal predictor that knows the underlying HMM and has access to all past observations.	
\end{corollary}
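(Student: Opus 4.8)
The plan is to obtain Corollary~\ref{hmm} as an immediate consequence of Proposition~\ref{prop:upperbnd}: it suffices to show that any distribution $\mathcal{M}$ generated by an HMM with at most $n$ hidden states has past--future mutual information $I(\mathcal{M}) \le \log n$, and then to instantiate the proposition with $\ell = \log n/\eps$. With this choice the average KL error bound becomes $I(\mathcal{M})/\ell \le \log n/(\log n/\eps) = \eps$, and the average $\ell_1$ error bound becomes $\sqrt{I(\mathcal{M})/\ell} \le \sqrt{\eps}$. Moreover, the ``best $\ell$-th order Markov model'' and the baseline ``optimal predictor that knows the underlying HMM and sees all of the past'' are, by definition, the conditional laws of $x_t$ given $x_{t-\ell},\dots,x_{t-1}$ and given $x_1,\dots,x_{t-1}$ under $\mathcal{M}$, so the two statements measure exactly the quantity controlled by the proposition.

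The one real step is the bound $I(\mathcal{M}) \le \log n$, which I would prove using the Markov structure of the hidden chain together with the data-processing inequality. Fix any cut time $t$ and let $S$ be the hidden state that generates $x_t$ (equivalently, one may take the hidden state just before the cut). Because the hidden states form a Markov chain and each observation depends only on its own hidden state, conditioning on $S$ renders the past block $X_{\text{past}} = (x_1,\dots,x_{t-1})$ independent of the future block $X_{\text{fut}} = (x_t,x_{t+1},\dots)$. Hence
\[
I(X_{\text{past}};X_{\text{fut}}) \;\le\; I(X_{\text{past}};\,X_{\text{fut}},S) \;=\; I(X_{\text{past}};S) + I(X_{\text{past}};X_{\text{fut}}\mid S) \;=\; I(X_{\text{past}};S) \;\le\; H(S) \;\le\; \log n,
\]
using the chain rule for mutual information, the conditional independence just noted, and the fact that $S$ takes at most $n$ values. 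Since this holds for every cut time $t$, the bound $I(\mathcal{M}) \le \log n$ holds regardless of whether $\mathcal{M}$ is stationary and regardless of the precise convention (e.g.\ a supremum over $t$) used in the definition of $I(\mathcal{M})$.

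Substituting $I(\mathcal{M}) \le \log n$ and $\ell = \log n/\eps$ into Proposition~\ref{prop:upperbnd} then yields the claimed average KL error $\le \eps$ and average $\ell_1$ error $\le \sqrt{\eps}$, completing the proof. The only place any care is needed is the conditional-independence/data-processing argument of the previous paragraph --- in particular, correctly identifying a single hidden variable that separates the entire past from the entire future --- but this is short, and everything quantitative is already packaged inside Proposition~\ref{prop:upperbnd}.
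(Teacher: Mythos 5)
Your proposal is correct and follows essentially the same route as the paper: the paper also obtains Corollary~\ref{hmm} by noting that the hidden state renders the past and future conditionally independent, so $I(\mathcal{M}) \le H(\text{hidden state}) \le \log n$, and then plugging $\ell = \log n/\eps$ into Proposition~\ref{prop:upperbnd} (with Pinsker/Jensen for the $\ell_1$ bound, as in Corollary~\ref{l1}). Your data-processing chain $I(X_{\text{past}};X_{\text{fut}}) \le I(X_{\text{past}};S) \le \log n$ is just a more explicit write-up of the same argument.
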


In the setting where the observations are generated according to an HMM with at most $n$ hidden states, this ``best'' $\ell$-th order Markov model is easy to learn given a \emph{single} sufficiently long sequence drawn from the HMM, and corresponds to the naive ``empirical'' $\ell$-th order Markov model (i.e. $(\ell+1)$-gram model) based on the previous observations. Specifically, this is the model that, given $x_{t-\ell},x_{t-\ell+1},\ldots,x_{t-1},$ outputs the observed (empirical) distribution of the observation that has followed this length $\ell$ sequence.  (To predict what comes next in the phrase ``\ldots defer the details to the $\rule{0.3cm}{0.15mm}$'' we look at the previous occurrences of this subsequence, and predict according to the empirical frequency of the subsequent word.)  The following theorem makes this claim precise.

\begin{restatable}{theorem}{hmm}\label{thm:hmm}
Suppose observations are generated by a Hidden Markov Model with at most $n$ hidden states, and output alphabet of size $d$. For $\epsilon>1/\log^{0.25}n$ there exists a window length $\ell = O(\frac{\log n}{\eps})$ and absolute constant $c$ such that for any $T \ge d^{c \ell},$ if $t \in \{1,2,\ldots,T\}$ is chosen uniformly at random, then the expected $\ell_1$ distance between the true distribution of $x_t$ given the entire history (and knowledge of the HMM), and the distribution predicted by the naive ``empirical'' $\ell$-th order Markov model based on $x_0,\ldots,x_{t-1}$, is bounded by $\sqrt{\eps}$.\footnote{Theorem \ref{thm:hmm} does not have a guarantee on the average KL loss, such a guarantee is not possible as the KL loss as it can be unbounded, for example if there are rare characters which have not been observed so far.} 
\end{restatable}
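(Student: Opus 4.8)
The plan is to interpolate between the empirical $\ell$-gram predictor and the Bayes-optimal predictor through two intermediate predictors. Fix a window length $\ell=\Theta(\log n/\eps)$ (pinned down below), let $t$ be uniform on $\{1,\dots,T\}$, write $W_t=(x_{t-\ell},\dots,x_{t-1})$, and let $q^{\mathrm{Bayes}}_t(\cdot)=\Pr[x_t=\cdot\mid x_0,\dots,x_{t-1}]$ be the Bayes predictor (the object the theorem compares against), $q^{\ell}_t(\cdot)=\Pr[x_t=\cdot\mid W_t]$ the population window-$\ell$ predictor, $\bar q_t(\cdot\mid w)$ the \emph{time-averaged} conditional distribution of the symbol following context $w$ over a uniformly random time in $\{1,\dots,t-1\}$ (this is exactly the conditional expectation of the empirical $\ell$-gram counts built from $x_0,\dots,x_{t-1}$), and $\hat q_t(\cdot\mid w)$ the empirical $\ell$-gram predictor itself. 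By the triangle inequality,
\begin{align*}
\E_t\,\E\big\|q^{\mathrm{Bayes}}_t-\hat q_t(\cdot\mid W_t)\big\|_1
&\le \E_t\,\E\big\|q^{\mathrm{Bayes}}_t-q^{\ell}_t\big\|_1 + \E_t\,\E\big\|q^{\ell}_t-\bar q_t(\cdot\mid W_t)\big\|_1\\
&\quad + \E_t\,\E\big\|\bar q_t(\cdot\mid W_t)-\hat q_t(\cdot\mid W_t)\big\|_1,
\end{align*}
and I will show each of the three terms is at most $\sqrt\eps/3$. The first is exactly what Proposition~\ref{prop:upperbnd} (equivalently Corollary~\ref{hmm}) delivers: using $I(\mathcal M)\le\log n$ it gives, for every $\ell$, an average KL divergence from $q^{\ell}_t$ to $q^{\mathrm{Bayes}}_t$ of at most $\log n/\ell$ over any block of $\ell$ consecutive times — hence over $t\in\{1,\dots,T\}$, up to a negligible last partial block — and Pinsker together with Jensen (concavity of $\sqrt{\cdot}$) converts this into an average $\ell_1$ error of $\sqrt{2\log n/\ell}$, which is $\le\sqrt\eps/3$ once $\ell\ge 18\log n/\eps$. (This KL-to-$\ell_1$ passage is why the statement is in $\ell_1$, as the footnote anticipates.)

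The second term is the conceptual crux, because an HMM need not mix and its conditional distributions genuinely depend on absolute time. The key observation is that \emph{time-homogeneity} implies $x_t\perp t\mid(W_t,h_{t-\ell})$: once one conditions on the last $\ell$ symbols and on the hidden state $\ell$ steps earlier, the distribution of $x_t$ is determined, and the only place $t$ enters the law of $(h_{t-\ell},W_t,x_t)$ is through the marginal of $h_{t-\ell}$, which conditioning on $h_{t-\ell}$ eliminates. Hence $I(x_t;t\mid W_t)\le I(x_t;h_{t-\ell}\mid W_t)$, and since $\bar q_t(\cdot\mid w)$ is the mixture of $q^{\ell}_S$ over the random time $S$, Pinsker and Jensen bound the second term by $\sqrt{2\,I(x_t;h_{t-\ell}\mid W_t)}$. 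To make this $O(\eps)$ I would run an averaging argument over \emph{window lengths}: for a fixed block start $s$ the chain rule for mutual information gives
\begin{align*}
\sum_{j=0}^{L-1} I\big(x_{s+j};h_{s-L}\,\big|\,x_{s-L},\dots,x_{s+j-1}\big)
&= I\big(x_s,\dots,x_{s+L-1};h_{s-L}\,\big|\,x_{s-L},\dots,x_{s-1}\big)\\
&\le H(h_{s-L})\le\log n,
\end{align*}
where the $j$-th summand corresponds to window length $L+j$; averaging this over $j\in\{0,\dots,L-1\}$ and over block starts exhibits a single window length $\ell^{\star}\in\{L,\dots,2L-1\}$ with $I(x_t;h_{t-\ell^{\star}}\mid W_t)\le\log n/L$. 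Taking $L=18\log n/\eps$ makes this $\le\eps/18$, so the second term is $\le\sqrt\eps/3$; and since $\ell^{\star}\ge L\ge18\log n/\eps$ the same $\ell^{\star}=\Theta(\log n/\eps)$ also serves the first term, so we fix $\ell=\ell^{\star}$.

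The third term is a single-trajectory concentration estimate. I would split contexts into \emph{frequent} ones — visited by $W_t$ at least a $\tau=\Theta(\sqrt\eps/d^\ell)$ fraction of the time — and \emph{rare} ones; the rare contexts are hit by $W_t$ with total probability at most $d^\ell\tau=O(\sqrt\eps)$, so (the $\ell_1$ error being always $\le2$) their contribution is $O(\sqrt\eps)$, and likewise the times $t<\sqrt\eps\,T$ form an $O(\sqrt\eps)$ fraction and may be discarded. For a frequent context and a surviving $t$, the context has appeared $\Omega(\tau\sqrt\eps\,T)$ times before $t$, which exceeds $\Omega(d/\eps)$ once $T\ge d^{c\ell}$ for a suitable constant $c$ slightly above $1$ — enough that the empirical next-symbol frequencies are within expected $\ell_1$-distance $\sqrt\eps/6$ of their mean $\bar q_t(\cdot\mid w)$, provided the dependence between successive occurrences of the context along the trajectory is absorbed by a martingale (Azuma/Freedman-type) bound on the count process. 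The delicate point is that the \emph{visit times} of a context are themselves non-stationary, so controlling how the number of prior occurrences concentrates around its mean requires extra care, and it is this part of the analysis that introduces the logarithmic overhead responsible for the mild hypothesis $\eps>1/\log^{0.25}n$. Summing the three bounds via the triangle inequality yields the claimed $\sqrt\eps$.

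I expect the third term to be the main obstacle: the two information-theoretic terms reduce to clean applications of the chain rule (the only genuinely new ingredient being the time-homogeneity observation together with the averaging over window lengths), whereas making the single-trajectory concentration quantitatively strong enough to survive with only $T=d^{O(\log n/\eps)}$ samples — while simultaneously handling the dependence between repeat visits to a context and the non-stationarity of those visit times — is where essentially all of the technical work lies.
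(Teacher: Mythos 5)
Your decomposition is sensible, and terms 1 and 3 are broadly on track (term 1 is exactly Proposition~\ref{prop:upperbnd} plus Pinsker, and term 3 is the same kind of single-trajectory Azuma argument over the $d^{\ell}$ contexts, using $T\ge d^{c\ell}$, that the paper also carries out). The genuine gap is in term 2, which is the heart of the non-stationarity problem. Your chain-rule/averaging-over-window-lengths argument controls the average over \emph{fixed} $t$ of $I(x_t;h_{t-\ell}\mid W_t)$, i.e.\ the quantity $I(x_T;h_{T-\ell}\mid W_T,T)$ in which the time index is part of the conditioning; but what term 2 requires is (essentially) $I(x_T;T\mid W_T)$, and, using your own observation that $x_T\perp T\mid(W_T,h_{T-\ell})$, one has the exact identity
\[
I(x_T;T\mid W_T)\;=\;I(x_T;h_{T-\ell}\mid W_T)\;-\;I(x_T;h_{T-\ell}\mid W_T,T),
\]
so the quantity you need is precisely the \emph{difference} between the $T$-unconditioned and the $T$-conditioned mutual informations; bounding the latter by $\log n/L$ gives no control whatsoever on it. Concretely, take the HMM of Figure~\ref{fig:cycle_seq} that deterministically cycles through a fixed length-$n$ string: conditioned on $t$, the state $h_{t-\ell}$ is deterministic, so $I(x_t;h_{t-\ell}\mid W_t)=0$ for every $t$ and every window length, yet if some length-$\ell$ context recurs in the string with different successors, the time-averaged conditional $\bar q_t(\cdot\mid w)$ is a genuine mixture while the time-$t$ conditional is a point mass, so your second term is strictly positive (and with suitably chosen strings can be pushed toward the $\sqrt{\epsilon}$ scale). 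The theorem survives on such examples only because the total mass of ambiguous contexts must be small --- but that is exactly the statement that still needs a proof, and your argument as written does not supply it. (A further, smaller mismatch: the empirical model estimates a prefix average over $S\in\{1,\dots,t-1\}$ weighted by $\Pr[W_S=w]$, which is not literally the mixture appearing in $I(x_T;T\mid W_T)$.)

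For comparison, the paper never routes through the population window-$\ell$ predictor at all, and thereby avoids having to bound any $T$-unconditioned mutual information. It shows the empirical $\ell$-gram predictor is close to the HMM predictor whose initial state is drawn from the empirical distribution $\hat\pi$ of hidden states over the prefix, and then proves (Lemma~\ref{lem:regret}) that for \emph{any} prior placing mass at least $n^{-c}$ on the true hidden state at the window start --- which Lemma~\ref{lem:set} guarantees for all but an $O(1/n)$ fraction of start times --- the posterior-updating predictor's cumulative $\ell_1$ gap to $OPT$ (which knows that hidden state) over a window of length $c\log n/\epsilon^2$ is $O(\epsilon\ell)$. That step is a Bayesian-consistency/martingale argument: a submartingale built from the log posterior odds of the true hidden state, a truncated Pinsker inequality to keep the increments bounded after clipping, and Azuma (this clipping and concentration is also where the hypothesis $\epsilon>1/\log^{0.25}n$ actually comes from, not from empirical-count concentration). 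If you want to rescue your route, you need a direct bound on $I(x_T;T\mid W_T)$ or $I(x_T;h_{T-\ell}\mid W_T)$ --- a statement about the mixture over times that does not follow from per-time chain rules --- and something like the paper's posterior-catch-up argument appears to be needed for exactly that step.
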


The above theorem states that the window length necessary to predict well is independent of the mixing time of the HMM in question, and holds even if the model does not mix.  While the amount of data required to make accurate predictions using length $\ell$ windows scales exponentially in $\ell$---corresponding to the condition in the above theorem that $t$ is chosen uniformly between $0$ and $T=d^{O(\ell)}$---our lower bounds, discussed in Section~\ref{intro:LB}, argue that this exponential dependency is unavoidable.

\subsection{Interpretation of Mutual Information of Past and Future}
While the mutual information between the past observations and the future observations is an intuitive parameterization of the complexity of a distribution over sequences,
the fact that it is the \emph{right} quantity is a bit subtle.  It is tempting to hope that this mutual information is a bound on the amount of memory that would be required to store all the information about past observations that is relevant to the distribution of future observations.  This is \emph{not} the case.  Consider the following setting:~Given a joint
distribution over random variables $X_\text{past}$ and $X_\text{future}$, suppose we wish to define a
function $f$ that maps $X_\text{past}$ to a binary ``advice''/memory string $f(X_\text{past})$, possibly of variable
length, such that $X_\text{future}$ is independent of $X_\text{past}$, given $f(X_\text{past}).$
As is shown in~\citet{harsha2007communication},
there are joint distributions over
$(X_\text{past},  X_\text{future})$ such that even on average, the minimum length of the advice/memory string
necessary for the above task is exponential in the mutual information
$I(X_\text{past}; X_\text{future})$.  This setting can also be interpreted as a
two-player communication game where one player generates $X_\text{past}$ and the other
generates $X_\text{future}$ given limited communication (i.e. the ability to communicate $f(X_\text{past})$).\footnote{It is worth noting that if the advice/memory string $s$ is
sampled first, and then $X_\text{past}$ and $X_\text{future}$ are defined to be random functions of $s$,
then the length of $s$ \emph{can} be related to $I(X_\text{past}; X_\text{future})$
(see~\cite{harsha2007communication}).  This latter setting where $s$ is generated first corresponds to allowing shared randomness in the two-player communication game; however, this is not relevant to the sequential prediction problem.}

Given the fact that this mutual information is not even an upper bound on the amount
of memory that an optimal algorithm (computationally unbounded, and with
complete knowledge of the distribution) would require,
Proposition~\ref{prop:upperbnd} might be surprising.

\subsection{Implications of Proposition~\ref{prop:upperbnd} and Corollary~\ref{hmm}} 

These results show that a Markov model---a model that cannot capture long-range
dependencies or structure of the data---can predict accurately on \emph{any} data-generating distribution (even those corresponding to complex models such as RNNs), provided the order of the Markov model scales with the complexity of the distribution, as parameterized by the mutual information between the past and future.  Strikingly, this parameterization is indifferent to whether the dependencies in the sequence are relatively short-range as in an HMM that mixes quickly, or very long-range as in an HMM that mixes slowly or does not mix at all.  Independent of the nature of these dependencies, provided the mutual information is small, accurate prediction is possible based only on the most recent few observation.  (See Figure~\ref{fig:cycle_seq} for a concrete illustration of this result in the setting of an HMM  that does not mix and has long-range dependencies.)

	\begin{figure}[h]
		\centering
		\centering
		\includegraphics[height=1.5in]{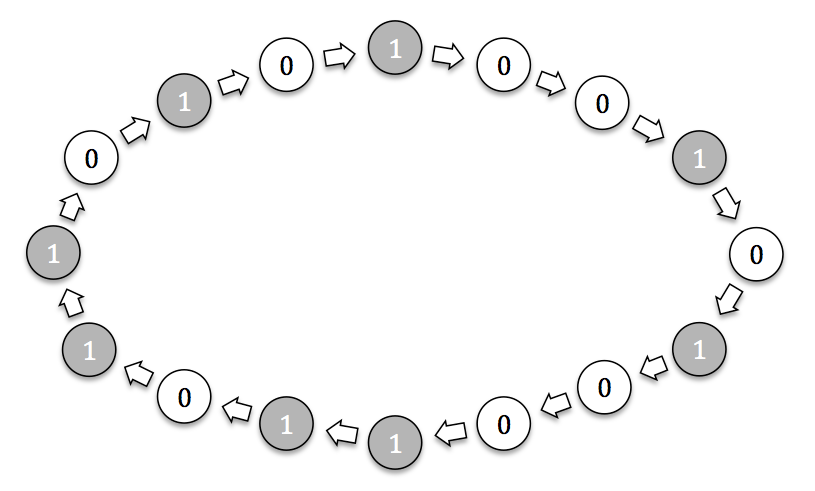}
		\caption{\small{A depiction of a HMM on $n$ states, that repeats a given length $n$ binary sequence of outputs, and hence does not mix.  Corollary~\ref{hmm} and Theorem~\ref{thm:hmm} imply that accurate prediction is possible based  only on short sequences of $O(\log n)$ observations. \label{fig:cycle_seq}}}		
	\end{figure}

At a time when increasingly complex models such as recurrent neural networks
and neural Turing machines are in vogue, these results serve as a baseline theoretical result.  They also help explain the practical success of simple Markov models such as Kneser-Ney smoothing~\cite{kneser1995improved,chen96smoothing} for machine translation and speech recognition systems in the past.   
Although recent recurrent neural networks have yielded empirical gains (see e.g. ~\citep{luong2015translation,wu2016google}), current models still lack the ability to consistently capture long-range dependencies.\footnote{One amusing example is the recent
sci-fi short film \emph{Sunspring} whose script was automatically generated by
an LSTM.  Locally, each sentence of the dialogue
(mostly) makes sense, though there is no cohesion over longer time frames, and no overarching plot trajectory (despite the brilliant acting).}   In some settings, such as natural language, capturing such long-range dependencies seems crucial for achieving human-level results.  Indeed, the main message of a narrative is not conveyed in any single short
segment.   More generally, higher-level intelligence seems to be about the ability to
judiciously decide what aspects of the observation sequence are worth remembering
and updating a model of the world based on these aspects.

Thus, for such settings, Proposition~\ref{prop:upperbnd},  can actually be interpreted as a kind of negative result---that \emph{average} error is not a good metric for training and evaluating models, since models such as the Markov model which are indifferent to the time scale of the dependencies can still perform well under it as long as the number of dependencies is not too large.   It is important to note that average prediction error \emph{is} the metric that ubiquitously used in practice, both in the natural language processing domain and elsewhere. Our results suggest that a different metric might be essential to driving progress towards systems that attempt to capture long-range dependencies and leverage memory in meaningful ways.  We discuss this possibility of alternate prediction metrics more in Section~\ref{futdirections}. 

\medskip
For many other settings, such as financial prediction and lower level language prediction tasks such as those used in OCR, average prediction error \emph{is} actually a meaningful metric.  For these settings, the result of Proposition~\ref{prop:upperbnd} is extremely positive: no matter the nature of the dependencies in the financial markets, it is sufficient to learn a Markov model.  As one obtains more and more data, one can learn a higher and higher order Markov model, and average prediction accuracy should continue to improve.  

For these applications, the question now becomes a computational question: the naive approach to learning an $\ell$-th order Markov model in a domain with an alphabet of size $d$ might require $\Omega(d^{\ell})$ space to store, and data to learn.  \emph{From a computational standpoint, is there a better algorithm?  What properties of the underlying sequence imply that such models can be learned, or approximated more efficiently or with less data?}

Our computational lower bounds, described below, provide some perspective on these computational considerations.

\medskip


%


\subsection{Lower bounds}\label{intro:LB}
Our positive results show that accurate prediction is possible via an algorithmically simple model---a Markov model that only depends on the most recent observations---which can be learned in an algorithmically straightforward fashion by simply using the empirical statistics of short sequences of examples, compiled over a sufficient amount of data.  Nevertheless, the Markov model has $d^{\ell}$ parameters, and hence requires an amount of data that scales as $\Omega(d^{\ell})$ to learn, where $d$ is a bound on the size of the observation alphabet.   This prompts the question of whether it is possible to learn a successful predictor based on significantly less data.

We show that, even for the special case where the data sequence is generated from an HMM over $n$ hidden states, this is not possible in general, assuming a natural complexity-theoretic assumption.   An HMM with $n$ hidden states and an output alphabet of size $d$ is defined via only $O(n^2 + nd)$ parameters and $O_{\eps}(n^2+nd)$ samples \emph{are} sufficient, from an information theoretic standpoint, to learn a model that will predict accurately.   While learning an HMM is computationally hard (see e.g.~\cite{mossel2005learning}), this begs the question of whether accurate (average) prediction can be achieved via a computationally efficient algorithm and and an amount of data significantly less than the $d^{\Theta(\log n)}$ that the naive Markov model would require.

Our main lower bound shows that there exists a family of HMMs such that the $d^{\Omega(\log n/\epsilon)}$ sample complexity requirement is necessary for any computationally efficient algorithm that predicts accurately on average, assuming a natural complexity-theoretic assumption. Specifically, we show that this hardness holds, provided that
 the problem of strongly refuting a certain class of CSPs is hard,
which was conjectured in \citet{feldman2015complexity}
and studied in related works~\citet{allen2015refute} and \citet{kothari2017sum}.
See Section~\ref{sec:lower1} for a description of this class and discussion of the conjectured hardness.

\medskip
\noindent \textbf{Theorem~\ref{high_n}.}
\emph{	Assuming the hardness of
	strongly refuting a certain class of CSPs, for all sufficiently large $n$ and any $\eps \in (1/n^c, 0.1)$ for some fixed constant $c$, there exists a family of HMMs with $n$ hidden states and an output alphabet of size $d$ such that any algorithm that runs in time polynomial in $d$, namely time $f(n,\eps)\cdot d^{g(n,\eps)}$ for any functions $f,g$, and achieves average KL or $\ell_1$ error $\eps$ (with respect to the optimal predictor) for a random HMM in the family must observe $d^{\Omega(\log n/\eps)}$
	observations from the HMM.}
\medskip

As the mutual information of the generated sequence of an HMM with $n$ hidden states is bounded by $\log n$, Theorem~\ref{high_n} directly implies that there are families of data-generating distributions $\mathcal{M}$ with mutual information $I(\mathcal{M})$ and observations drawn from an alphabet of size $d$ such that any computationally efficient algorithm requires $d^{\Omega(I(\mathcal{M})/\eps)}$ samples from $\mathcal{M}$ to achieve average error $\epsilon$. The above bound holds when $d$ is large compared to $\log n$ or $I(\mathcal{M})$, but a different but equally relevant regime is where the
alphabet size $d$ is small compared to the scale of dependencies in the sequence
(for example, when predicting characters \citep{kim2015character}).
We show lower bounds in this regime of the same flavor as those of
Theorem~\ref{high_n} except based on the problem of learning a noisy parity
function; the (very slightly) subexponential algorithm of~\citet{blum2003noise}
for this task means that we lose at least a superconstant factor in the
exponent in comparison to the positive results of
Proposition~\ref{prop:upperbnd}.


\medskip
\noindent 
 \textbf{Proposition~\hypertarget{prop:bin}{2}.} 
\emph{Let $f(k)$ denote a lower bound on the amount of time and samples required to learn parity with noise on uniformly random $k$-bit inputs. For all sufficiently large $n$ and $\epsilon\in (1/n^c, 0.1)$ for some fixed constant $c$, there exists a family of HMMs with $n$ hidden states such that any algorithm that achieves average prediction error $\eps$ (with respect to the optimal predictor) for a random HMM in the family requires at least $f\left(\Omega(\log n/\epsilon)\right)$ time or samples.}
\medskip

Finally, we also establish the \emph{information theoretic} optimality of the results of Proposition~\ref{prop:upperbnd}, in the sense that among (even computationally unbounded) prediction algorithms that predict based only on the most recent $\ell$ observations, an average KL prediction error of $\Omega(I(\mathcal{M}) / \ell)$ and $\ell_1$ error $\Omega(\sqrt{I(\mathcal{M}) / \ell})$ with respect to the optimal predictor, is necessary.


\medskip
\noindent \textbf{Proposition 3. }\emph{There is an absolute constant $c < 1$ such
that for all $0<\epsilon< 1/4$ and sufficiently large $n$, there exists an HMM
with $n$ hidden states such that it is not information-theoretically possible
to obtain average KL prediction error less than $\eps$ or $\ell_1$ error less than $\sqrt{\eps}$ (with respect to the optimal predictor) while using only the most recent $c\log n/\epsilon$ observations to make each prediction.}
\medskip


\subsection{Future Directions}\label{futdirections}



As mentioned above, for the settings in which capturing long-range dependencies seems essential, it is worth re-examining the choice of ``average prediction error'' as the metric used to train and evaluate models. One possibility, that has a more worst-case flavor, is to only evaluate the algorithm at a chosen set of time steps instead of all time steps.  Hence the naive Markov model can no longer do well just by predicting well on the time steps when prediction is easy. In the context of natural language processing, learning with respect to such a metric intuitively corresponds to training a model to do well with respect to, say, a question answering task instead of a language modeling task.  A fertile middle ground between average error (which gives too much reward for correctly guessing common words like ``a'' and ``the''), and worst-case error might be a re-weighted prediction error that provides more reward for correctly guessing less common observations.  It seems possible, however, that the techniques used to prove Proposition~\ref{prop:upperbnd} can be extended to yield analogous statements for such error metrics.  

In cases where average error is appropriate, given the upper bounds of Proposition~\ref{prop:upperbnd}, it is natural to consider what additional structure might be present that avoids the (conditional) computational lower bounds of Theorem~\ref{high_n}. One possibility is a \emph{robustness} property---for example the property that a Markov model would continue to predict well even when each observation were obscured or corrupted with some small probability. The lower bound instance rely on parity based constructions and hence are very sensitive to noise and corruptions. For learning over \emph{product} distributions, there are well known connections between noise stability and approximation by low-degree polynomials~\cite{o2014analysis,blais2010polynomial}.  Additionally,  low-degree polynomials can be learned agnostically over \emph{arbitrary} distributions via polynomial regression \cite{kalai2008agnostically}.  It is tempting to hope that this thread could be made rigorous, by establishing a connection between natural notions of noise stability over arbitrary distributions, and accurate low-degree polynomial approximations.  Such a connection could lead to significantly better sample complexity requirements for prediction on such ``robust'' distributions of sequences, perhaps requiring only $\poly{d,I(\mathcal{M}),1/\epsilon}$ data.  Additionally, such sample-efficient approaches to learning succinct representations of large Markov models may inform the many practical prediction systems that currently rely on Markov models.

\subsection{Related Work}


{\flushleft{\textbf{Parameter Estimation. }}}
It is interesting to compare using a Markov model for prediction with methods that attempt to \emph{properly} learn an underlying model.
For example, method of moments algorithms \citep{hsu09spectral,anandkumar12moments}
allow one to estimate a certain class of Hidden Markov model with polynomial
sample and computational complexity.
These ideas have been extended to learning neural networks \citep{sedghi2016training}
and input-output RNNs \citep{janzamin2015beating}.
Using different methods, \citet{arora2014provable} showed how to learn certain
random deep neural networks.
Learning the model directly can result in better sample efficiency,
and also provide insights into the structure of the data.
The major drawback of these approaches is that they usually require the true
data-generating distribution to be in (or extremely close to) the model family that we are learning.  This is a very strong assumption that often does not hold in practice.\\

%
{\flushleft{\textbf{Universal Prediction and Information Theory. }}}
On the other end of the spectrum is the class of no-regret online learning methods
which assume that the data generating distribution can even be adversarial
\citep{cesabianchi06prediction}.  However, the nature of these results are
fundamentally different from ours: whereas we are comparing
to the perfect model that can look at the infinite past,
online learning methods typically compare to a fixed set of experts,
which is much weaker. We note that information theoretic tools have also been employed in the online learning literature to show near-optimality of Thompson sampling with respect to a fixed set of experts in the context of online learning with prior information \citep{russo2016information}, Proposition~\ref{prop:upperbnd} can be thought of as an analogous statement about the strong performance of Markov models with respect to the optimal predictions in the context of sequential prediction.

There is much work on sequential prediction based on KL-error from
the information theory and statistics communities.
The philosophy of these approaches are often more adversarial, with
perspectives ranging from minimum description
length~\cite{bry98,grunwald05} and individual sequence
settings~\cite{dawid84}, where no model of the data distribution process is assumed.
Regarding worst case guarantees (where there is no data
generation process), and \emph{regret} as the notion of optimality, there
is a line of work on both minimax rates and the performance of 
Bayesian algorithms, the latter of which has favorable guarantees in a sequential
setting. Regarding minimax rates, \cite{shtarkov87} provides an
exact characterization of the minimax strategy, though the
applicability of this approach is often limited to settings where the number
strategies available to the learner is relatively small (i.e., the
normalizing constant in \cite{shtarkov87} must exist). More generally,
there has been considerable work on the regret in information-theoretic and
statistical settings, such as the works in
\cite{dawid84,AW01,F91,opper98worst,cesa01worst,Vovk01,KN04,long_version}.

Regarding log-loss more broadly, there is considerable work on
information consistency (convergence in distribution) and minimax
rates with regards to statistical estimation in parametric and non-parametric
families~\cite{Clarke:90,Haussler:97,Barron:98a,Barron:99,Diaconis:86,Zhang:04}.
In some of these settings, e.g. minimax risk in parametric, i.i.d.
settings, there are characterizations of the regret in terms of mutual
information~\cite{Haussler:97}.  

There is also work on universal lossless data compression algorithm,
such as the celebrated Lempel-Ziv algorithm \cite{LempelZiv}.  Here,
the setting is rather different as it is one of coding the entire
sequence (in a block setting) rather than prediction loss.\\

{\flushleft{\textbf{Sequential Prediction in Practice. }}}
Our work was initiated by the
desire to understand the role of memory in sequential prediction, and the
belief that modeling long-range dependencies is important for complex tasks
such as understanding natural language.  There have been many proposed models with explicit notions of memory, including recurrent neural networks~\cite{rumelhart86}, Long Short-Term Memory (LSTM) networks\citep{hochreiter1997lstm,gers2000learning}, attention-based models 	\citep{bahdanau2014neural,vaswani2017attention}, neural Turing machines \cite{graves2014neural}, memory networks \cite{weston2015memory}, differentiable neural computers \cite{graves2016hybrid}, etc.  While some of these models often fail to capture long range dependencies (for example, in the case of LSTMs, it is not difficult to show that they forget the past exponentially quickly if they are ``stable''~\cite{bengio1994learning}), the empirical performance in some settings is quite promising (see, e.g.~\cite{luong2015translation,wu2016google}).  


\section{Proof Sketch of Theorem~\ref{thm:hmm}}\label{sec:hmm}

We provide a sketch of the proof of Theorem~\ref{thm:hmm}, which gives stronger guarantees than Proposition \ref{prop:upperbnd} but only applies to sequences generated from an HMM. The core of this proof is the following lemma that guarantees that the Markov model that knows the true marginal probabilities of all short sequences, will end up predicting well.  Additionally, the bound on the expected prediction error will hold in expectation over \emph{only} the randomness of the HMM during the short window, and with high probability over the randomness of when the window begins (our more general results hold in expectation over the randomness of when the window begins).  For settings such as financial forecasting, this additional guarantee is particularly pertinent; you do not need to worry about the possibility of choosing an ``unlucky'' time  to begin your trading regime, as long as you plan to trade for a duration that spans an entire short window.    Beyond the extra strength of this result for HMMs, the proof approach is intuitive and pleasing, in comparison to the more direct information-theoretic proof of Proposition~\ref{prop:upperbnd}.   We first state the lemma and sketch its proof, and then conclude the section by describing how this yields Theorem~\ref{thm:hmm}. 

\begin{lemma}\label{lem:regret}
Consider an HMM with $n$ hidden states,  let the hidden state at time $s=0$ be chosen according to an arbitrary distribution $\pi,$ and denote the observation at time $s$ by $x_s$.  Let $OPT_s$ denote the conditional distribution of $x_s$ given observations $x_0,\ldots,x_{s-1}$, and knowledge of the hidden state at time $s=0$.  Let $M_s$ denote the conditional distribution of $x_s$ given only $x_0,\ldots,x_{s-1},$ which corresponds to the naive $s$-th order Markov model that knows only the joint probabilities of sequences of the first $s$ observations. Then with probability at least $1-1/n^{c-1}$ over the choice of initial state, for $\ell=c \log n/\eps^2$, $c\ge 1$ and $\epsilon \ge 1/\log^{0.25} n$,  $$\E\Big[\sum_{s=0}^{\ell-1} \| OPT_s - M_s \|_1 \Big] \le 4\eps \ell,$$ where the expectation is with respect to the randomness  in the outputs $x_0,\ldots,x_{\ell-1}.$
\end{lemma}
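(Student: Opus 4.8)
The plan is to track the amount of information that the window of observations $x_0,\ldots,x_{\ell-1}$ reveals about the initial hidden state $h_0$, using only the trivial fact that, since $h_0$ takes at most $n$ values, the observations can reveal at most $\log n$ bits about it. Fix a value of $h_0$ and condition on it throughout, so that every expectation below is over the randomness of the HMM during the window given $h_0$; write $x_{<s}:=(x_0,\ldots,x_{s-1})$. By definition $OPT_s(\cdot)=\Pr[x_s=\cdot\mid x_{<s},h_0]$ and $M_s(\cdot)=\Pr[x_s=\cdot\mid x_{<s}]$, the latter marginalizing $h_0$ over $\pi$, and $\|OPT_s-M_s\|_1$ depends only on $x_{<s}$. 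The intuition to formalize is the one in the text: at each step either $OPT_s$ and $M_s$ agree and the term is small, or they disagree and $x_s$ carries a lot of information about $h_0$; since only $\log n$ bits are available in total, the latter can happen on few of the $\ell$ steps.

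First I would replace the $\ell_1$ error by KL divergence via Pinsker's inequality, $\|OPT_s-M_s\|_1\le\sqrt{2\,D_{\mathrm{KL}}(OPT_s\,\|\,M_s)}$, valid for every realization of $x_{<s}$. Taking expectations over $x_{<s}$, applying Jensen's inequality to $\sqrt{\cdot}$, and then Cauchy--Schwarz across the $\ell$ steps gives
\[
\E\Big[\sum_{s=0}^{\ell-1}\|OPT_s-M_s\|_1\,\Big|\,h_0\Big]\;\le\;\sum_{s=0}^{\ell-1}\sqrt{2 d_s}\;\le\;\sqrt{\,2\ell\sum_{s=0}^{\ell-1}d_s\,},\qquad d_s:=\E\big[D_{\mathrm{KL}}(OPT_s\,\|\,M_s)\,\big|\,h_0\big].
\]
The chain rule for KL divergence collapses the sum exactly: $\sum_{s=0}^{\ell-1}d_s=D_{\mathrm{KL}}\big(\Pr[x_{<\ell}\mid h_0]\,\|\,\Pr[x_{<\ell}]\big)$, the divergence between the law of the first $\ell$ observations given $h_0$ and their $\pi$-marginal law. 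Hence it suffices to show this single divergence is $O(\log n)$ for a typical choice of $h_0$.

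The key point is a pointwise likelihood-ratio bound: since $\Pr[x_{<\ell}]=\sum_h\pi(h)\Pr[x_{<\ell}\mid h]\ge\pi(h_0)\Pr[x_{<\ell}\mid h_0]$, the ratio $\Pr[x_{<\ell}\mid h_0]/\Pr[x_{<\ell}]$ is at most $1/\pi(h_0)$ everywhere, so $D_{\mathrm{KL}}(\Pr[x_{<\ell}\mid h_0]\,\|\,\Pr[x_{<\ell}])\le\log(1/\pi(h_0))$. (Averaging over $h_0\sim\pi$ recovers the familiar identity that the expected divergence equals $I(h_0;x_{<\ell})\le H(h_0)\le\log n$, but the pointwise bound is what yields a high-probability statement.) The total $\pi$-mass on states $h$ with $\pi(h)<n^{-c}$ is at most $n\cdot n^{-c}=n^{-(c-1)}$, so with probability at least $1-n^{-(c-1)}$ over the draw of the initial state we have $\pi(h_0)\ge n^{-c}$ and therefore $\sum_s d_s\le c\log n$. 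Substituting this and $\ell=c\log n/\eps^2$ into the displayed inequality gives $\E[\sum_s\|OPT_s-M_s\|_1\mid h_0]\le\sqrt{2\ell\cdot c\log n}=\sqrt2\,c\log n/\eps\le 4\eps\ell$, which is the claim. The only places that need care are the chain-rule bookkeeping---verifying that $OPT_s$ and $M_s$ really are the conditionals of the two joint laws appearing in the identity---and checking that the constants close, the slack between $\sqrt2$ and $4$ absorbing the losses from Pinsker, Jensen and Cauchy--Schwarz; the hypothesis $\eps\ge 1/\log^{0.25}n$ does not appear to be needed for this chain of inequalities and is presumably retained only because it keeps $\ell=O(\log^{1.5}n)$, which is what the downstream empirical-estimation argument in the proof of Theorem~\ref{thm:hmm} requires.
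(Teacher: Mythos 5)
Your proof is correct, but it takes a genuinely different route from the paper's. The paper proves Lemma~\ref{lem:regret} by a submartingale argument on the log-odds $\log\bigl(H_0^s(h_0)/(1-H_0^s(h_0))\bigr)$ of the posterior of the true initial state: it relates $M_s$ to the mixture $u_s\,OPT_s+v_s R_s$, shows via Bayes' rule that the per-step increment has expectation $D_{KL}(OPT_s\parallel R_s)$, clips the increments and proves a truncated (``modified Pinsker'') inequality to keep bounded differences, restricts to a high-probability set of output sequences, and then applies Azuma--Hoeffding; the hypothesis $\epsilon\ge 1/\log^{0.25}n$ is exactly what makes that concentration step go through, and the constant $4$ comes from this bookkeeping. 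You instead sum the per-step divergences exactly by the KL chain rule, identifying $\sum_s \mathbb{E}\bigl[D_{KL}(OPT_s\parallel M_s)\mid h_0\bigr]$ with $D_{KL}\bigl(\Pr[x_0^{\ell-1}\mid h_0]\,\|\,\Pr[x_0^{\ell-1}]\bigr)$, and control it by the pointwise likelihood-ratio bound $\Pr[x_0^{\ell-1}]\ge\pi(h_0)\Pr[x_0^{\ell-1}\mid h_0]$, which is what upgrades the on-average identity ($\mathbb{E}_{h_0\sim\pi}$ of this divergence being $I(h_0;x_0^{\ell-1})\le\log n$) to a statement holding for every initial state of non-negligible prior mass; Pinsker, Jensen and Cauchy--Schwarz then close the bound with constant $\sqrt{2}\le 4$. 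Your route is essentially the conditional analogue of the information-theoretic proof of Proposition~\ref{prop:upperbnd}: it is shorter, avoids both the modified Pinsker lemma and martingale concentration, and---as you observe---does not need $\epsilon\ge 1/\log^{0.25}n$ (in the paper that hypothesis is consumed by the Azuma step, not merely by downstream considerations). What the paper's heavier machinery buys is a finer conclusion than the lemma states: with high probability over the output sequence itself, either the cumulative squared error $\sum_s\|OPT_s-M_s\|_1^2$ is $O(\log n)$ or the posterior sharply concentrates on $h_0$, which is the intuition the authors emphasize; however, only the expectation bound is used in the proof of Theorem~\ref{thm:hmm}, so your argument fully suffices for the stated lemma and its downstream use.
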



The proof of the this lemma will hinge on establishing a connection between $OPT_s$---the Bayes optimal model that knows the HMM and the initial hidden state $h_0$, and at time $s$ predicts the true distribution of $x_s$ given $h_0, x_0,\ldots,x_{s-1}$---and the naive order $s$ Markov model $M_s$ that knows the joint probabilities of sequences of $s$ observations (given that the initial state is drawn according to $\pi$), and predicts accordingly.   This latter model is precisely the same as the model that knows the HMM and distribution $\pi$ (but not $h_0$), and outputs the conditional distribution of $x_s$ given the observations.  

To relate these two models, we proceed via a martingale argument that leverages the intuition that, at each time step either $OPT_s \approx M_s$, or, if they differ significantly, we expect the $s$th observation $x_s$ to contain a significant amount of information about the hidden state at time zero, $h_0$, which will then improve $M_{s+1}$.   Our submartingale will precisely capture the sense that for any $s$ where there is a significant deviation between $OPT_s$ and $M_s$, we expect the probability of the initial state being $h_0$ conditioned on $x_0,\ldots,x_s$, to be significantly more than the probability of $h_0$ conditioned on $x_0,\ldots,x_{s-1}.$  

More formally, let $H^s_0$ denote the distribution of the hidden state at time $0$ conditioned on $x_0,\ldots,x_s$ and let $h_0$ denote the true hidden state at time 0. Let $H^s_0(h_0)$ be the probability of $h_0$ under the distribution $H^s_0$. We show that the following expression is a submartingale: 
$$\log \left( \frac{H^s_0(h_0)}{1-H^s_0(h_0)}\right)-\frac{1}{2}\sum_{i=0}^s \|OPT_i - M_i\|_1^2.$$ 
The fact that this is a submartingale is not difficult: Define $R_{s}$ as the conditional distribution of $x_s$ given observations $x_0,\dotsb,x_{s-1}$ and initial state drawn according to $\pi$ but \emph{not} being at hidden state $h_0$ at time 0. Note that $M_s$ is a convex combination of $OPT_s$ and $R_s$, hence ${\norm{OPT_s-M_s}{1}}\le \;{\norm{OPT_s-R_s}{1}}$. To verify the submartingale property, note that by Bayes Rule, the change in the LHS at any time step $s$ is the log of the ratio of the probability of observing the output $x_s$ according to the distribution $OPT_{s}$ and the probability of $x_s$ according to the distribution $R_{s}$. The expectation of this is the KL-divergence between $OPT_s$ and $R_s$, which can be related to the $\ell_1$ error using Pinsker's inequality.


At a high level, the proof will then proceed via concentration bounds (Azuma's inequality), to show that, with high probability, if  the error from the first $\ell= c \log n /\eps^2$ timesteps is large, then $\log\left(\frac{H_0^{\ell-1}( h_0)}{1-H_0^{\ell-1}( h_0)}\right)$ is also likely to be large, in which case the posterior distribution of the hidden state, $H_0^{\ell-1}$ will be sharply peaked at the true hidden state, $h_0$, unless $h_0$ had negligible mass (less than $n^{-c}$) in distribution $\pi.$ 

There are several slight complications to this approach, including the fact that the submartingale we construct does not necessarily have nicely concentrated or bounded differences, as the first term in the submartingale could change arbitrarily.   We address this by noting that the first term should not decrease too much except with tiny probability, as this corresponds to the posterior probability of the true hidden state sharply dropping.  For the other direction, we can simply ``clip'' the deviations to prevent them from exceeding $\log n$ in any timestep, and then   show that the submartingale property continues to hold despite this clipping by proving the following modified version of Pinsker's inequality: 

\begin{restatable}{lemma}{coolKL}\label{cool_pinsker} (Modified Pinsker's inequality)
		For any two distributions $\mu(x)$ and $\nu(x)$ defined on $x \in X$, define the $C$-truncated KL divergence as $\tilde{D}_C(\mu\parallel \nu) = \E_{\mu}\Big[\log \Big(\min\Big\{\frac{\mu(x)}{\nu(x)},C\Big\}\Big)\Big] $ for some fixed $C$ such that $\log C \ge8$. Then $\tilde{D}_C(\mu\parallel \nu) \ge {\frac{1}{2}\norm{\mu-\nu }{1}^2}$.
\end{restatable}

\medskip

Given Lemma~\ref{lem:regret}, the proof of Theorem~\ref{thm:hmm} follows relatively easily.  Recall that Theorem~\ref{thm:hmm} concerns the expected prediction error at a timestep $t \leftarrow \{0,1,\ldots, d^{c \ell}\}$, based on the model $M_{emp}$ corresponding to the empirical distribution of length $\ell$ windows that have occurred in $x_0,\ldots,x_t,$.  The connection between the lemma and theorem is established by showing that, with high probability, $M_{emp}$ is close to $M_{\hat{\pi}},$ where $\hat{\pi}$ denotes the empirical distribution of (unobserved) hidden states $h_0,\ldots,h_t$, and $M_{\hat{\pi}}$ is the distribution corresponding to drawing the hidden state $h_0 \leftarrow \hat{\pi}$ and then generating $x_0,x_1,\ldots,x_{\ell}.$  We provide the full proof in Appendix~\ref{sec:hmm_app}.

\section{Definitions and Notation}

Before proving our general Proposition \ref{prop:upperbnd}, we first introduce the necessary notation.  For any random variable $X$, we denote its distribution as $Pr(X)$. The mutual information between two random variables $X$ and $Y$ is defined as $I(X;Y)=H(Y)-H(Y|X)$ where $H(Y)$ is the entropy of $Y$ and $H(Y|X)$ is the conditional entropy of $Y$ given $X$. The conditional mutual information $I(X;Y|Z)$ is defined as:
\begin{align*}
I(X;Y|Z)&=H(X|Z)-H(X|Y,Z) =\E_{x,y,z} \log \frac{\Pr(X|Y,Z)}{\Pr(X|Z)}\\
&=\E_{y,z} D_{KL}(\Pr(X|Y,Z) \parallel \Pr(X|Z) ),
\end{align*}
where $D_{KL}(p\parallel q)=\sum_{x}^{}p(x)\log\frac{p(x)}{q(x)}$ is the KL divergence between the distributions $p$ and $q$. Note that we are slightly abusing notation here as $D_{KL}(\Pr(X|Y,Z) \parallel \Pr(X|Z) )$ should technically be ${D_{KL}(\Pr(X|Y=y,Z=z) \parallel \Pr(X|Z=z) )}$. But we will ignore the assignment in the conditioning when it is clear from the context. Mutual information obeys the following chain rule: $I(X_1,X_2;Y)=I(X_1;Y)+ I(X_2;Y|X_1)$.\\

Given a distribution over infinite sequences, $\{x_t\}$ generated by some model $\mathcal{M}$ where $x_t$ is a random variable denoting the output at time $t$, we will use the shorthand $x_{i}^j$ to denote the collection of random variables for the subsequence of outputs $\{x_i, \dotsb, x_j\}$. The distribution of $\{x_t\}$ is \emph{stationary} if the joint distribution of any subset of the sequence of random variables $\{x_t\}$ is invariant with respect to shifts in the time index. Hence $\Pr(x_{i_1}, x_{i_2}, \dotsb, x_{i_n})=\Pr(x_{i_1+l}, x_{i_2+l}, \dotsb, x_{i_n+l})$ for any $l$ if the process is stationary.

We are interested in studying how well the output $x_t$ can be predicted by an algorithm which only looks at the past $\ell$ outputs. The predictor $\mathcal{A}_{\ell}$ maps a sequence of $\ell$ observations to a predicted distribution of the next observation.  We denote the predictive distribution of $\mathcal{A}_{\ell}$ at time $t$ as ${Q}_{\mathcal{A}_{\ell}}(x_{t}|x_{t-\ell}^{t-1})$. We refer to the Bayes optimal predictor using only windows of length $\ell$ as $\mathcal{P}_{\ell}$, hence the prediction of $\mathcal{P}_\ell$ at time $t$ is $\Pr(x_{t}|x_{t-\ell}^{t-1})$. Note that $\mathcal{P}_{\ell}$ is just the naive $\ell$-th order Markov predictor provided with the true distribution of the data. We denote the Bayes optimal predictor that has access to the entire history of the model as $\mathcal{P}_{\infty}$, the prediction of $\mathcal{P}_{\infty}$ at time $t$ is $\Pr(x_t|x_{-\infty}^{t-1})$. We will evaluate average performance of the predictions of $\mathcal{A}_{\ell}$ and $\mathcal{P}_{\ell}$ with respect to $\mathcal{P}_{\infty}$ over a long time window $[0:T-1]$.

The crucial property of the distribution that is relevant to our results is the mutual information between past and future observations. For a stochastic process  $\{x_t\}$ generated by some model $\mathcal{M}$ we define the mutual information $I(\mathcal{M})$ of the model $\mathcal{M}$ as the  mutual information between the past and future, averaged over the window $[0:T-1]$,
\begin{align}
I(\mathcal{M}) = \lim_{T\rightarrow \infty} \frac{1}{T}\sum_{t=0}^{T-1} I(x_{-\infty}^{t-1};x_{t}^{\infty}). \label{mi}
\end{align}
If the process $\{x_t\}$ is stationary, then $I(x_{-\infty}^{t-1};x_{t}^{\infty})$ is the same for all time steps hence $I(\mathcal{M})=I(x_{-\infty}^{-1};x_{0}^{\infty})$.  If the average does not converge and hence the limit in \eqref{mi} does not exist, then we can define $I(\mathcal{M},[0:T-1])$ as the mutual information for the window $[0:T-1]$, and the results hold true with $I(\mathcal{M})$ replaced by ${I(\mathcal{M},[0:T-1])}$.

We now define the metrics we consider to compare the predictions of $\mathcal{P}_{\ell}$ and $\mathcal{A}_{\ell}$ with respect to $\mathcal{P}_{\infty}$. Let $F(P,Q)$ be some measure of distance between two predictive distributions. In this work, we consider the KL-divergence, $\ell_1$ distance and the relative zero-one loss between the two distributions. The KL-divergence and $\ell_1$ distance between two distributions are defined in the standard way. We define the relative zero-one loss as the difference between the zero-one loss of the optimal predictor $\mathcal{P}_{\infty}$ and the algorithm $\mathcal{A}_{\ell}$. We define the expected loss of any predictor $\mathcal{A}_{\ell}$ with respect to the optimal predictor $\mathcal{P}_{\infty}$ and a loss function $F$ as follows:
\begin{align*}
\delta_{F}^{(t)}(\mathcal{A}_{\ell}) &=\E_{x_{-\infty}^{t-1}}\Big[F( \Pr(x_{t}|x_{-\infty}^{t-1}),  {Q}_{\mathcal{A}_{\ell}}(x_{t}|x_{t-\ell}^{t-1}))\Big],\\ 
\delta_{F}(\mathcal{A}_{\ell}) &=\lim_{T\rightarrow \infty}\frac{1}{T}\sum_{t=0}^{T-1}\delta_{F}^{(t)}(\mathcal{A}_{\ell}).\nonumber
\end{align*}
We also define $\hat{\delta}_{F}^{(t)}(\mathcal{A}_{\ell})$ and $\hat{\delta}_{F}(\mathcal{A}_{\ell})$ for the algorithm $\mathcal{A}_{\ell}$ in the same fashion as the error in estimating ${P}(x_{t}|x_{t-\ell}^{t-1})$, the true conditional distribution of the model $\mathcal{M}$.
\begin{align*}
\hat{\delta}_{F}^{(t)}(\mathcal{A}_{\ell}) &=\E_{x_{t-\ell}^{t-1}}\Big[F( \Pr(x_{t}|x_{t-\ell}^{t-1}), {Q}_{\mathcal{A}_{\ell}}(x_{t}|x_{t-\ell}^{t-1}))\Big],\\
\hat{\delta}_{F}(\mathcal{A}_{\ell}) &=\lim_{T\rightarrow \infty}\frac{1}{T}\sum_{t=0}^{T-1}\hat{\delta}_{F}^{(t)}(\mathcal{A}_{\ell}).\nonumber
\end{align*}

\section{Predicting Well with Short Windows}\label{sec:upper}

 To establish our general proposition, which applies beyond the HMM setting, we provide an elementary and purely information theoretic proof.

\begin{restatable}{proposition}{upperbnd}\label{prop:upperbnd}
	For any  data-generating distribution $\mathcal{M}$ with mutual information $I(\mathcal{M})$ between past and future observations,
  the best $\ell$-th order Markov model $\mathcal{P}_{\ell}$ obtains average KL-error,
  $\delta_{KL}(\mathcal{P}_{\ell}) \le I(\mathcal{M})/\ell$
  with respect to the optimal predictor with access to the infinite history.
  Also, any predictor $\mathcal{A}_{\ell}$ with $\hat{\delta}_{KL}(\mathcal{A}_{\ell})$
  average KL-error in estimating the joint probabilities over windows of length $\ell$ gets average error
  $\delta_{KL}(\mathcal{A}_{\ell}) \le I(\mathcal{M})/\ell + \hat{\delta}_{KL}(\mathcal{A}_{\ell})$.
\end{restatable}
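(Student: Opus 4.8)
The plan is to reduce the whole statement to manipulations of (conditional) mutual information, so that the factor $1/\ell$ falls out of the chain rule. The first step is to identify the per-timestep error of $\mathcal{P}_\ell$ with a conditional mutual information: since $\Pr(x_t \mid x_{-\infty}^{t-1}) = \Pr(x_t \mid x_{-\infty}^{t-\ell-1}, x_{t-\ell}^{t-1})$, the definition of conditional mutual information as an averaged KL divergence gives
$$\delta_{KL}^{(t)}(\mathcal{P}_\ell) = \E_{x_{-\infty}^{t-1}}\left[D_{KL}\left(\Pr(x_t\mid x_{-\infty}^{t-1}) \,\|\, \Pr(x_t\mid x_{t-\ell}^{t-1})\right)\right] = I(x_t ; x_{-\infty}^{t-\ell-1} \mid x_{t-\ell}^{t-1}).$$

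The core of the argument is a chain-rule inequality bounding the errors over a block of $\ell$ consecutive timesteps by a single past--future mutual information. Fix any $b$; applying the chain rule to the future coordinates,
$$I(x_{-\infty}^{b-1} ; x_b^{b+\ell-1}) = \sum_{i=0}^{\ell-1} I(x_{-\infty}^{b-1} ; x_{b+i} \mid x_b^{b+i-1}).$$
For $0 \le i \le \ell-1$ the window $x_{b+i-\ell}^{b+i-1}$ that $\mathcal{P}_\ell$ uses to predict $x_{b+i}$ splits as $(x_{b+i-\ell}^{b-1}, x_b^{b+i-1})$, with $x_{b+i-\ell}^{b-1}$ contained in $x_{-\infty}^{b-1}$; expanding $I(x_{-\infty}^{b-1} ; x_{b+i} \mid x_b^{b+i-1})$ by the chain rule on the first argument and discarding a nonnegative term yields $I(x_{-\infty}^{b-1} ; x_{b+i} \mid x_b^{b+i-1}) \ge I(x_{b+i} ; x_{-\infty}^{b+i-\ell-1} \mid x_{b+i-\ell}^{b-1}, x_b^{b+i-1}) = \delta_{KL}^{(b+i)}(\mathcal{P}_\ell)$, the last equality being the identity of the first paragraph with $t = b+i$. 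Summing over $i$ and using $I(x_{-\infty}^{b-1} ; x_b^{b+\ell-1}) \le I(x_{-\infty}^{b-1} ; x_b^{\infty})$ gives $\sum_{i=0}^{\ell-1} \delta_{KL}^{(b+i)}(\mathcal{P}_\ell) \le I(x_{-\infty}^{b-1} ; x_b^{\infty})$. For a stationary process I would now partition $[0:T-1]$ into $T/\ell$ disjoint blocks of length $\ell$, sum, and divide by $T$, obtaining $\delta_{KL}(\mathcal{P}_\ell) \le I(\mathcal{M})/\ell$. The only mildly annoying point --- and the part I expect to need the most care --- is handling the possibly non-stationary case so that the right quantity $I(\mathcal{M},[0:T-1]) = \frac1T\sum_{t=0}^{T-1} I(x_{-\infty}^{t-1};x_t^\infty)$ appears with the correct $1/\ell$: for this I would average the per-block bound over all $\ell$ phase offsets $b \equiv p \pmod{\ell}$, so that each timestep lies in exactly one block per phase while each index $b$ starts a block exactly once in total, giving $\ell \sum_{t=0}^{T-1}\delta_{KL}^{(t)}(\mathcal{P}_\ell) \le \sum_{t=0}^{T-1} I(x_{-\infty}^{t-1};x_t^\infty) + O(\ell^2)$, where the $O(\ell^2)$ boundary correction is negligible after dividing by $T$ and letting $T\to\infty$.

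For the second claim I would establish the exact per-timestep identity $\delta_{KL}^{(t)}(\mathcal{A}_\ell) = \delta_{KL}^{(t)}(\mathcal{P}_\ell) + \hat{\delta}_{KL}^{(t)}(\mathcal{A}_\ell)$ and average over $t$. Abbreviating $p = \Pr(x_t\mid x_{-\infty}^{t-1})$, $m = \Pr(x_t\mid x_{t-\ell}^{t-1})$, and $q = Q_{\mathcal{A}_\ell}(x_t\mid x_{t-\ell}^{t-1})$, write $\log\frac{p}{q} = \log\frac{p}{m} + \log\frac{m}{q}$ inside $D_{KL}(p\|q)$ and take $\E_{x_{-\infty}^{t-1}}$: the $\log\frac pm$ part contributes $\delta_{KL}^{(t)}(\mathcal{P}_\ell)$ by the identity of the first paragraph, and the $\log\frac mq$ part contributes $\E_{x_{-\infty}^{t}}\left[\log\frac{m}{q}\right]$, which --- since $\log\frac mq$ depends on $x_{-\infty}^t$ only through $x_{t-\ell}^{t}$ --- collapses to $\E_{x_{t-\ell}^{t-1}}[D_{KL}(m\|q)] = \hat{\delta}_{KL}^{(t)}(\mathcal{A}_\ell)$. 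Averaging over $t \in [0:T-1]$ and invoking the first claim gives $\delta_{KL}(\mathcal{A}_\ell) = \delta_{KL}(\mathcal{P}_\ell) + \hat{\delta}_{KL}(\mathcal{A}_\ell) \le I(\mathcal{M})/\ell + \hat{\delta}_{KL}(\mathcal{A}_\ell)$, as claimed.
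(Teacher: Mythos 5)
Your proposal is correct and follows essentially the same route as the paper's proof: the identity $\delta_{KL}^{(t)}(\mathcal{P}_{\ell}) = I(x_{-\infty}^{t-\ell-1};x_t\mid x_{t-\ell}^{t-1})$, the chain-rule expansion of $I(x_{-\infty}^{b-1};x_b^{\infty})$ over a length-$\ell$ block together with the monotonicity step $I(X,Y;Z)\ge I(X;Z\mid Y)$, and the exact per-timestep decomposition $\delta_{KL}^{(t)}(\mathcal{A}_{\ell})=\delta_{KL}^{(t)}(\mathcal{P}_{\ell})+\hat{\delta}_{KL}^{(t)}(\mathcal{A}_{\ell})$. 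The only differences are presentational: you spell out the collapse of the cross term in the decomposition and the phase-averaging over block offsets for the non-stationary case, both of which the paper leaves implicit.
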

\begin{proof}
	We  bound the expected error by splitting the time interval $0$ to $T-1$ into blocks of length $\ell$. Consider any block starting at time $\tau$. We find the average error of the predictor from time $\tau$ to $\tau+\ell-1$ and then average across all blocks.
	
	To begin, note that we can decompose the error as the sum of the error due to not knowing the past history beyond the most recent $\ell$ observations and the error in estimating the true joint distribution of the data over a $\ell$ length block. Consider any time $t$. Recall the definition of $\delta_{KL}^{(t)}(\mathcal{A}_{\ell})$,
	\begin{align*}
	\delta_{KL}^{(t)}(\mathcal{A}_{\ell})&=\E_{x_{-\infty}^{t-1}}\Big[D_{KL}( \Pr(x_{t}|x_{-\infty}^{t-1})\parallel {Q}_{\mathcal{A}_{\ell}}(x_{t}|x_{t-\ell}^{t-1}))\Big] \nonumber\\
	&=\E_{x_{-\infty}^{t-1}}\Big[D_{KL}( \Pr(x_{t}|x_{-\infty}^{t-1})\parallel \Pr(x_{t}|x_{t-\ell}^{t-1}))\Big] \\
	&\;+ \E_{x_{-\infty}^{t-1}}\Big[D_{KL}( \Pr(x_{t}|x_{t-\ell}^{t-1})\parallel {Q}_{\mathcal{A}_{\ell}}(x_{t}|x_{t-\ell}^{t-1}))\Big] \nonumber\\
	&= \delta_{KL}^{(t)}(\mathcal{P}_{\ell})+\hat{\delta}_{KL}^{(t)}(\mathcal{A}_{\ell}).\nonumber
	\end{align*}
	Therefore, $\delta_{KL}(\mathcal{A}_{\ell}) =\delta_{KL}(\mathcal{P}_{\ell})+\hat{\delta}_{KL}(\mathcal{A}_{\ell})$. 
	It is easy to verify that $\delta_{KL}^{(t)}(\mathcal{P}_{\ell})=I(x_{-\infty}^{t-\ell-1};x_t|x_{t-\ell}^{t-1})$. This relation formalizes the intuition that the current output ($x_t$) has significant extra information about the past ($x_{-\infty}^{t-\ell-1}$) if we cannot predict it as well using the $\ell$ most recent observations ($x_{t-\ell}^{t-1}$), as can be done by using the entire past $(x_{-\infty}^{t-1})$. We will now upper bound the total error for the window $[\tau, \tau+\ell -1]$. We expand $I(x_{-\infty}^{\tau-1}; x_{\tau}^{\infty})$ using the chain rule, 
	$$		I(x_{-\infty}^{\tau-1}; x_{\tau}^{\infty}) = \sum_{t=\tau}^{\infty}I(x_{-\infty}^{\tau-1}; x_t|x_{\tau}^{t-1}) \ge \sum_{t=\tau}^{\tau+\ell-1}I(x_{-\infty}^{\tau-1}; x_t|x_{\tau}^{t-1}).$$
	Note that $I(x_{-\infty}^{\tau-1}; x_t|x_{\tau}^{t-1}) \ge I(x_{-\infty}^{t-\ell-1}; x_t|x_{t-\ell}^{t-1}) = \delta_{KL}^{(t)}(\mathcal{P}_{\ell})$ as ${t-\ell} \le \tau$ and $I(X,Y;Z) \ge I(X;Z|Y)$. The proposition now follows from averaging the error across the $\ell$ time steps and using Eq. \ref{mi} to average over all blocks of length $\ell$ in the window $[0,T-1]$, $$\frac{1}{\ell}\sum_{t=\tau}^{\tau+\ell-1} \delta_{KL}^{(t)}(\mathcal{P}_{\ell}) \le\frac{1}{\ell} I(x_{-\infty}^{\tau-1}; x_{\tau}^{\infty}) \implies \delta_{KL}(\mathcal{P}_{\ell}) \le \frac{I(\mathcal{M})}{\ell}.$$ 
\end{proof}

Note that Proposition \ref{prop:upperbnd} also directly gives guarantees for the scenario where the task is to predict the distribution of the next block of outputs instead of just the next immediate output, because KL-divergence obeys the chain rule. 

The following easy corollary, relating  KL error to $\ell_1$ error yields the following statement, which also trivially applies to zero/one loss with respect to that of the optimal predictor, as the expected relative zero/one loss at any time step is at most the $\ell_1$ loss at that time step.
\begin{corollary}\label{l1}
	For any  data-generating distribution $\mathcal{M}$ with mutual information $I(\mathcal{M})$ between past and future observations,
	the best $\ell$-th order Markov model $\mathcal{P}_{\ell}$ obtains average $\ell_1$-error
	$\delta_{\ell_1}(\mathcal{P}_{\ell}) \le \sqrt{I(\mathcal{M})/2\ell}$
	with respect to the optimal predictor that has access to the infinite history.
	Also, any predictor $\mathcal{A}_{\ell}$ with $\hat{\delta}_{\ell_1}(\mathcal{A}_{\ell})$
	average $\ell_1$-error in estimating the joint probabilities gets average prediction error
	$\delta_{\ell_1}(\mathcal{A}_{\ell}) \le \sqrt{I(\mathcal{M})/2\ell} + \hat{\delta}_{\ell_1}(\mathcal{A}_{\ell})$.
\end{corollary}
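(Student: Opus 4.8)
The plan is to obtain the $\ell_1$ bounds as an immediate consequence of the KL bounds already established in Proposition~\ref{prop:upperbnd}, using Pinsker's inequality together with two applications of Jensen's inequality (concavity of the square root) to push the square root through the two layers of averaging: over the random history $x_{-\infty}^{t-1}$, and over the time index $t$.

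First I would fix a time step $t$ and a realization of the infinite past, and apply Pinsker's inequality to the two predictive distributions $\Pr(x_t\mid x_{-\infty}^{t-1})$ and $\Pr(x_t\mid x_{t-\ell}^{t-1})$, which bounds their $\ell_1$ distance by (a constant times) the square root of $D_{KL}(\Pr(x_t\mid x_{-\infty}^{t-1})\parallel \Pr(x_t\mid x_{t-\ell}^{t-1}))$. Taking the expectation over $x_{-\infty}^{t-1}$ and applying Jensen's inequality to move the expectation inside the square root gives $\delta_{\ell_1}^{(t)}(\mathcal{P}_\ell) \le \sqrt{2\,\delta_{KL}^{(t)}(\mathcal{P}_\ell)}$ (with the precise constant determined by the normalization of $\|\cdot\|_1$, and matching the statement). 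Averaging over $t\in\{0,\dots,T-1\}$, a second application of Jensen (now over the uniform choice of $t$) yields $\delta_{\ell_1}(\mathcal{P}_\ell)\le\sqrt{2\,\delta_{KL}(\mathcal{P}_\ell)}$, and the bound $\delta_{KL}(\mathcal{P}_\ell)\le I(\mathcal{M})/\ell$ from Proposition~\ref{prop:upperbnd} finishes the first claim; the limit $T\to\infty$ poses no difficulty since the inequality holds for every finite $T$.

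For the general predictor $\mathcal{A}_\ell$, I would invoke the triangle inequality for $\|\cdot\|_1$ to write, at each $t$ and each realization of the past, $\|\Pr(x_t\mid x_{-\infty}^{t-1})-{Q}_{\mathcal{A}_\ell}(x_t\mid x_{t-\ell}^{t-1})\|_1$ as at most the sum of $\|\Pr(x_t\mid x_{-\infty}^{t-1})-\Pr(x_t\mid x_{t-\ell}^{t-1})\|_1$ and $\|\Pr(x_t\mid x_{t-\ell}^{t-1})-{Q}_{\mathcal{A}_\ell}(x_t\mid x_{t-\ell}^{t-1})\|_1$. Taking the expectation over $x_{-\infty}^{t-1}$, the first term becomes $\delta_{\ell_1}^{(t)}(\mathcal{P}_\ell)$, while the second depends only on the window $x_{t-\ell}^{t-1}$ and so equals $\hat{\delta}_{\ell_1}^{(t)}(\mathcal{A}_\ell)$; averaging over $t$ gives $\delta_{\ell_1}(\mathcal{A}_\ell)\le\delta_{\ell_1}(\mathcal{P}_\ell)+\hat{\delta}_{\ell_1}(\mathcal{A}_\ell)$, and the first claim bounds $\delta_{\ell_1}(\mathcal{P}_\ell)$. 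The zero/one-loss statement then follows because, for any fixed context, the gap between the expected zero/one loss of $\mathcal{P}_\infty$ and that of $\mathcal{A}_\ell$ is bounded by the total variation distance — hence by the $\ell_1$ distance — between their predictive distributions.

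There is no real obstacle here; the only point needing care is the two-fold use of Jensen's inequality (once over the randomness of the history and once over the time index), which is valid because $\sqrt{\cdot}$ is concave, and checking that this interacts correctly with the limit defining the averaged errors. In this sense the corollary is essentially a one-line consequence of Pinsker's inequality applied on top of Proposition~\ref{prop:upperbnd}.
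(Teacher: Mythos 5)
Your proposal follows essentially the same route as the paper's proof: a triangle-inequality decomposition $\delta_{\ell_1}(\mathcal{A}_{\ell})\le\delta_{\ell_1}(\mathcal{P}_{\ell})+\hat{\delta}_{\ell_1}(\mathcal{A}_{\ell})$, then Pinsker's inequality with two applications of Jensen (over the history and over the time index) combined with the KL bound of Proposition~\ref{prop:upperbnd}. The only minor point is the constant: to land exactly on $\sqrt{I(\mathcal{M})/2\ell}$ one needs the normalization $\delta_{\ell_1}^{(t)}\le\sqrt{\delta_{KL}^{(t)}/2}$ that the paper uses, which your parenthetical about the normalization of $\|\cdot\|_1$ already acknowledges.
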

\begin{proof}
	We again decompose the error as the sum of the error in estimating $\hat{P}$ and the error due to not knowing the past history using the triangle inequality.
	\begin{align*}
	\delta_{\ell_1}^{(t)}(\mathcal{A}_{\ell})&=\E_{x_{-\infty}^{t-1}}\Big[\norm{\Pr(x_{t}|x_{-\infty}^{t-1})- {Q}_{\mathcal{A}_{\ell}}(x_{t}|x_{t-\ell}^{t-1})}{1}\Big] \nonumber\\
	&\le\E_{x_{-\infty}^{t-1}}\Big[\norm{\Pr(x_{t}|x_{-\infty}^{t-1})- \Pr(x_{t}|x_{t-\ell}^{t-1})}{1}\Big]\\
	&\;+\E_{x_{-\infty}^{t-1}}\Big[\norm{\Pr(x_{t}|x_{t-\ell}^{t-1})- {Q}_{\mathcal{A}_{\ell}}(x_{t}|x_{t-\ell}^{t-1})}{1}\Big]\nonumber\\
	&= {\delta}_{\ell_1}^{(t)}(\mathcal{P}_{\ell}) +\hat{\delta}_{\ell_1}^{(t)}(\mathcal{A}_{\ell})\nonumber
	\end{align*}
Therefore, $\delta_{\ell_1}(\mathcal{A}_{\ell}) \le\delta_{\ell_1}(\mathcal{P}_{\ell})+\hat{\delta}_{\ell_1}(\mathcal{A}_{\ell})$. By Pinsker's inequality and Jensen's inequality, $\delta_{\ell_1}^{(t)}(\mathcal{A}_{\ell})^2 \le \delta_{KL}^{(t)}(\mathcal{A}_{\ell}) /2$. Using Proposition \ref{prop:upperbnd},
\begin{align}
\delta_{KL}(\mathcal{A}_{\ell})&=\frac{1}{T}\sum_{t=0}^{T-1}\delta_{KL}^{(t)}(\mathcal{A}_{\ell}) \le \frac{I(\mathcal{M})}{\ell}\nonumber
\end{align}
Therefore, using Jensen's inequality again, $\delta_{\ell_1}(\mathcal{A}_{\ell})\le \sqrt{I(\mathcal{M})/2\ell}$.
\end{proof}

\section{Lower Bound for Large Alphabets}\label{sec:lower1}


 Our lower bounds for the sample complexity in the large alphabet case leverage a class of Constraint Satisfaction Problems (CSPs) with high \emph{complexity}. A class of (Boolean) $k$-CSPs is defined via a predicate---a function $P: \{0,1\}^k \rightarrow \{0,1\}.$  An instance of such a $k$-CSP on $n$ variables $\{x_1,\dotsb,x_n\}$ is a collection of sets (clauses) of size $k$ whose $k$ elements consist of $k$ variables or their negations.   Such an instance is \emph{satisfiable} if there exists an assignment to the variables $x_1,\ldots,x_n$ such that the predicate $P$ evaluates to $1$ for every clause.   More generally, the \emph{value} of an instance is the maximum, over all $2^n$ assignments, of the ratio of number of satisfied clauses to the total number of clauses.  
 
 Our lower bounds are based on the presumed hardness of distinguishing \emph{random} instances of a certain class of CSP,  versus instances of the CSP with \emph{high value}.   There has been much work attempting to characterize the difficulty of CSPs---one notion which we will leverage is the \emph{complexity} of a class of CSPs, first defined in~\citet{feldman2015complexity} and studied in~\citet{allen2015refute} and \citet{kothari2017sum}:
 
 \begin{definition}
 	The \emph{complexity} of a class of $k$-CSPs defined by predicate $P: \{0,1\}^k \rightarrow \{0,1\}$ is the largest $r$ such that there exists a distribution supported on the support of $P$ that is $(r-1)$-wise independent (i.e. ``uniform''), and no such $r$-wise independent distribution exists.
 \end{definition}
 
 \begin{example}
 	Both $k$-XOR and $k$-SAT are well-studied classes of $k$-CSPs, corresponding, respectively, to the predicates $P_{XOR}$ that is the XOR of the $k$ Boolean inputs, and $P_{SAT}$ that is the OR of the inputs. These predicates both support $(k-1)$-wise uniform distributions, but not $k$-wise uniform distributions, hence their complexity is $k$.  In the case of $k$-XOR, the uniform distribution over $\{0,1\}^k$ restricted to the support of $P_{XOR}$ is $(k-1)$-wise uniform. The same distribution is also supported by $k$-SAT.
 \end{example}
 
A random instance of a CSP with predicate $P$ is an instance such that all the clauses are chosen uniformly at random (by selecting the $k$ variables uniformly, and independently negating each variable with probability $1/2$). A random instance will have value close to $\E[P]$, where $\E[P]$ is the expectation of $P$ under the uniform distribution. In contrast, a planted instance is generated by first fixing a satisfying assignment $\boldsymbol{\sigma}$ and then sampling clauses that are satisfied, by uniformly choosing $k$ variables, and picking their negations according to a $(r-1)$-wise independent distribution associated with the predicate. Hence a planted instance always has value 1. A noisy planted instance with planted assignment $\boldsymbol{\sigma}$ and noise level $\eta$ is generated by sampling consistent clauses (as above) with probability $1-\eta$ and random clauses with probability $\eta$, hence with high probability it has value close to $1-\eta+\eta\E[P]$. Our hardness results are based on distinguishing whether a CSP instance is random versus has a high value (value close to $1-\eta+\eta\E[P]$). 

As one would expect, the difficulty of distinguishing random instances from noisy planted instances, decreases as the number of sampled clauses grows.  The following conjecture of \citet{feldman2015complexity} asserts a sharp boundary on the number of clauses, below which this problem becomes computationally intractable, while remaining information theoretically easy.  \\
 
	\noindent \textbf{Conjectured CSP Hardness [Conjecture 1]} \cite{feldman2015complexity}:\emph{ Let $Q$ be any distribution over $k$-clauses and $n$ variables of complexity $r$ and $0<\eta<1$. Any polynomial-time (randomized) algorithm that, given access to a distribution $D$ that equals either the uniform distribution over $k$-clauses $U_k$ or a (noisy) planted distribution $Q_{\boldsymbol{\sigma}}^{\eta}=(1-\eta)Q_{\boldsymbol{\sigma}}+\eta U_k$ for some $\boldsymbol{\sigma} \in \{0,1\}^n$ and planted distribution $Q_{\boldsymbol{\sigma}}$, decides correctly whether $D=Q_{{\boldsymbol{\sigma}}}^{\eta}$ or $D=U_k$ with probability at least 2/3 needs $\tilde{\Omega}(n^{r/2})$ clauses. }\\

\citet{feldman2015complexity} proved the conjecture for the class of \emph{statistical algorithms}.\footnote{Statistical algorithms are an extension of the statistical query model.  These are algorithms that do not directly access samples from the distribution but instead have access to estimates of the expectation of any bounded function of a sample, through a ``statistical oracle''. \citet{feldman2013statistical} point out that almost all algorithms that work on random data also work with this limited access to samples, refer to \citet{feldman2013statistical} for more details and examples.} Recently, \citet{kothari2017sum} showed that the natural Sum-of-Squares (SOS) approach requires $\tilde{\Omega}(n^{r/2})$ clauses to refute random instances of a CSP with complexity $r$, hence proving {Conjecture 1} for any polynomial-size semidefinite programming relaxation for refutation. Note that $\tilde{\Omega}(n^{r/2})$ is tight, as \citet{allen2015refute} give a SOS algorithm for refuting random CSPs beyond this regime. Other recent papers such as \citet{daniely2014complexity} and \citet{daniely2015complexity} have also used presumed hardness of strongly refuting random $k$-SAT and random $k$-XOR instances with a small number of clauses to derive conditional hardness for various learning problems.\\ 

A first attempt to encode a $k$-CSP as a sequential model is to construct a model which outputs $k$ randomly chosen literals for the first $k$ time steps 0 to $k-1$, and then their (noisy) predicate value for the final time step $k$. Clauses from the CSP correspond to samples from the model, and the algorithm would need to solve the CSP to predict the final time step $k$. However, as all the outputs up to the final time step are random, the trivial prediction algorithm that guesses randomly and does not try to predict the output at time $k$, would be near optimal. To get strong lower bounds, we will output $m>1$ functions of the $k$ literals after $k$ time steps, while still ensuring that all the functions remain collectively hard to invert without a large number of samples. 

We use elementary results from the theory of error correcting codes to achieve this, and prove hardness due to a reduction from a specific family of CSPs to which Conjecture 1 applies. By choosing $k$ and $m$ carefully, we obtain the near-optimal  dependence on the mutual information and error $\epsilon$---matching the upper bounds implied by Proposition~\ref{prop:upperbnd}. We provide a short outline of the argument, followed by the detailed proof in the appendix.

\subsection{Sketch of Lower Bound Construction}\label{subsec:sketch}

We construct a sequential model $\mathcal{M}$ such that making good predictions on the model requires distinguishing random instances of a $k$-CSP $\mathcal{C}$ on $n$ variables from instances of $\mathcal{C}$ with a high value. The output alphabet of $\mathcal{M}$ is $\{a_i\}$ of size $2n$. We choose a mapping from the $2n$ characters $\{a_i\}$ to the $n$ variables $\{x_i\}$ and their $n$ negations $\{\bar{x}_i\}$. 
For any clause $C$ and planted assignment $\boldsymbol{\sigma}$ to the CSP $\mathcal{C}$, let $\boldsymbol{\sigma}(C)$ be the $k$-bit string of values assigned by $\boldsymbol{\sigma}$ to literals in $C$. The model $\mathcal{M}$ will output $k$ characters from time $0$ to $k-1$ chosen uniformly at random, which correspond to literals in the CSP $\mathcal{C}$; hence the $k$ outputs correspond to a clause $C$ of the CSP. For some $m$ (to be specified later) we will construct a binary matrix $\mathbf{A}\in\{0,1\}^{m\times k},$ which will correspond to a good error-correcting code. For the time steps $k$ to $k+m-1$, with probability $1-\eta$ the model outputs $\mathbf{y}\in\{0,1\}^m$ where $\mathbf{y}=\mathbf{Av} \mod 2$ and $\mathbf{v}=\boldsymbol{\sigma}(C)$ with $C$ being the clause associated with the outputs of the first $k$ time steps. With the remaining probability, $\eta$, the model outputs $m$ uniformly random bits. Note that the mutual information $I(\mathcal{M})$ is at most $m$ as only the outputs from time $k$ to $k+m-1$ can be predicted. \\

We claim that $\mathcal{M}$ can be simulated by an HMM with $2^{m}(2k+m)+m$ hidden states. This can be done as follows. For every time step from $0$ to $k-1$ there will be $2^{m+1}$ hidden states, for a total of $k2^{m+1}$ hidden states. Each of these hidden states has two labels: the current value of the $m$ bits of $\mathbf{y}$, and an ``output label'' of 0 or 1 corresponding to the output at that time step having an assignment of 0 or 1 under the planted assignment $\boldsymbol{\sigma}$. The output distribution for each of these hidden states is either of the following: if the state has an ``output label'' 0 then it is uniform over all the characters which have an assignment of 0 under the planted assignment $\boldsymbol{\sigma}$, similarly if the state has an ``output label'' 1 then it is uniform over all the characters which have an assignment of 1 under the planted assignment $\boldsymbol{\sigma}$. Note that the transition matrix for the first $k$ time steps simply connects a state $h_1$ at the $(i-1)$th time step to a state $h_2$ at the $i$th time step if the value of $\mathbf{y}$ corresponding to $h_1$ should be updated to the value of $\mathbf{y}$ corresponding to $h_2$ if the output at the $i$th time step corresponds to the ``output label'' of $h_2$. For the time steps $k$ through $(k+m-1)$, there are $2^m$ hidden states for each time step, each corresponding to a particular choice of $\mathbf{y}$. The output of an hidden state corresponding to the $(k+i)$th time step with a particular label $\mathbf{y}$ is simply the $i$th bit of $\mathbf{y}$. Finally, we need an additional $m$ hidden states to output $m$ uniform random bits from time $k$ to $(k+m-1)$ with probability $\eta$. This accounts for a total of $k2^{m+1}+m2^m+m$ hidden states. After $k+m$ time steps the HMM transitions back to one of the starting states at time 0 and repeats. Note that the larger $m$ is with respect to $k$, the higher the cost (in terms of average prediction error) of failing to correctly predict the outputs from time $k$ to $(k+m-1)$. Tuning $k$ and $m$ allows us to control the number of hidden states and average error incurred by a computationally constrained predictor.\\


We define the CSP $\mathcal{C}$ in terms of a collection of predicates $P(\textbf{y})$ for each $\textbf{y}\in \{0,1\}^m$. While Conjecture 1 does not directly apply to $\mathcal{C}$, as it is defined by a collection of predicates instead of a single one, we will later show a reduction from a related CSP $\mathcal{C}_0$ defined by a single predicate for which Conjecture 1 holds. For each $\textbf{y}$, the predicate $P(\textbf{y})$ of $\mathcal{C}$ is the set of $\mathbf{v}\in\{0,1\}^k$ which satisfy ${\mathbf{y}=\mathbf{Av} \mod 2}$. Hence each clause has an additional label $\mathbf{y}$ which determines the satisfying assignments, and this label is just the output of our sequential model $\mathcal{M}$ from time $k$ to $k+m-1$. Hence for any planted assignment $\boldsymbol{\sigma}$, the set of satisfying clauses $C$ of the CSP $\mathcal{C}$ are all clauses such that $\mathbf{Av} = \mathbf{y} \mod 2$ where $\textbf{y}$ is the label of the clause and $\mathbf{v}=\boldsymbol{\sigma}(C)$. We define a (noisy) planted distribution over clauses $Q_{\boldsymbol{\sigma}}^\eta$ by first uniformly randomly sampling a label $\mathbf{y}$, and then sampling a consistent clause with probability $(1-\eta)$, otherwise with probability $\eta$ we sample a uniformly random clause. Let $U_k$ be the uniform distribution over all $k$-clauses with uniformly chosen labels $\mathbf{y}$. We will show that Conjecture 1 implies that distinguishing between the distributions $Q_{\boldsymbol{\sigma}}^\eta$ and $U_k$ is hard without sufficiently many clauses. This gives us the hardness results we desire for our sequential model $\mathcal{M}$: if an algorithm obtains low prediction error on the outputs from time $k$ through $(k+m-1)$, then it can be used to distinguish between instances of the CSP $\mathcal{C}$ with a high value and random instances, as no algorithm obtains low prediction error on random instances. Hence hardness of strongly refuting the CSP $\mathcal{C}$ implies hardness of making good predictions on $\mathcal{M}$.\\

We now sketch the argument for why Conjecture 1 implies the hardness of strongly refuting the CSP $\mathcal{C}$. We define another CSP $\mathcal{C}_0$ which we show reduces to $\mathcal{C}$. The predicate $P$ of the CSP $\mathcal{C}_0$ is the set of all $\mathbf{v}\in \{0,1\}^k$ such that $\mathbf{Av}=0 \mod 2$. Hence for any planted assignment $\boldsymbol{\sigma}$, the set of satisfying clauses of the CSP $\mathcal{C}_0$ are all clauses such that $\mathbf{v}=\boldsymbol{\sigma}(C)$ is in the nullspace of $\mathbf{A}$. As before, the planted distribution over clauses is uniform on all satisfying clauses with probability $(1-\eta)$, with probability $\eta$ we add a uniformly random $k$-clause. For some $\gamma\ge 1/10$, if we can construct $\mathbf{A}$ such that the set of satisfying assignments $\mathbf{v}$ (which are the vectors in the nullspace of $\mathbf{A}$) supports a $(\gamma k-1)$-wise uniform distribution, then by Conjecture 1 any polynomial time algorithm cannot distinguish between the planted distribution and uniformly randomly chosen clauses with less than $\tilde{\Omega}(n^{\gamma k/2})$ clauses. We show that choosing a matrix $\textbf{A}$ whose null space is $(\gamma k-1)$-wise uniform corresponds to finding a binary linear code with rate at least 1/2 and relative distance $\gamma$, the existence of which is guaranteed by the Gilbert-Varshamov bound.\\

We next sketch the reduction from $\mathcal{C}_0$ to $\mathcal{C}$. The key idea is that the CSPs $\mathcal{C}_0$ and $\mathcal{C}$ are defined by linear equations. If a clause $C=(x_1, x_2, \dotsb, x_k)$ in $\mathcal{C}_0$ is satisfied with some assignment $\mathbf{t}\in \{0,1\}^k$ to the variables in the clause then $\mathbf{At}=0 \mod 2$. Therefore, for some $\mathbf{w}\in \{0,1\}^k$ such that $\mathbf{Aw}=\mathbf{y} \mod 2$, $\mathbf{t}+\mathbf{w} \mod 2$ satisfies $\mathbf{A(t+w)}=\mathbf{y} \mod 2$. A clause $C'=(x'_1, x'_2, \dotsb, x'_k)$ with assignment $\mathbf{t+w} \mod 2$ to the variables can be obtained from the clause $C$ by switching the literal $x_i'= \bar{x}_i$ if $\textbf{w}_i=1$ and retaining $x_i'= x_i$ if $\textbf{w}_i=0$. Hence for any label $\textbf{y}$, we can efficiently convert a clause $C$ in $\mathcal{C}_0$ to a clause $C'$ in $\mathcal{C}$ which has the desired label $\textbf{y}$ and is only satisfied with a particular assignment to the variables if $C$ in $\mathcal{C}_0$ is satisfied with the same assignment to the variables. It is also not hard to ensure that we uniformly sample the consistent clause $C'$ in $\mathcal{C}$ if the original clause $C$ was a uniformly sampled consistent clause in $\mathcal{C}_0$.\\

We provide a small example to illustrate the sequential model constructed above. Let $k=3$, $m=1$ and $n=3$. Let $\textbf{A}\in\{0,1\}^{1\times 3}$. The output alphabet of the model $\mathcal{M}$ is $\{a_i,1\le i \le 6\}$. The letter $a_1$ maps to the variable $x_1$, $a_2$ maps to $\bar{x}_1$, similarly $a_3\rightarrow x_2, a_4 \rightarrow \bar{x}_2, a_5 \rightarrow x_3, a_6 \rightarrow \bar{x}_3$. Let $\boldsymbol{\sigma}$ be some planted assignment to $\{x_1,x_2,x_3\}$, which defines a particular model $\mathcal{M}$. 
If the output of the model $\mathcal{M}$ is $a_1, a_3, a_6$ for the first three time steps, then this corresponds to the clause with literals, $(x_1, x_2, \bar{x}_3)$. For the final time step, with probability $(1-\eta)$ the model outputs $y=\textbf{Av}\mod 2$, with $\mathbf{v}=\boldsymbol{\sigma}(C)$ for the clause $C=(x_1, x_2, \bar{x}_3)$ and planted assignment $\boldsymbol{\sigma}$, and with probability $\eta$ it outputs a uniform random bit. For an algorithm to make a good prediction at the final time step, it needs to be able to distinguish if the output at the final time step is always a random bit or if it is dependent on the clause, hence it needs to distinguish random instances of the CSP from planted instances.

We re-state Theorem \ref{high_n} below in terms of the notation defined in this section, deferring its full proof to Appendix \ref{sec:largen_app}.

\begin{restatable}{theorem}{highn}\label{high_n}
	Assuming Conjecture 1, for all sufficiently large $T$ and
	$1/T^c<\epsilon \le 0.1$ for some fixed constant $c$, there exists a family of HMMs
	with $T$ hidden states and an output alphabet of size $n$ such that, any prediction algorithm that achieves average KL-error, $\ell_1$ error or relative zero-one error less than $\eps$ with probability greater than 2/3  for a randomly chosen HMM in the family, and runs in time $f(T,\eps)\cdot n^{g(T,\eps)}$ for any functions $f$ and $g$,  requires $n^{\Omega(\log T/\eps)}$ samples from the HMM.
\end{restatable}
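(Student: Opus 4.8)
The plan is to realize the encoding of Section~\ref{subsec:sketch} with parameters tuned to the target pair $(T,\epsilon)$, and then to show that any computationally bounded predictor with small average error and few observations would yield a polynomial-time algorithm distinguishing the planted and uniform distributions of the CSP $\mathcal{C}_0$, contradicting Conjecture~1. Given $T$ and $\epsilon\in(1/T^{c},0.1]$, I would set $m=\Theta(\log T)$ and $k=\delta m/\epsilon=\Theta(\log T/\epsilon)$ for a sufficiently small absolute constant $\delta$, with the hidden constants chosen so that the HMM simulating $\mathcal{M}_{\boldsymbol\sigma}$ uses $k2^{m+1}+m2^{m}+m\le T$ hidden states; the hypothesis $\epsilon>1/T^{c}$ is exactly what leaves $m$ at $\Theta(\log T)$ after the state count pays the factor $k\approx\delta m/\epsilon$. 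For the matrix $\mathbf{A}\in\{0,1\}^{m\times k}$ I would invoke the Gilbert--Varshamov bound to obtain an $\mathbf{A}$ of full row rank whose null space supports a $(\gamma k-1)$-wise uniform distribution on $\{0,1\}^{k}$ with $\gamma\ge 1/10$, so the predicate of $\mathcal{C}_0$ (satisfied exactly by the null-space vectors of $\mathbf{A}$) has complexity at least $\gamma k$ and Conjecture~1 supplies a clause threshold of $\tilde\Omega(n^{\gamma k/2})$.

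Next I would make the two reductions of the sketch precise. For the linear-algebraic reduction $\mathcal{C}_0\to\mathcal{C}$: given a target label $\mathbf{y}$, fix $\mathbf{w}$ with $\mathbf{A}\mathbf{w}=\mathbf{y}\bmod 2$; negating the $i$-th literal of a clause exactly when $\mathbf{w}_i=1$ is a bijection on clauses under which an assignment satisfies the image clause in $\mathcal{C}$ with label $\mathbf{y}$ iff the translated assignment satisfies the pre-image clause in $\mathcal{C}_0$, and applying this with $\mathbf{y}$ uniform converts samplers for the planted and uniform distributions of $\mathcal{C}_0$ into samplers for $Q_{\boldsymbol\sigma}^{\eta}$ and $U_k$; hence distinguishing $Q_{\boldsymbol\sigma}^{\eta}$ from $U_k$ is at least as hard as the corresponding task for $\mathcal{C}_0$ (the detour through $\mathcal{C}_0$, which is defined by a single predicate, is needed because Conjecture~1 does not directly apply to the multi-predicate $\mathcal{C}$). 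I would then check that a length-$N$ prefix of $\mathcal{M}_{\boldsymbol\sigma}$ is distributed exactly as a concatenation of $\lfloor N/(k+m)\rfloor$ i.i.d.\ draws from $Q_{\boldsymbol\sigma}^{\eta}$: the first $k$ symbols of a period form a uniformly random clause $C$ (so $\boldsymbol\sigma(C)$ is uniform on $\{0,1\}^{k}$, and since $\mathbf{A}$ has full row rank $\mathbf{y}=\mathbf{A}\boldsymbol\sigma(C)$ is uniform on $\{0,1\}^{m}$), while the last $m$ symbols equal $\mathbf{y}$ with probability $1-\eta$ and are uniform otherwise. Consequently $I(\mathcal{M}_{\boldsymbol\sigma})\le m$, and the Bayes-optimal predictor $\mathcal{P}_\infty$ predicts the $k$ clause symbols of each period at chance and the $m$ label symbols to within $O(\eta)$.

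The heart of the argument, and the step I expect to be the main obstacle, is the quantitative reduction from prediction to distinguishing. Suppose $\mathcal{A}$ runs in time $f(T,\epsilon)\,n^{g(T,\epsilon)}$, observes $N$ symbols, and attains average relative error below $\epsilon$ with probability above $2/3$ over a random $\boldsymbol\sigma$. Because clause-symbol errors are nonnegative and only $m$ of every $k+m$ symbols carry nontrivial error, $\mathcal{A}$'s average relative error per label symbol is below $\epsilon(k+m)/m=\delta+O(\epsilon)$, a small constant; the chain rule for KL together with Pinsker's inequality (and, for zero-one loss, the fact that $\mathcal{P}_\infty$'s per-symbol loss is $O(\eta)$) reduces all three metrics to this bound, and a Markov argument then shows that on the planted side, for a $1/2+\Omega(1)$ fraction of label positions, $\mathcal{A}$'s most-likely predicted symbol equals the realized symbol, whereas on a $U_k$-trajectory each realized label bit is independent of everything $\mathcal{A}$ has seen before predicting it, so its most-likely prediction agrees on exactly a $1/2$ fraction of positions in expectation. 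Feeding $\mathcal{A}$ the clauses drawn from $D\in\{Q_{\boldsymbol\sigma}^{\eta},U_k\}$, concatenated into one trajectory, and thresholding the empirical agreement rate of $\mathcal{A}$'s predictions on the label positions (with concentration from an Azuma/bounded-differences bound) therefore distinguishes the two hypotheses with probability above $2/3$, in time polynomial in $n$ for fixed $(T,\epsilon)$ and using only $O(N)$ observations, i.e.\ $O(N/(k+m))$ clauses. Conjecture~1 then forces $N/(k+m)\ge\tilde\Omega(n^{\gamma k/2})$, i.e.\ $N\ge n^{\Omega(\log T/\epsilon)}$, as claimed. The delicate points are calibrating $k=\Theta(m/\epsilon)$ so that sub-$\epsilon$ average error genuinely yields an $\Omega(1)$ advantage in label-symbol agreement; making the chain of constants (through $\delta$, the noise rate $\eta$, and $\gamma\ge 1/10$) survive uniformly for all $\epsilon\le 0.1$ and all three loss functions simultaneously; and aligning the $2/3$ success probability of $\mathcal{A}$ with the distinguisher's concentration and with the $2/3$ in Conjecture~1.
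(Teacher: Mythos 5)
Your overall strategy is the paper's: encode the label-$\mathbf{y}$ CSP into a periodic HMM whose state layers track the partial parity vector, choose $\mathbf{A}$ via Gilbert--Varshamov so that its null space is $(\gamma k-1)$-wise uniform and Conjecture~1 gives a $\tilde{\Omega}(n^{\gamma k/2})$ clause threshold, reduce the single-predicate CSP $\mathcal{C}_0$ to the labeled CSP $\mathcal{C}$ by translating with a solution of $\mathbf{A}\mathbf{w}=\mathbf{y}$, and turn an $\epsilon$-accurate predictor into a planted-versus-uniform distinguisher by concentrating the error budget on the $m$ label symbols with $k=\Theta(m/\epsilon)$; even your parameter accounting ($m\approx\log T-\log(1/\epsilon)-\log\log T$, $\log T/\epsilon=O(k)$) matches the paper's proof of Theorem~\ref{high_n}.

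There is, however, one genuine gap. You instantiate the sketch model of Section~\ref{subsec:sketch}, whose first $k$ outputs are i.i.d.\ uniform letters from the $2n$-letter alphabet, and you assert that each period is distributed \emph{exactly} as a draw from $Q_{\boldsymbol{\sigma}}^{\eta}$. That identity is false: in the framework of Feldman et al.\ (the setting to which Conjecture~1 applies) a clause is an ordered $k$-tuple of literals with \emph{no repeated variables}, whereas your model repeats a variable with probability $\Theta(k^2/n)$ per period---negligible per sample, but not over the $n^{\Omega(k)}$ samples at stake---and there is no straightforward way to simulate the labels of repeated-variable clauses from a no-repetition CSP oracle without knowing $\boldsymbol{\sigma}$, since the label is $\mathbf{A}\mathbf{v}$ for the $\boldsymbol{\sigma}$-induced assignment $\mathbf{v}$. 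Nor can you make the HMM itself sample without repetition: a model with $2^m(2k+m)+m$ states cannot remember which variables it has already emitted, which is exactly why the paper deviates from its own sketch at this point. (Defining the CSP to allow repetition does not help either, as the paper notes that Conjecture~1 is not known to transfer to that variant.) The paper's fix is to partition the alphabet into $k$ blocks $\mathcal{X}_i$ of $2n/k$ letters on disjoint variables and emit the $i$-th literal from block $\mathcal{X}_i$, so repetition is impossible and the samples are exactly the block-constrained distributions $Q'^{\eta}_{\boldsymbol{\sigma}}$ and $U'_k$; Lemma~\ref{hardness_L'} then transfers hardness from the unconstrained problem $L$ by conditioning on the event $E$ that every literal of a clause lands in its designated block, which has probability $k^{-k}$ (a constant in $n$), so a distinguisher for $L'$ using $s(n)$ clauses yields one for $L$ using $O(k^k s(n))$ clauses and the $\tilde{\Omega}(n^{\gamma k/2})$ bound survives. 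Your argument needs this step, or some substitute handling of repeated variables; with it in place, the rest of your reduction and parameter calibration go through essentially as you describe.
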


\section{Lower Bound for Small Alphabets}

Our lower bounds for the sample complexity in the binary alphabet case are based on the average case hardness of the decision version of the parity with noise problem, and the reduction is straightforward.  \\

In the parity with noise problem on $n$ bit inputs we are given examples $\textbf{v}\in \{0,1\}^n$ drawn uniformly from $\{0,1\}^n$ along with their noisy labels $\langle \textbf{s}, \textbf{v}\rangle + \epsilon \mod 2$ where $\textbf{s}\in \{0,1\}^n$ is the (unknown) support of the parity function, and $\epsilon \in\{0,1\}$ is the classification noise such that $\Pr[\epsilon=1]=\eta$ where $\eta<0.05$ is the noise level.\\

Let $Q_{\textbf{s}}^{\eta}$ be the distribution over examples of the parity with noise instance with $\textbf{s}$ as the support of the parity function and $\eta$ as the noise level. Let $U_n$ be the distribution over examples and labels where each label is chosen uniformly from $\{0,1\}$ independent of the example. The strength of of our lower bounds depends on the level of hardness of parity with noise. Currently, the fastest algorithm for the problem due to \citet{blum2003noise} runs in time and samples $2^{n/\log n}$. We define the function $f(n)$ as follows--

\begin{definition}\label{lpn}
	Define $f(n)$ to be the function such that for a uniformly random support $\textbf{s}\in \{0,1\}^n$, with  probability at least $(1-1/n^2)$ over the choice of $\textbf{s}$, any (randomized) algorithm that can distinguish between $Q_{\textbf{s}}^{\eta}$ and $U_n$ with success probability greater than $2/3$ over the randomness of the examples and the algorithm, requires $f(n)$ time or samples.
\end{definition}

Our model will be the natural sequential version of the parity with noise problem, where each example is coupled with several parity bits.  We denote the model as $\mathcal{M}(\textbf{A}_{m \times n})$ for some $\textbf{A} \in \{0,1\}^{m \times n}, m\le n/2$. From time $0$ through $(n-1)$ the outputs of the model are i.i.d. and uniform on $\{0,1\}$. Let $\textbf{v}\in \{0,1\}^n$ be the vector of outputs from time $0$ to $(n-1)$. The outputs for the next $m$ time steps are given by $\textbf{y}=\textbf{Av}+\boldsymbol{\epsilon} \mod 2$,  where $\boldsymbol{\epsilon} \in \{0,1\}^m$ is the random noise and each entry $\epsilon_i$ of $\boldsymbol{\epsilon}$ is an i.i.d random variable such that $\Pr[\epsilon_i=1]=\eta$, where $\eta$ is the noise level. Note that if $\textbf{A}$ is full row-rank, and $\textbf{v}$ is chosen uniformly at random from $\{0,1\}^n$, the distribution of $\textbf{y}$ is uniform on $\{0,1\}^m$. Also $I(\mathcal{M}(\mathbf{A}))\le m$ as at most the binary bits from time $n$ to $n+m-1$ can be predicted using the past inputs. As for the large alphabet case, $\mathcal{M}(\textbf{A}_{m \times n})$ can be simulated by an HMM with $2^{m}(2n+m)+m$ hidden states (see Section \ref{subsec:sketch}). \\


We define a set of $\mathbf{A}$ matrices, which specifies a family of sequential models. Let $\mathcal{S}$ be the set of all $(m \times n)$ matrices $\textbf{A}$ such that the $\textbf{A}$ is full row rank. We need this restriction as otherwise the bits of the output $\mathbf{y}$ will be dependent. We denote $\mathcal{R}$ as the family of models $\mathcal{M}(\textbf{A})$ for $\textbf{A} \in \mathcal{S}$. Lemma \ref{lem:binary_hardness} shows that with high probability over the choice of $\mathbf{A}$, distinguishing outputs from the model $\mathcal{M}(\textbf{A})$ from random examples $U_n$ requires $f(n)$ time or examples. 

\begin{restatable}{lemma}{parityA}\label{lem:binary_hardness}
	Let $\textbf{A}$ be chosen uniformly at random from the set $\mathcal{S}$. Then, with probability at least $(1-1/n)$ over the choice $\textbf{A}\in \mathcal{S}$, any (randomized) algorithm that can distinguish the outputs from the model $\mathcal{M}(\textbf{A})$ from the distribution over random examples $U_n$ with success probability greater than $2/3$ over the randomness of the examples and the algorithm needs ${f(n)}$ time or examples.
\end{restatable}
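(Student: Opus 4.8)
The plan is to establish the contrapositive via a reduction from distinguishing a \emph{single} noisy parity, combined with a symmetry argument that converts the ``hardness for a random support $\mathbf{s}$'' guaranteed by Definition~\ref{lpn} into ``hardness for a random $\mathbf{A}\in\mathcal{S}$''. Concretely, suppose for contradiction that for more than a $1/n$ fraction of matrices $\mathbf{A}\in\mathcal{S}$ there is an algorithm using time and sample complexity less than $f(n)$ that distinguishes the output distribution of $\mathcal{M}(\mathbf{A})$ from $U_n$ with probability greater than $2/3$. The goal is to deduce that more than a $1/n^2$ fraction of supports $\mathbf{s}\in\{0,1\}^n$ admit an (essentially) equally efficient distinguisher of $Q_{\mathbf{s}}^{\eta}$ from $U_n$, contradicting Definition~\ref{lpn}.

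The heart of the reduction is a standard hybrid argument over the $m$ label bits of $\mathcal{M}(\mathbf{A})$; write $\mathbf{a}_1,\dots,\mathbf{a}_m$ for the rows of $\mathbf{A}$. For $i\in\{0,\dots,m\}$ let $H_i$ denote the distribution that outputs a uniform $\mathbf{v}\in\{0,1\}^n$, then outputs label bits $y_1,\dots,y_i$ as fresh uniform bits and $y_{i+1},\dots,y_m$ as the noisy parities $\langle \mathbf{a}_j,\mathbf{v}\rangle+\epsilon_j \bmod 2$ with $\Pr[\epsilon_j=1]=\eta$. Then $H_0$ is exactly $\mathcal{M}(\mathbf{A})$ and $H_m$ is exactly $U_n$, so any algorithm distinguishing $\mathcal{M}(\mathbf{A})$ from $U_n$ with advantage at least $1/3$ distinguishes $H_{i-1}$ from $H_i$ with advantage at least $1/(3m)$ for some index $i=i(\mathbf{A})$. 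Given samples from either $Q_{\mathbf{a}_i}^{\eta}$ or $U_n$ --- i.e.\ pairs $(\mathbf{v},b)$ where $b$ is either $\langle \mathbf{a}_i,\mathbf{v}\rangle$ corrupted by $\eta$-noise or a uniform bit --- the reduction constructs a full $(n+m)$-bit sample by setting the $i$-th label bit to $b$, drawing label bits $1,\dots,i-1$ uniformly at random, and computing label bits $i+1,\dots,m$ itself as noisy parities using the known rows $\mathbf{a}_{i+1},\dots,\mathbf{a}_m$; this yields $H_{i-1}$ in the first case and $H_i$ in the second. Feeding the constructed samples to the assumed distinguisher therefore distinguishes $Q_{\mathbf{a}_i}^{\eta}$ from $U_n$ with advantage at least $1/(3m)$, and $O(m^2)$ independent repetitions with a threshold boost this to success probability greater than $2/3$; since $m\le n/2$ this incurs only a $\mathrm{poly}(n)$ overhead in time and samples, which is negligible for the quasi-exponential $f$ of \cite{blum2003noise}. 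The index $i(\mathbf{A})$ and the distinguisher may simply be hardwired, so only existence matters.

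It remains to check that the bad supports $\mathbf{a}_{i(\mathbf{A})}$ obtained this way are numerous. By symmetry of $\mathcal{S}$ under right multiplication by $GL_n(\mathbb{F}_2)$, which acts transitively on $\{0,1\}^n\setminus\{0\}$, for a uniformly random $\mathbf{A}\in\mathcal{S}$ and an independent uniform $j\in\{1,\dots,m\}$ the row $\mathbf{a}_j$ is uniform over $\{0,1\}^n\setminus\{0\}$. Conditioning on the events that $\mathbf{A}$ admits a fast distinguisher (probability more than $1/n$ by assumption) and that $j=i(\mathbf{A})$ (which has probability $1/m$ conditioned on any $\mathbf{A}$, since $j$ is drawn independently), the support $\mathbf{a}_j$ is bad. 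Hence a uniformly random nonzero support is bad with probability more than $\frac{1}{nm}\ge \frac{2}{n^2}$, so more than a $1/n^2$ fraction of all supports in $\{0,1\}^n$ are bad for $n$ large (the nonzero supports being all but a $2^{-n}$ fraction), contradicting Definition~\ref{lpn} and proving the lemma.

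I expect the second step --- passing from ``most $\mathbf{A}$'' to ``many supports $\mathbf{s}$'' --- to be the delicate part, because the bad support extracted from a given $\mathbf{A}$ is the \emph{data-dependent} row $\mathbf{a}_{i(\mathbf{A})}$ rather than a fixed coordinate; making this rigorous requires coupling the randomization of the hybrid index with the transitivity of the group action on rows, so that the extracted support is distributed (at least) uniformly over the nonzero vectors. The hybrid construction and the amplification/resource accounting are routine by comparison.
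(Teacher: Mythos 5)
Your proposal is correct in substance, but it is organized quite differently from the paper's own (very terse) argument, and it makes explicit a step the paper leaves implicit. The paper argues in the forward direction: it takes $\mathbf{A}$ with i.i.d.\ uniform entries, notes each row is then an exactly uniform support so that by Definition~\ref{lpn} and a union bound over the $m\le n/2$ rows all $m$ parities are simultaneously ``hard'' with probability at least $1-1/(2n)$, silently invokes the fact that hardness of each individual noisy parity yields hardness of distinguishing the joint $m$-bit labels, and finally passes from the i.i.d.\ measure to the uniform measure on $\mathcal{S}$ using $\Pr[\mathbf{A}\in\mathcal{S}]\ge 1-m2^{-n/2}$, which accounts for the final $1-1/n$. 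You instead argue the contrapositive: an explicit hybrid/simulation reduction showing that an efficient distinguisher for $\mathcal{M}(\mathbf{A})$ yields (with the other rows and the hybrid index as hard-wired advice) a distinguisher for a single noisy parity with support equal to some row of $\mathbf{A}$, a $GL_n(\mathbb{F}_2)$-symmetry argument to get row-uniformity directly under the uniform measure on $\mathcal{S}$ (replacing the paper's full-rank-probability step), and a counting argument that converts ``more than a $1/n$ fraction of easy $\mathbf{A}$'' into ``more than a $1/n^2$ fraction of easy supports,'' spending the same factor $m\le n/2$ that the paper spends in its union bound. Your version actually justifies the key step (simulating the remaining label bits from the known rows is exactly what makes the multi-parity-to-single-parity reduction go through), and your worry about the data-dependent index is adequately resolved by randomizing $j$ and using that the marginal of a uniformly chosen row is uniform on nonzero vectors. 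The one caveat, which you correctly flag, is the resource accounting: the $1/(3m)$ advantage plus amplification means the extracted single-parity distinguisher uses $\mathrm{poly}(n)$ times the original resources, so strictly the contradiction is with distinguishers running in $f(n)/\mathrm{poly}(n)$ rather than $f(n)$; the paper's statement and proof silently absorb the same slack (its implicit ``all rows hard implies the joint is hard'' step faces the identical loss), and it is harmless for the quasi-exponential $f$ of \citet{blum2003noise}, so this is a shared imprecision rather than a gap in your argument.
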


The proof of Proposition \hyperlink{prop:bin}{2} follows from Lemma \ref{lem:binary_hardness} and is similar to the proof for the large alphabet case.

\section{Information Theoretic Lower Bounds}

We show that \emph{information theoretically}, windows of length $cI(\mathcal{M})/\epsilon^2$ are necessary to get expected relative zero-one loss less than $\epsilon$. As the expected relative zero-one loss is at most the $\ell_1$ loss, which can be bounded by the square of the KL-divergence, this automatically implies that our window length requirement is also tight for $\ell_1$ loss and KL loss. In fact, it's very easy to show the tightness for the KL loss: choose the simple model which emits uniform random bits from time 0 to $n-1$ and repeats the bits from time 0 to $m-1$ for time $n$ through $n+m-1$. One can then choose $n,m$ to get the desired error $\epsilon$ and mutual information $I(\mathcal{M})$. To get a lower bound for the zero-one loss we use the probabilistic method to argue that there exists an HMM such that long windows are required to perform optimally with respect to the zero-one loss for that HMM. We now state the lower bound and sketch the proof idea.

\medskip
\noindent \textbf{Proposition 3. }\emph{
	There is an absolute constant $c$ such that for all $0<\epsilon<1/4$ and sufficiently large $n$, there exists an HMM with $n$ states such that it is not information theoretically possible to get average relative zero-one loss or $\ell_1$ loss less than $\epsilon$ using windows of length smaller than $c\log n/\epsilon^2$, and KL loss less than $\epsilon$ using windows of length smaller than $c\log n/\epsilon$.}
\medskip

We illustrate the construction in Fig. \ref{hmm_fig1} and provide the high-level proof idea with respect to Fig. \ref{hmm_fig1} below.

	\begin{figure}[ht]
		\centering
		\includegraphics[width=2.5 in]{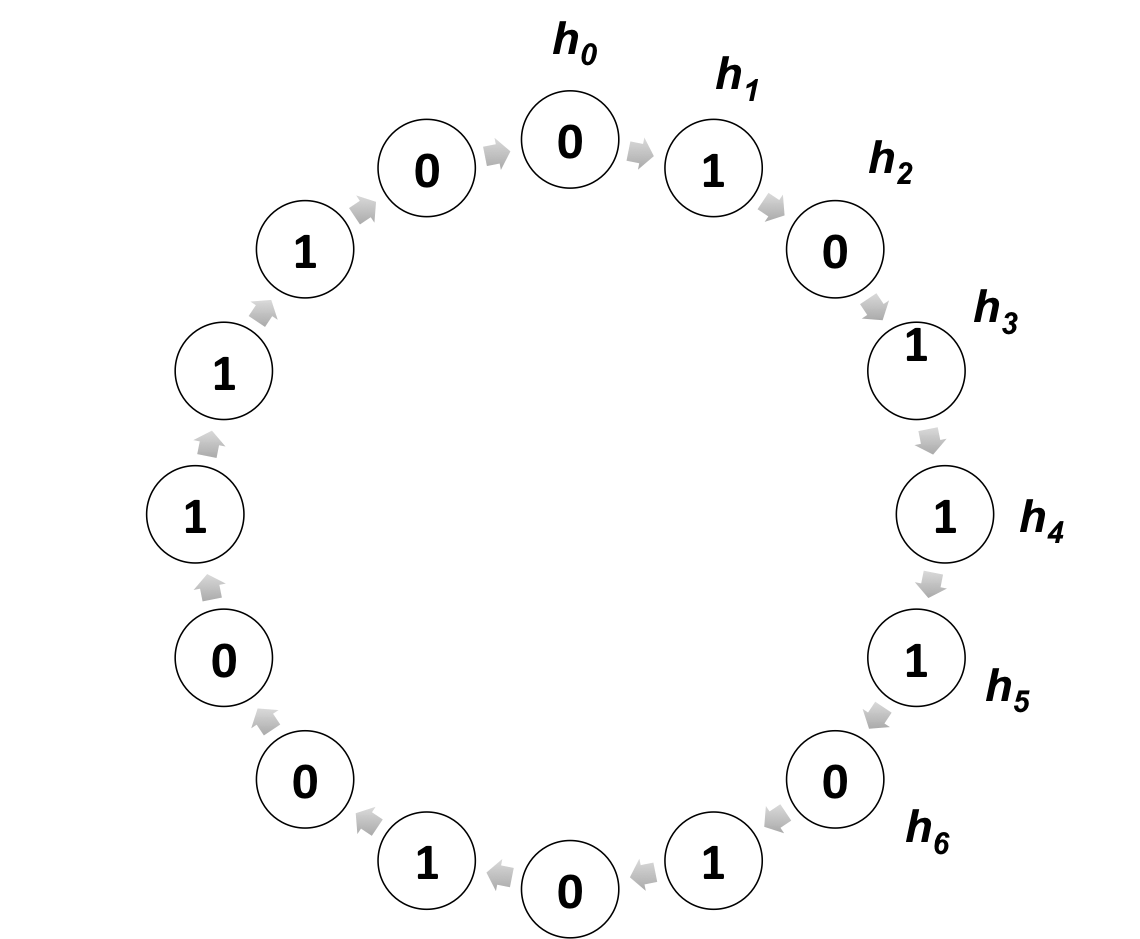}
		\caption{Lower bound construction, $n=16$.}
		\label{hmm_fig1}
	\end{figure}
	
We want show that no predictor $\mathcal{P}$ using windows of length $\ell=3$ can make a good prediction. The transition matrix of the HMM is a permutation and the output alphabet is binary. Each state is assigned a label which determines its output distribution. The states labeled 0 emit 0 with probability $0.5+\epsilon$ and the states labeled 1 emit 1 with probability $0.5+\epsilon$. We will randomly and uniformly choose the labels for the hidden states. Over the randomness in choosing the labels for the permutation, we will show that the expected error of the predictor $\mathcal{P}$ is large, which means that there must exist some permutation such that the predictor $\mathcal{P}$ incurs a high error. The rough proof idea is as follows. Say the Markov model is at hidden state $h_2$ at time 2, this is unknown to the predictor $\mathcal{P}$. The outputs for the first three time steps are $(x_0,x_1,x_2)$. The predictor $\mathcal{P}$ only looks at the outputs from time 0 to 2 for making the prediction for time 3. We show that with high probability over the choice of labels to the hidden states and the outputs $(x_0,x_1,x_2)$, the output $(x_0,x_1,x_2)$ from the hidden states $(h_0, h_1, h_2)$ is close in Hamming distance to the label of some other segment of hidden states, say $(h_4,h_5,h_6)$. Hence any predictor using only the past 3 outputs cannot distinguish whether the string $(x_0,x_1,x_2)$ was emitted by $(h_0,h_1,h_2)$ or $(h_4,h_5,h_6)$, and hence cannot make a good prediction for time 3 (we actually need to show that there are many segments like $(h_4,h_5,h_6)$ whose label is close to $(x_0,x_1,x_2)$). The proof proceeds via simple concentration bounds.

\appendix

\section{Proof of Theorem \ref{hmm}}\label{sec:hmm_app}


\medskip
\noindent\textbf{Theorem~\ref{thm:hmm}.} 
\emph{Suppose observations are generated by a Hidden Markov Model with at most $n$ hidden states, and output alphabet of size $d$. For $\epsilon>1/\log^{0.25}n$ there exists a window length $\ell = O(\frac{\log n}{\eps})$ and absolute constant $c$ such that for any $T \ge d^{c \ell},$ if $t \in \{1,2,\ldots,T\}$ is chosen uniformly at random, then the expected $\ell_1$ distance between the true distribution of $x_t$ given the entire history (and knowledge of the HMM), and the distribution predicted by the naive ``empirical'' $\ell$-th order Markov model based on $x_0,\ldots,x_{t-1}$, is bounded by $\sqrt{\eps}$.}
\medskip

\begin{proof}
	Let $\pi_t$ be a distribution over hidden states such that the probability of the $i$th hidden state under $\pi_t$ is the empirical frequency of the $i$th  hidden state from time $1$ to $t-1$ normalized by $(t-1)$. For $0\le s\le \ell-1$, consider the predictor $\mathcal{P}_t$ which makes a prediction for the distribution of observation $x_{t+s}$ given observations $x_{t},\dots,x_{t+s-1}$ based on the true distribution of $x_t$ under the HMM,  conditioned on the observations $x_{t},\dots,x_{t+s-1}$ and the distribution of the hidden state at time $t$ being $\pi_{t}$. We will show that in expectation over $t$,  $\mathcal{P}_t$ gets small  error averaged across the time steps $0\le s\le \ell-1$, with respect to the optimal prediction of $x_{t+s}$ with knowledge of the true hidden state $h_t$ at time $t$. In order to show this, we need to first establish that the true hidden state $h_t$ at time $t$ does not have very small probability under $\pi_t$, with high probability over the choice of $t$. 
	
	\begin{lemma}\label{lem:set}
		With probability $1-2/n$ over the choice of $t \in \{1,\dots, T\}$, the hidden state $h_t$ at time $t$ has probability at least $1/n^3$ under $\pi_t$.
	\end{lemma}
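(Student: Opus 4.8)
The plan is to notice that the assertion is really a \emph{deterministic} statement about the realized sequence of hidden states $h_1,h_2,\ldots,h_T$, since the only randomness in the lemma is the uniform choice of $t$. So I would fix an arbitrary such sequence and, for a state $i$, write $N_i(t)$ for the number of indices $s$ with $1 \le s \le t-1$ and $h_s = i$, so that $\pi_t(i) = N_i(t)/(t-1)$. Call a time $t$ \emph{bad} if $\pi_t(h_t) < 1/n^3$, equivalently if $N_{h_t}(t) < (t-1)/n^3$ (the index $t=1$, where $\pi_t$ is vacuous, is simply declared bad). The goal is to show that at most a $2/n$ fraction of the times in $\{1,\ldots,T\}$ are bad.

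The key step is the bound: for each fixed state $i$, the number of bad times $t$ with $h_t = i$ is at most $T/n^3 + 1$. To see this, list these bad times in increasing order $t_1 < t_2 < \cdots < t_k$. Each $t_j$ is itself an occurrence of state $i$, and the earlier bad times $t_1,\ldots,t_{j-1}$ all lie strictly before $t_j$, so $N_i(t_j) \ge j-1$. On the other hand, badness of $t_j$ gives $N_i(t_j) < (t_j-1)/n^3 \le T/n^3$. Hence $j-1 < T/n^3$ for every $j \le k$, so $k \le T/n^3 + 1$. This is exactly the ``self-limiting'' phenomenon: a time can only be bad for its own state if that state is underrepresented in the prefix, but the bad times themselves are precisely occurrences of that state, which keeps the count small.

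Summing over the $n$ states, the total number of bad times in $\{1,\ldots,T\}$ is at most $n(T/n^3 + 1) = T/n^2 + n$, so a uniformly random $t \in \{1,\ldots,T\}$ is bad with probability at most $1/n^2 + n/T$. Finally I would check that the hypotheses force $T$ to be large enough that this is at most $2/n$: since $T \ge d^{c\ell}$ with $\ell = \Theta(\log n/\eps)$, $d \ge 2$, and $\eps$ at most an absolute constant, choosing the constant $c$ appropriately gives $T \ge n^2$, whence $1/n^2 + n/T \le 1/n + 1/n = 2/n$. Taking complements yields the lemma.

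I do not expect a serious obstacle; the only points requiring care are (i) phrasing the counting argument so that each bad time is charged to a prior occurrence of its own state — this is the crux and makes the per-state count bounded by $T/n^3$ — and (ii) the minor bookkeeping at $t=1$ (absorbed into the additive ``$+n$'' slack) together with verifying $T \ge n^2$ from the stated bounds on $\ell$, $d$, and $\eps$.
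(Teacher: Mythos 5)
Your proof is correct and rests on the same core observation as the paper's: only the first roughly $T/n^3$ occurrences of any given hidden state can be ``bad'' times for that state, since a later occurrence of state $i$ already has at least $T/n^3$ prior occurrences in the length-$(t-1)$ prefix, forcing $\pi_t(i)\ge (T/n^3)/(t-1)\ge 1/n^3$. The only difference is the final accounting. You sum the per-state bound $T/n^3+1$ over all $n$ states, getting failure probability $1/n^2+n/T$, and therefore need $T\gtrsim n^2$, a condition you correctly verify from $T\ge d^{c\ell}$ with $\ell=\Theta(\log n/\epsilon)$. The paper instead splits states by their frequency under $\pi_T$: states of mass below $1/n^2$ occupy at most $T/n$ time steps in total, while for each frequent state the bad steps form at most a $1/n$ fraction of its occurrences, giving $2/n$ with no explicit condition on $T$ (though its phrase ``first $T/n^3$ time indices'' implicitly presumes $T$ polynomially large in $n$, which holds in the context where the lemma is invoked, so the two arguments are on essentially equal footing there). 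Your charging formulation is, if anything, more careful about the edge cases (the $t=1$ index and the rounding of $T/n^3$) that the paper glosses over, while the paper's rare/frequent split buys a bound that is nominally uniform in $T$.
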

	\begin{proof}
		Consider the ordered set $\mathcal{S}_i$ of time indices $t$ where the hidden state $h_t=i$, sorted in increasing order. We first argue that picking a time step $t$ where the hidden state $h_t$ is a state $j$ which occurs rarely in the sequence is not very likely.  For  sets corresponding to hidden states $j$ which have probability less than $1/n^2$ under $\pi_T$, the cardinality $|\mathcal{S}_j|\le T/n^2$. The sum of the cardinality of all such small sets is at most $T/n$, and hence the probability that a uniformly random $t\in \{1,\dots, T\}$ lies in one of these sets is at most $1/n$.
		
		 Now consider the set of time indices $\mathcal{S}_i$  corresponding to some hidden state $i$ which has probability at least $1/n^2$ under $\pi_T$. For all $t$ which are not among the first $T/n^3$ time indices in this set, the hidden state $i$ has probability at least $1/n^3$ under $\pi_t$. We will refer to the first $T/n^3$ time indices in any set $S_i$ as the ``bad'' time steps for the hidden state $i$. Note that the fraction of the ``bad'' time steps corresponding to any hidden state which has probability at least $1/n^2$ under $\pi_T$ is at most $1/n$, and hence the total fraction of these ``bad'' time steps across all hidden states is at most $1/n$. Therefore using a union bound, with failure probability $2/n$,  the hidden state $h_t$ at time $t$ has probability at least $1/n^3$ under $\pi_t$.
	\end{proof}
	
	Consider any time index $t$, for simplicity   assume $t=0$, and let $OPT_s$ denote the conditional distribution of $x_s$ given observations $x_0,\ldots,x_{s-1}$, and knowledge of the hidden state at time $s=0$.  Let $M_s$ denote the conditional distribution of $x_s$ given only $x_0,\ldots,x_{s-1},$ given that the hidden state at time 0 has the distribution $\pi_0$. 
	
\medskip
\noindent \textbf{Lemma~\ref{lem:regret}.}
		\emph{For $\eps>1/n$, if the true hidden state at time 0 has probability at least $1/n^c$ under $\pi_0$, then for $\ell=c \log n/\eps^2$,  
		\begin{align}
		\E\Big[\frac{1}{\ell}\sum_{s=0}^{\ell-1} \| OPT_s - M_s \|_1 \Big] \le 4\eps ,\nonumber
		\end{align}
		where the expectation is with respect to the randomness in the outputs from time $0$ to $\ell-1$.}
\medskip
	
By Lemma \ref{lem:set}, for a randomly chosen $t\in\{1,\dots,T\}$ the probability that  the hidden state $i$ at time 0 has probability less than $1/n^3$ in the prior distribution $\pi_t$ is at most $2/n$. As the $\ell_1$ error at any time step can be at most 2, using Lemma \ref{lem:regret}, the expected average error of the predictor $\mathcal{P}_t$ across all $t$ is at most $4\epsilon +4/n\le 8\epsilon$ for $\ell=3\log n/\epsilon^2$. 

Now consider the predictor $\hat{\mathcal{P}}_t$ which for $0\le s\le \ell-1$ predicts  $x_{t+s}$ given  $x_{t},\dots,x_{t+s-1}$ according to the empirical distribution of $x_{t+s}$ given $x_{t},\dots,x_{t+s-1}$, based on the observations up to time $t$. We will now argue that the predictions of $\hat{\mathcal{P}}_t$ are close in expectation to the predictions of ${\mathcal{P}}_t$. Recall that prediction of ${\mathcal{P}}_t$ at time $t+s$ is the true distribution of $x_t$ under the HMM, conditioned on the observations $x_{t},\dots,x_{t+s-1}$ and the distribution of the hidden state at time $t$ being drawn from $\pi_{t}$. For any $s< \ell$, let $P_1$  refer to the prediction of $\hat{\mathcal{P}}_t$ at time $t+s$ and $P_2$ refer to the prediction of ${\mathcal{P}}_t$ at time $t+s$. We will show that $\norm{P_1-P_2}{1}$ is small in expectation over $t$. 

We do this using a martingale concentration argument. Consider any string $r$ of length $s$. Let $Q_1(r)$ be the empirical probability of the string $r$ up to time $t$ and $Q_2(r)$ be the true probability of the string $r$ given that the hidden state at time $t$ is distributed as $\pi_t$. Our aim is to show that $|Q_1(r)-Q_2(r)|$ is small. Define the random variable $$Y_\tau = {\Pr[[x_{\tau}:x_{\tau+s-1}]=r|h_{\tau}]}-{I([x_{\tau}:x_{\tau+s-1}]=r)},$$ where $I$ denotes the indicator function and $Y_0$ is defined to be 0. We claim that $Z_\tau = \sum_{i=0}^{\tau} Y_i$ is a martingale with respect to the filtration $\{\phi\},\{h_1\},\{h_2,x_1\},\{h_3,x_2\},\dots, \{h_{t+1},x_t\}$. To verify, note that,
\begin{align*}
\E[Y_{\tau}|&\{h_1\},\{h_2,x_1\},\dots,\{h_{\tau},x_{\tau-1}\}] = \Pr[[x_{\tau}:x_{\tau+s-1}]=r|h_{\tau}] 
\\ &\quad -E[I([x_{\tau}:x_{\tau+s-1}]=r)|\{h_1\},\{h_2,x_1\},\dots,\{x_{\tau-1},h_{\tau}\}]\\
&= \Pr[[x_{\tau}:x_{\tau+s-1}]=r|h_{\tau}] - E[I([x_{\tau}:x_{\tau+s-1}]=r)|h_{\tau}]=0.
\end{align*}
Therefore $\E[Z_{\tau}|\{h_1\},\{h_2,x_1\},\dots,\{h_{\tau},x_{\tau-1}\}] = Z_{\tau-1}$, and hence $Z_\tau$ is a martingale. Also, note that $|Z_{\tau}-Z_{\tau-1}|\le 1$ as $0\le\Pr[[x_{\tau}:x_{\tau+s-1}]=r|h_{\tau}]\le 1$ and $0\le I([x_{\tau}:x_{\tau+s-1}]=r)\le 1$. Hence using Azuma's inequality (Lemma \ref{azuma}),
\begin{align*}
\Pr[|Z_{t-s}|\ge K]\le 2e^{-K^2/(2t)} .
\end{align*}
Note that $Z_{t-s}/(t-s)=Q_2(r)-Q_1(r)$. By Azuma's inequality and doing a union bound over all $d^s\le d^{\ell}$ strings $r$ of length $s$, for $c\ge4$ and $t\ge T/n^2=d^{c\ell}/n^2\ge d^{c\ell/2}$, we have $\norm{Q_1-Q_2}{1}\le 1/d^{c\ell/20}$ with failure probability at most $2d^{\ell}e^{-\sqrt{t}/2}\le 1/n^2$. Similarly, for all strings of length $s+1$, the estimated probability of the string has error at most $1/d^{c\ell/20}$ with failure probability $1/n^2$. As the conditional distribution of $x_{t+s}$ given observations $x_{t},\dots,x_{t+s-1}$ is the ratio of the joint distributions of $\{x_t,\dots,x_{t+s-1},x_{t+s}\}$ and $\{x_t,\dots,x_{t+s-1}\}$, therefore as long as the empirical distributions of the length $s$ and length $s+1$ strings are estimated with error at most $1/d^{c\ell/20}$ and the string $\{x_t,\dots,x_{t+s-1}\}$ has probability at least $1/d^{c\ell/40}$, the conditional distributions $P_1$ and $P_2$ satisfy $ \norm{P_1-P_2}{1}\le 1/n^2$. 
By a union bound over all $d^s\le 	d^{\ell}$ strings and for $c\ge 100$, the total probability mass on strings which occur with probability less than $1/d^{c\ell/40}$ is at most $1/d^{c\ell/50}\le 1/n^2$ for $c\ge100$. Therefore $ \norm{P_1-P_2}{1}\le 1/n^2$ with overall failure probability $3/n^2$, hence the expected $\ell_1$ distance between $P_1$ and $P_2$ is at most $1/n$. 

By using the triangle inequality and the fact that the expected average error of ${\mathcal{P}}_t$ is at most $8\epsilon$ for $\ell=3\log n/\epsilon^2$, it follows that the expected average error of $\hat{\mathcal{P}}_t$ is at most $8\epsilon+1/n\le 7\epsilon$. Note that the expected average error of $\hat{\mathcal{P}}_t$ is the average of the expected errors of the empirical $s$-th order Markov models for $0\le s \le \ell-1$. Hence for $\ell=3\log n/\epsilon^2$ there must exist at least some $s< \ell$ such that the $s$-th order Markov model gets expected $\ell_1$ error at most $9\epsilon$.

	\subsection{Proof of Lemma \ref{lem:regret}}
	
	Let the prior for the distribution of the hidden states at time 0 be $\pi_0$. Let the true hidden state $h_0$ at time 0 be $1$ without loss of generality. We refer to the output at time $t$ by $x_s$. Let $H_0^s(i)=\Pr[h_0 = i | x_0^s ]$ be the posterior probability of the $i$th hidden state at time 0 after seeing the observations $x_0^s$ up to time $t$ and having the prior $\pi_0$ on the distribution of the hidden states at time 0. Let $u_s= H^s_0(1)$ and $v_s=1-u_s$. Define $P_i^s(j)=\Pr[x_s = j | x_0^{s-1}, h_0 = i]$ as the distribution of the output at time $t$ conditioned on the hidden state at time 0 being $i$ and observations $x_0^{s-1}$. Note that $OPT_s = P_1^s$. As before, define $R_{s}$ as the conditional distribution of $x_s$ given observations $x_0,\dotsb,x_{s-1}$ and initial distribution $\pi$ but \emph{not} being at hidden state $h_0$ at time 0 i.e. $R_s = (1/v_s) \sum_{i=2}^{n}H^s_0(i)P_i^s$. Note that $M_s$ is a convex combination of $OPT_s$ and $R_s$, i.e. $M_s = u_sOPT_s + v_s R_s$. Hence ${\norm{OPT_s-M_s}{1}}\le \;{\norm{OPT_s-R_s}{1}}$. Define $\delta_s = {\norm{OPT_s-M_s}{1}}$.
	
	Our proof relies on a martingale concentration argument, and in order to ensure that our martingale has bounded differences we will ignore outputs which cause a significant drop in the posterior of the true hidden state at time 0. Let $B$ be the set of all outputs $j$ at some time $t$ such that $\frac{OPT_s(j)}{R_s(j)} \le \frac{\epsilon^4}{c{\log n}}$. Note that, $\sum_{j\in B}^{}OPT_s(j) \le \frac{\epsilon^4\sum_{j\in B}^{}R_s(j)}{c{\log n}}\le \frac{\epsilon^4}{c{\log n}}$. Hence by a union bound, with failure probability at most $\epsilon^2$ any output $j$ such that $\frac{OPT_s(j)}{R_s(j)} \le \frac{\epsilon^4}{c{\log n}}$ is not emitted in a window of length $c{\log n}/\epsilon^2$. Hence we will only concern ourselves with sequences of outputs such that the output $j$ emitted at each step satisfies $\frac{OPT_s(j)}{R_s(j)} \le \frac{\epsilon^4}{c{\log n}}$, let the set of all such outputs be $\mathcal{S}_1$, note that $\Pr(x_0^s \notin \mathcal{S}_1) \le \epsilon^2$. Let $\E_{\mathcal{S}_1}[X]$ be the expectation of any random variable $X$ conditioned on the output sequence being in the set $\mathcal{S}_1$.
	
	Consider the sequence of random variables $X_s = \log u_s  - \log v_s $ for $s \in [-1,\ell-1]$. Let $X_{-1}=\log(\pi_1)-\log(1-\pi_1)$. Let $\Delta_{s+1}= X_{s+1}  - X_s $ be the change in $X_s$ on seeing the  output $x_{s+1}$ at time $s+1$. Let the output at time $s+1$ be $j$. We will first find an expression for $\Delta_{s+1}$. The posterior probabilities after seeing the $(s+1)$th output get updated according to Bayes rule,
	\begin{align*}
	H_0^{s+1}(1)&=\Pr[h_0 = 1 | x_0^s, x[s+1]=j] \\
	&= \frac{\Pr[h_0 = 1 | x_0^s ]\Pr[x[s+1] = j | h_0 = 1, x_0^s]}{\Pr[x[s+1] = j |x_0^s]}\\
	\implies u_{s+1} &= \frac{u_s OPT_{s+1}(j)}{\Pr[x[s+1] = j |x_0^s ]}.
	\end{align*}
	Let $\Pr[x[s+1] = j |x_0^s]=d_j$. Note that $H^{s+1}_0(i)=H_0^{s}(i)P_i^{s+1}(j)/d_j$ if the output at time $s+1$ is $j$. We can write,
	\begin{align}
	R_{s+1}&=\Big({\sum_{i=2}^{n}H_0^s(i)}P_i^{s+1}\Big)/{v_s} \nonumber\\ 
	v_{s+1} &= \sum_{i=2}^{n}H_0^{s+1}(i)=\Big(\sum_{i=2}^{n}H_0^s(i)P_i^{s+1}(j)\Big)/d_j \nonumber\\
	&= v_sR_{s+1}(j)/d_j.\nonumber
	\end{align}
	Therefore we can write $\Delta_{s+1}$ and its expectation $\E[\Delta_{s+1}]$ as,
	\begin{align}
	\Delta_{s+1} &= \log \frac{OPT_{s+1}(j)}{R_{s+1}(j)}\nonumber \\
	\implies \E[\Delta_{s+1}] &= \sum_{j}^{}OPT_{s+1}(j)\log \frac{OPT_{s+1}(j)}{R_{s+1}(j)} = D(OPT_{s+1}\parallel R_{s+1}). \nonumber
	\end{align}
	We define $\tilde{\Delta}_{s+1}$ as $\tilde{\Delta}_{s+1}:=\min\{\Delta_{s+1},\log\log n\}$ to keep martingale differences bounded. $\E[\tilde{\Delta}_{s+1}]$ then equals a truncated version of the KL-divergence which we define as follows.
	\begin{definition}\label{cool_Kl}
		For any two distributions $\mu(x)$ and $\nu(x)$, define the truncated KL-divergence as $\tilde{D}_C(\mu\parallel \nu) = \E\Big[\log \Big(\min\Big\{\mu(x)/\nu(x),C\Big\}\Big)\Big] $ for some fixed $C$.
	\end{definition}
	We are now ready to define our martingale. Consider the sequence of random variables $\tilde{X}_s := \tilde{X}_{s-1} + \tilde{\Delta}_s$ for $t\in[0,\ell-1]$, with $\tilde{X}_{-1}:=X_{-1}$. Define $\tilde{Z}_s := \sum_{s=1}^{n}\Big(\tilde{X}_s - \tilde{X}_{s-1} -\delta_{s}^2/2\Big)$. Note that $\Delta_s \ge \tilde{\Delta}_s \implies X_s \ge \tilde{X}_s$. 
	
	\begin{lemma}
		$\E_{\mathcal{S}_1}[\tilde{X}_{s} -\tilde{X}_{s-1} ] \ge \delta_{s}^2/2$, where the expectation is with respect to the output at time $t$. Hence the sequence of random variables $\tilde{Z}_s := \sum_{i=0}^{s}\Big(\tilde{X}_s-\tilde{X}_{s-1}-\delta_{s}^2/2\Big)$ is a submartingale with respect to the outputs.
	\end{lemma}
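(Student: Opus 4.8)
The plan is to identify the conditional expectation of the increment $\tilde{X}_s-\tilde{X}_{s-1}=\tilde{\Delta}_s$, given $x_0^{s-1}$, with a truncated KL divergence, and then apply the modified Pinsker inequality. First I would record the closed form of the increment, which the Bayes-rule computation already in the excerpt supplies: when the output at time $s$ is $j$ we have $u_s\propto u_{s-1}OPT_s(j)$ and $v_s\propto v_{s-1}R_s(j)$, so $\Delta_s=\log\frac{OPT_s(x_s)}{R_s(x_s)}$ and hence $\tilde{\Delta}_s=\min\{\Delta_s,\log\log n\}=\log\min\{\frac{OPT_s(x_s)}{R_s(x_s)},\log n\}$. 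Since the true hidden state at time $0$ is state $1$, the actual conditional law of $x_s$ given $x_0^{s-1}$ is $OPT_s$, so $\E[\tilde{\Delta}_s\mid x_0^{s-1}]=\sum_j OPT_s(j)\log\min\{\frac{OPT_s(j)}{R_s(j)},\log n\}=\tilde{D}_{\log n}(OPT_s\parallel R_s)$, the $(\log n)$-truncated KL divergence of Definition~\ref{cool_Kl}.

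Next I would invoke the modified Pinsker inequality (Lemma~\ref{cool_pinsker}) with $C=\log n$; this requires $\log C=\log\log n\ge 8$, which holds for all sufficiently large $n$, and gives $\tilde{D}_{\log n}(OPT_s\parallel R_s)\ge\tfrac12\norm{OPT_s-R_s}{1}^2$. Because $M_s=u_sOPT_s+v_sR_s$ is a convex combination of $OPT_s$ and $R_s$, the point $M_s$ lies on the segment joining them, so $\norm{OPT_s-M_s}{1}\le\norm{OPT_s-R_s}{1}$, and therefore $\E[\tilde{\Delta}_s\mid x_0^{s-1}]\ge\tfrac12\norm{OPT_s-M_s}{1}^2=\tfrac12\delta_s^2$. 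Since $\delta_s$ depends only on $x_0^{s-1}$ (through $OPT_s$ and $M_s$), it is measurable with respect to the conditioning, so this says exactly that $\tilde{Z}_s=\sum_{i=0}^{s}(\tilde{X}_i-\tilde{X}_{i-1}-\delta_i^2/2)$ has nonnegative conditional increments with respect to the filtration generated by the outputs, i.e.\ is a submartingale.

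It remains to account for the conditioning on $\mathcal{S}_1$ in the statement. Conditioning on $\mathcal{S}_1$ restricts each output $x_s$ to lie outside the bad set $B_s=\{j:OPT_s(j)/R_s(j)\le\epsilon^4/(c\log n)\}$, on which $\Delta_s(j)=\log\frac{OPT_s(j)}{R_s(j)}<0$ and hence $\tilde{\Delta}_s(j)=\Delta_s(j)<0$; thus deleting the bad outputs only removes negative mass from $\E[\tilde{\Delta}_s\mid x_0^{s-1}]$, and renormalizing by $1-OPT_s(B_s)\le 1$ can only increase it further, so $\E_{\mathcal{S}_1}[\tilde{\Delta}_s\mid x_0^{s-1}]\ge\tfrac12\delta_s^2$ as well, and under the $\mathcal{S}_1$-measure $\tilde{\Delta}_s$ is in addition sandwiched between $\log(\epsilon^4/(c\log n))$ and $\log\log n$, the bounded-difference property needed for the subsequent Azuma estimate. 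The only genuinely delicate point — and the one I would be most careful about — is that conditioning on the \emph{entire} window staying in $\mathcal{S}_1$, rather than just the past and present, could reweight the law of $x_s$; I would sidestep this by working with the process stopped at the first output that lands in its bad set, so that only the "stays good through time $s$'' conditioning is ever invoked at each step, and since the probability of ever leaving $\mathcal{S}_1$ within the length-$c\log n/\epsilon^2$ window is at most $\epsilon^2$ (the union bound already given in the excerpt), this stopping costs nothing in the final accounting.
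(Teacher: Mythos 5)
Your proposal is correct and follows essentially the same route as the paper's proof: identify $\E[\tilde{\Delta}_s\mid x_0^{s-1}]$ with the truncated KL divergence $\tilde{D}_{\log n}(OPT_s\parallel R_s)$ via the Bayes-rule computation, apply the modified Pinsker inequality (Lemma~\ref{cool_pinsker}), use the convex-combination fact $\|OPT_s-M_s\|_1\le\|OPT_s-R_s\|_1$, and observe that restricting to $\mathcal{S}_1$ only removes outputs on which $\tilde{\Delta}_s$ is negative. Your additional stopping-time device to handle the fact that $\mathcal{S}_1$ constrains the whole window rather than just the past is a careful refinement of a point the paper treats tersely, but it does not change the substance of the argument.
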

	\begin{proof}
		By definition $\tilde{X}_s - \tilde{X}_{s-1} = \tilde{\Delta}_s$ and $\E[\tilde{\Delta}_s] = \tilde{D}_{C}(OPT_{s}\parallel R_{s}), C= {\log n}$. By taking an expectation with respect to only sequences $\mathcal{S}_1$ instead of all possible sequences, we are removing events which have a negative contribution to $\E[\tilde{\Delta}_s]$, hence $$\E_{\mathcal{S}_1}[\tilde{\Delta}_s ] \ge \E[\tilde{\Delta}_s]= {\tilde{D}_{C}(OPT_{s}\parallel R_{s})}.$$ We can now apply Lemma \ref{cool_pinsker}. 
		\coolKL
		Hence $\E_{\mathcal{S}_1}[\tilde{\Delta}_s] \ge \frac{1}{2}\norm{OPT_{s}- R_{s}}{1}^2$. Hence $\E_{\mathcal{S}_1}[\tilde{X}_{s} -\tilde{X}_{s-1} ]\ge \delta_{s}^2/2$. 
	\end{proof}

	We now claim that our submartingale has bounded differences.
	
	\begin{lemma}
		$|\tilde{Z}_{s}-\tilde{Z}_{s-1}|\le \sqrt{2}\log (c{\log n}/\eps^4)$.
	\end{lemma}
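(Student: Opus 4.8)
The plan is to exploit the telescoping structure of $\tilde Z_s$. Since $\tilde Z_s = \sum_{i=0}^{s}\bigl(\tilde X_i - \tilde X_{i-1} - \delta_i^2/2\bigr)$, the one-step increment is
\[
\tilde Z_s - \tilde Z_{s-1} \;=\; \tilde X_s - \tilde X_{s-1} - \delta_s^2/2 \;=\; \tilde\Delta_s - \delta_s^2/2,
\]
so it suffices to bound $\tilde\Delta_s$ from above and below, and $\delta_s^2/2$ from above. The term $\delta_s^2/2$ is the easy one: by definition $\delta_s = \|OPT_s - M_s\|_1 \le \|OPT_s - R_s\|_1 \le 2$, since each is an $\ell_1$ distance between probability distributions, and hence $0 \le \delta_s^2/2 \le 2$.

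Next I would bound $\tilde\Delta_s$. The upper bound is immediate from its definition, $\tilde\Delta_s = \min\{\Delta_s,\log\log n\} \le \log\log n$. The lower bound is where the restriction to the event $\mathcal{S}_1$ enters: by construction, on $\mathcal{S}_1$ no ``bad'' symbol (one with $OPT_s(j)/R_s(j) \le \epsilon^4/(c\log n)$) is ever emitted, so the symbol $j$ observed at step $s$ satisfies $OPT_s(j)/R_s(j) > \epsilon^4/(c\log n)$, which gives $\Delta_s = \log\bigl(OPT_s(j)/R_s(j)\bigr) > -\log(c\log n/\epsilon^4)$. Since $\log\log n > 0 > -\log(c\log n/\epsilon^4)$, truncating $\Delta_s$ from above by $\log\log n$ cannot push a value that already exceeds $-\log(c\log n/\epsilon^4)$ below that threshold, so the same lower bound passes to $\tilde\Delta_s$; that is, $\tilde\Delta_s > -\log(c\log n/\epsilon^4)$.

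Combining, $\tilde Z_s - \tilde Z_{s-1} \le \log\log n$ and $\tilde Z_s - \tilde Z_{s-1} \ge -\log(c\log n/\epsilon^4) - 2$. Using $\epsilon < 1$ we get $\log\log n \le \log(c\log n) \le \log(c\log n/\epsilon^4)$, and for $n$ large enough (so that $\log(c\log n/\epsilon^4) \ge 2/(\sqrt2-1)$) we have $\log(c\log n/\epsilon^4) + 2 \le \sqrt2\,\log(c\log n/\epsilon^4)$; hence both endpoints are bounded in absolute value by $\sqrt2\,\log(c\log n/\epsilon^4)$, which is exactly the claim. I do not expect any serious obstacle here — the computation is routine — and the only step that genuinely needs a moment's care is verifying that the truncation defining $\tilde\Delta_s$ preserves the lower bound coming from the $\mathcal{S}_1$ restriction, together with checking that the additive $\delta_s^2/2 \le 2$ is absorbed into the $\sqrt2$ factor once $n$ is large.
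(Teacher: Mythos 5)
Your proposal is correct and follows essentially the same route as the paper: bound $\tilde\Delta_s$ above by the truncation level $\log\log n$ and below by $-\log(c\log n/\epsilon^4)$ using the restriction to $\mathcal{S}_1$, absorb the $\delta_s^2/2\le 2$ term, and fold the additive $2$ into the $\sqrt{2}$ factor for large $n$. Your treatment is in fact slightly cleaner than the paper's (which writes the quadratic term as a difference $(\delta_s^2-\delta_{s-1}^2)/2$), but the argument is the same.
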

	\begin{proof}
		Note that $(\delta_{s}^2-\delta_{s-1}^2)/2$ can be at most 2. $Z_{s}-Z_{s-1} =  \tilde{\Delta}_s$. By definition $\tilde{\Delta}_s \le \log( {\log n})$. Also, $\tilde{\Delta}_s \ge -\log(c {\log n}/\epsilon^4)$ as we restrict ourselves to sequences in the set $\mathcal{S}_1$. Hence $|\tilde{Z}_{s}-\tilde{Z}_{s-1}|\le \log (c{\log n}/\epsilon^4) + 2\le \sqrt{2}\log (c{\log n}/\epsilon^4)$. 
	\end{proof}
	
	We now apply Azuma-Hoeffding to get submartingale concentration bounds. 
	\begin{lemma}\label{azuma}
		({Azuma-Hoeffding inequality}) Let $Z_i$ be a submartingale with $|Z_{i}-Z_{i-1}|\le C$. Then $\Pr[Z_s-Z_0\le -\lambda] \le \exp\Big(\frac{-\lambda^2}{2tC^2}\Big)$
	\end{lemma}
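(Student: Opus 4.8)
The plan is to prove this one-sided concentration bound by the exponential-moment (Chernoff-type) method for martingales, which is the standard route to Azuma--Hoeffding and carries over verbatim to submartingales once one is careful about signs. Write $D_i := Z_i - Z_{i-1}$ for the increments and let $\mathcal{F}_0 \subseteq \mathcal{F}_1 \subseteq \cdots$ be a filtration for which $Z_i$ is a submartingale, so that $\E[D_i \mid \mathcal{F}_{i-1}] \ge 0$, while the hypothesis gives $|D_i| \le C$ almost surely. Since $Z_0 - Z_s = -\sum_{i=1}^{s} D_i$, the event $\{Z_s - Z_0 \le -\lambda\}$ equals $\{-\sum_{i=1}^{s} D_i \ge \lambda\}$, so for any $\theta > 0$ Markov's inequality applied to $e^{-\theta \sum_i D_i}$ gives $\Pr[Z_s - Z_0 \le -\lambda] \le e^{-\theta \lambda}\, \E\big[e^{-\theta \sum_{i=1}^s D_i}\big]$, and it remains to bound this moment generating function.

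The crux is the per-step estimate $\E\big[e^{-\theta D_i} \mid \mathcal{F}_{i-1}\big] \le e^{\theta^2 C^2 / 2}$ for every $\theta > 0$. I would obtain it from convexity of $x \mapsto e^{-\theta x}$: on $[-C, C]$ this function lies below the secant through its endpoints, so $e^{-\theta D_i} \le \tfrac{C - D_i}{2C} e^{\theta C} + \tfrac{C + D_i}{2C} e^{-\theta C}$. Taking $\E[\,\cdot \mid \mathcal{F}_{i-1}\,]$ and setting $\mu_i := \E[D_i \mid \mathcal{F}_{i-1}] \ge 0$ yields $\E[e^{-\theta D_i} \mid \mathcal{F}_{i-1}] \le \cosh(\theta C) - \tfrac{\mu_i}{C}\sinh(\theta C) \le \cosh(\theta C) \le e^{\theta^2 C^2/2}$, where dropping the $\sinh$ term is legitimate precisely because $\mu_i \ge 0$ (the submartingale hypothesis) and $\theta C > 0$, and the last step is the elementary inequality $\cosh u \le e^{u^2/2}$. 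Then I would integrate this out one step at a time: conditioning on $\mathcal{F}_{s-1}$,
\[
\E\big[e^{-\theta \sum_{i=1}^{s} D_i}\big] = \E\big[e^{-\theta \sum_{i=1}^{s-1} D_i}\, \E[e^{-\theta D_s}\mid \mathcal{F}_{s-1}]\big] \le e^{\theta^2 C^2/2}\, \E\big[e^{-\theta \sum_{i=1}^{s-1} D_i}\big],
\]
and iterating down to the empty sum gives $\E\big[e^{-\theta \sum_{i=1}^{s} D_i}\big] \le e^{s \theta^2 C^2 / 2}$.

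Combining, $\Pr[Z_s - Z_0 \le -\lambda] \le e^{-\theta \lambda + s \theta^2 C^2 / 2}$ for all $\theta > 0$, and optimizing with $\theta = \lambda/(s C^2)$ gives the claimed $\Pr[Z_s - Z_0 \le -\lambda] \le \exp\big(-\lambda^2/(2 s C^2)\big)$, with the number of increments playing the role of the ``$t$'' in the statement. There is no serious obstacle here — the argument is textbook — but the one point that genuinely needs care, and where a careless write-up would slip, is the direction in which the submartingale inequality is invoked: it controls the \emph{lower} tail $Z_s - Z_0 \le -\lambda$ (equivalently, $-Z_i$ is a supermartingale and one bounds its upper tail), and it is exactly the sign $\mu_i \ge 0$ that lets the term linear in $D_i$ be discarded rather than needing to be controlled.
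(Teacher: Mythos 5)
Your proof is correct. The paper does not actually prove this lemma---it is invoked as the standard Azuma--Hoeffding inequality---and your exponential-moment argument is the textbook derivation: the secant (convexity) bound on $e^{-\theta D_i}$, the observation that the submartingale condition $\E[D_i \mid \mathcal{F}_{i-1}]\ge 0$ lets you drop the $\sinh$ term when bounding the \emph{lower} tail, the tower-property iteration giving $e^{s\theta^2C^2/2}$, and the optimization $\theta=\lambda/(sC^2)$ are all handled correctly. You are also right that the ``$t$'' in the exponent of the paper's statement should be the number of increments $s$ (a typo in the statement, consistent with how the lemma is applied in Eq.~\ref{eq:azuma_app}).
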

	Applying Lemma \ref{azuma} we can show,
	\begin{align}
	\Pr[\tilde{Z}_{\ell-1}-\tilde{Z}_0&\le -{c\log n}] \le \exp\Big(\frac{-{c\log n}}{4(1/\epsilon)^2\log^2(c{\log n}/\epsilon^4)}\Big) \le \epsilon^2, \label{eq:azuma_app}
	\end{align}
	for $\eps \ge 1/\log^{0.25} n$ and $c\ge 1$.
	We now bound the average error in the window 0 to $\ell-1$. With failure probability at most $\epsilon^2$ over the randomness in the outputs, $\tilde{Z}_{\ell-1}-\tilde{Z}_0\ge -{c\log n}$ by Eq. \ref{eq:azuma_app}. Let $\mathcal{S}_2$ be the set of all sequences in $\mathcal{S}_1$ which satisfy $\tilde{Z}_{\ell-1}-\tilde{Z}_0\ge -{c\log n}$. Note that $X_0 = \tilde{X}_0 \ge \log(1/\pi_1)$. Consider the last point after which $v_s$ decreases below $\epsilon^2$ and remains below that for every subsequent step in the window. Let this point be $\tau$, if there is no such point define $\tau$ to be $\ell-1$. The total contribution of the error at every step after the $\tau$th step to the average error is at most a $\epsilon^2$ term as the error after this step is at most $\epsilon^2$. Note that $X_{\tau} \le \log (1/\epsilon)^2 \implies \tilde{X}_{\tau}\le \log (1/\epsilon)^2$ as $ \tilde{X}_{s}\le X_{s}$. Hence for all sequences in $\mathcal{S}_2$,
	\begin{align*}
	\tilde{X}_\tau &\le \log (1/\epsilon)^2 \nonumber \\
	\implies \tilde{X}_{\tau}  - \tilde{X}_{-1} &\le \log (1/\epsilon)^2+\log(1/\pi_1)\\
	\stackrel{(a)}{\implies} 0.5\sum_{s=0}^{\tau}{\delta}_{s}^2 &\le 2\log n+\log(1/\pi_1) + c{\log n}\\
	\stackrel{(b)}{\implies} 0.5\sum_{s=0}^{\tau}{\delta}_{s}^2 &\le 2(c+1)\log n \le 4c\log n \\
	\stackrel{(c)}{\implies} \frac{\sum_{s=0}^{\ell-1}{\delta}_{s}^2}{c\log n/\epsilon^2} &\le 8\epsilon^2  \\
	\stackrel{(c)}{\implies} \frac{{\sum_{s=0}^{\ell-1}{\delta}_{s}}}{c\log n/\epsilon^2} &\le 3\epsilon, 
	\end{align*}
	where (a) follows by Eq. \ref{eq:azuma_app}, and as $\epsilon\ge 1/n$; (b) follows as $ \log(1/\pi_1)\le c\log n$, and $c\ge 1$; (c) follows because $\log(1/\pi_1)\le c\log n)$; and (d) follows from Jensen's inequality. As the total probability of sequences outside $\mathcal{S}_2$ is at most $2\epsilon^2$, $\E[{\sum_{s=0}^{\ell-1}{\delta}_{s}}] \le 4\epsilon$, whenever the hidden state $i$ at time 0 has probability at least $1/n^c$ in the prior distribution $\pi_0$.  

\end{proof}

\subsection{Proof of Modified Pinsker's Inequality (Lemma \ref{cool_pinsker}) }

\coolKL*
\begin{proof}
	We rely on the following Lemma which bounds the KL-divergence for binary distributions-
	\begin{lemma}
		For every $0 \le q \le p\le 1$, we have
		\begin{enumerate}
			\item $p\log \frac{p}{q} + (1-p)\log \frac{1-p}{1-q} \ge 2(p-q)^2$.
			\item $3p + (1-p)\log \frac{1-p}{1-q} \ge 2(p-q)^2$.
		\end{enumerate}
	\end{lemma}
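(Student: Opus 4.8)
The plan is to prove the two inequalities separately by elementary one-variable calculus and algebra. I will fix $q$ and regard $p\in[q,1]$ as the variable; note that the left-hand side of part~1 is exactly the KL divergence between $\mathrm{Bernoulli}(p)$ and $\mathrm{Bernoulli}(q)$, so part~1 is the classical binary Pinsker inequality, and part~2 is a ``truncated'' variant in which the (possibly large and positive) term $p\log\frac{p}{q}$ has been replaced by the harmless linear term $3p$.

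For part~1, I would set $f(p) := p\log\frac{p}{q} + (1-p)\log\frac{1-p}{1-q} - 2(p-q)^2$ and show $f(p)\ge 0$ on $[q,1]$. The boundary case $q=0$ is disposed of immediately: the left-hand side is $+\infty$ when $p>0$ and equals $0$ when $p=q=0$, so assume $q\in(0,1)$. One checks $f(q)=0$ and $f'(p)=\log\frac{p}{q}-\log\frac{1-p}{1-q}-4(p-q)$, so that $f'(q)=0$ as well; then $f''(p)=\frac1p+\frac1{1-p}-4=\frac{1}{p(1-p)}-4\ge 0$ since $p(1-p)\le \tfrac14$. Hence $f'$ is nondecreasing on $(q,1)$ with $f'(q)=0$, so $f'\ge0$ there, so $f$ is nondecreasing with $f(q)=0$, giving $f\ge0$; the endpoint $p=1$ follows by continuity/monotonicity.

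For part~2, write $\delta := p-q\in[0,p]$, so that $1-q=(1-p)+\delta$, and, when $p<1$,
\[
(1-p)\log\frac{1-p}{1-q} = -(1-p)\log\!\Big(1+\tfrac{\delta}{1-p}\Big)\ \ge\ -(1-p)\cdot\tfrac{\delta}{1-p} = -\delta,
\]
using $\log(1+x)\le x$ for $x\ge0$; the case $p=1$ is immediate since this term vanishes and $3\ge 2(1-q)^2$. Hence $3p + (1-p)\log\frac{1-p}{1-q}\ \ge\ 3p-\delta$. Since $\delta\le p$ we get $3p-\delta\ge 3\delta-\delta = 2\delta$, and since $\delta\le 1$ we get $2\delta\ge 2\delta^2 = 2(p-q)^2$, which is the claim.

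The argument is essentially routine, so there is no real ``main obstacle''; the only points needing a moment of care are the degenerate boundaries — $q=0$ in part~1 and $p=1$ in part~2, where a logarithm blows up or a prefactor vanishes — and, in part~2, the observation that it is the relation $\delta\le p$ (not merely $\delta\le1$) that saves the bound for small $p$, since the cruder estimate $(1-p)\log\frac{1-p}{1-q}\ge -1/e$ would be too weak there. This lemma is then fed into the proof of Lemma~\ref{cool_pinsker} by applying part~1 on the part of the sample space where the likelihood ratio is below the truncation threshold $C$ and part~2 (which is where $\log C\ge 8$ enters, bounding $3p$ against the truncated contribution) on the remainder.
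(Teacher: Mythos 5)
Your proof is correct and is essentially the same argument the paper intends: the paper dismisses this lemma with the remark that both parts ``follow from standard calculus'' after observing $(p-q)\le p$, and your write-up simply supplies those omitted details -- the standard second-derivative proof of binary Pinsker for part~1, and for part~2 the bound $(1-p)\log\frac{1-p}{1-q}\ge -(p-q)$ via $\log(1+x)\le x$ combined with the same key observation $p-q\le p$ (and $p-q\le 1$), with the degenerate endpoints $q=0$ and $p=1$ handled correctly.
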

	\begin{proof}
		For the second result, first observe that $\log(1/(1-q))\ge 0$ and $(p-q)\le p$ as $q\le p$. Both the results then follow from standard calculus.
	\end{proof}
	Let $A := \{x \in X : \mu(x) \ge \nu(x)\}$ and $B := \{x \in X : \mu(x) \ge C\nu(x)\}$. Let $\mu(A)=p$, $\mu(B)=\delta,  \nu(A)=q$ and $\nu(B)=\epsilon$. Note that $\norm{\mu-\nu}{1}=2(\mu(A)-\nu(A))$. By the log-sum inequality--
	\begin{align}
	\tilde{D}_C(\mu \parallel \nu) &= \sum_{x\in B}\mu(x)\log\frac{\mu(x)}{\nu(x)} + \sum_{x\in A-B}\mu(x)\log\frac{\mu(x)}{\nu(x)} + \sum_{x\in X-A}\mu(x)\log\frac{\mu(x)}{\nu(x)} \nonumber\\
	&= \delta\log C+ (p-\delta)\log\frac{p-\delta}{q-\epsilon}+(1-p)\log \frac{1-p}{1-q}. \nonumber
	\end{align}
	\begin{enumerate}
		\item \emph{Case 1}: $0.5 \le \frac{\delta}{p} \le 1$
		\begin{align}
		\tilde{D}_C(\mu \parallel \nu) &\ge \frac{p}{2}\log C +(1-p)\log \frac{1-p}{1-q}\nonumber \\
		&\ge 2(p-q)^2= \frac{1}{2}\norm{\mu-\nu}{1}^2. \nonumber
		\end{align}
		\item \emph{Case 2}: $\frac{\delta}{p}<0.5 $
		\begin{align}
		\tilde{D}_C(\mu \parallel \nu) &= \delta\log C+ (p-\delta)\log\frac{p}{q-\epsilon}+(p-\delta)\log\Big(1-\frac{\delta}{p}\Big)+(1-p)\log \frac{1-p}{1-q}\nonumber \\
		&\ge \delta\log C+ (p-\delta)\log\frac{p}{q}-(p-\delta)\frac{2\delta}{p}+(1-p)\log \frac{1-p}{1-q} \nonumber\\
		&\ge \delta(\log C-2)+ (p-\delta)\log\frac{p}{q}+(1-p)\log \frac{1-p}{1-q}.\nonumber 
		\end{align}
		\begin{enumerate}
			\item \emph{Sub-case 1}: $\log \frac{p}{q} \ge 6$
			\begin{align}
			\tilde{D}_C(\mu \parallel \nu) &\ge (p-\delta)\log\frac{p}{q}+(1-p)\log \frac{1-p}{1-q}\nonumber\\
			&\ge 3p+(1-p)\log \frac{1-p}{1-q} \nonumber\\
			&\ge 2(p-q)^2= \frac{1}{2}\norm{\mu-\nu}{1}^2. \nonumber
			\end{align}
			\item \emph{Sub-case 2}: $\log \frac{p}{q} < 6$
			\begin{align}
			\tilde{D}_C(\mu \parallel \nu) &\ge \delta(\log C-2-\log\frac{p}{q})+ p\log\frac{p}{q}+(1-p)\log \frac{1-p}{1-q}\nonumber \\
			&\ge 2(p-q)^2= \frac{1}{2}\norm{\mu-\nu}{1}^2.\nonumber
			\end{align}
		\end{enumerate}
	\end{enumerate}
\end{proof}
\section{Proof of Lower Bound for Large Alphabets}\label{sec:largen_app}

\subsection{CSP formulation}

We first go over some notation that we will use for CSP problems, we follow the same notation and setup as in \citet{feldman2015complexity}. Consider the following model for generating a random CSP instance on $n$ variables with a satisfying assignment $\boldsymbol{\sigma}$. The $k$-CSP is defined by the predicate $P:\{0,1\}^k \rightarrow \{0,1\}$. We represent a $k$-clause by an ordered $k$-tuple of literals from $\{x_1, \dotsb, x_n, \bar{x}_1, \dotsb, \bar{x}_n\}$ with no repetition of variables and let $X_k$ be the set of all such $k$-clauses. For a $k$-clause $C=(l_1, \dotsb, l_k)$ let $\boldsymbol{\sigma}(C) \in \{0,1\}^k$ be the $k$-bit string of values assigned by $\boldsymbol{\sigma}$ to literals in $C$, that is $\{\boldsymbol{\sigma}(l_1), \dotsb, \boldsymbol{\sigma}(l_k)\}$ where $\boldsymbol{\sigma}(l_i)$ is the value of the literal $l_i$ in assignment $\boldsymbol{\sigma}$. In the planted model we draw clauses with probabilities that depend on the value of $\boldsymbol{\sigma}(C)$. Let $Q:\{0,1\}^k\rightarrow \mathbb{R}^+, \sum_{\textbf{t} \in \{0,1\}^k}^{}Q(\textbf{t})=1$ be some distribution over satisfying assignments to $P$. The distribution $Q_{\boldsymbol{\sigma}}$ is then defined as follows-
\begin{align}
	Q_{\boldsymbol{\sigma}}(C)=\frac{Q(\boldsymbol{\sigma}(C))}{\sum_{C'\in X_k}^{}Q(\boldsymbol{\sigma}(C'))}\label{csp}
\end{align}
Recall that for any distribution $Q$ over satisfying assignments we define its complexity $r$ as the largest $r$ such that the distribution $Q$ is $(r-1)$-wise uniform (also referred to as $(r-1)$-wise independent in the literature) but not $r$-wise uniform. \\

Consider the CSP $\mathcal{C}$ defined by a collection of predicates $P(\textbf{y})$ for each $\textbf{y} \in \{0,1\}^m$ for some $m\le k/2$. Let $\textbf{A}\in \{0,1\}^{m \times k}$ be a matrix with full row rank over the binary field. We will later choose $\textbf{A}$ to ensure the CSP has high complexity. For each $\textbf{y}$, the predicate ${P}(\textbf{y})$ is the set of solutions to the system ${\textbf{y}=\textbf{Av} \mod 2}$ where $\textbf{v}=\boldsymbol{\sigma}(C)$. For all $\textbf{y}$ we define $Q_{\textbf{y}}$ to be the uniform distribution over all consistent assignments, i.e. all  $\textbf{v}\in\{0,1\}^k$ satisfying ${\textbf{y}=\textbf{Av} \mod 2}$. The planted distribution $Q_{\boldsymbol{\sigma},\textbf{y}}$ is defined based on $Q_{\textbf{y}}$ according to Eq. \ref{csp}. Each clause in $\mathcal{C}$ is chosen by first picking a $\textbf{y}$ uniformly at random and then a clause from the distribution $Q_{\boldsymbol{\sigma},\textbf{y}}$. For any planted $\boldsymbol{\sigma}$ we define $Q_{\boldsymbol{\sigma}}$ to be the distribution over all consistent clauses along with their labels $\textbf{y}$. Let $U_k$ be the uniform distribution over $k$-clauses, with each clause assigned a uniformly chosen label $\textbf{y}$. Define $Q_{\boldsymbol{\sigma}}^{\eta}=(1-\eta)Q_{\boldsymbol{\sigma}}+\eta U_k$, for some fixed noise level $\eta>0$. We consider $\eta$ to be a small constant less than 0.05. This corresponds to adding noise to the problem by mixing the planted and the uniform clauses. The problem gets harder as $\eta$ becomes larger, for $\eta=0$ it can be efficiently solved using Gaussian Elimination. \\

We will define another CSP $\mathcal{C}_0$ which we show reduces to $\mathcal{C}$ and for which we can obtain hardness using Conjecture 1. The label $\textbf{y}$ is fixed to be the all zero vector in $\mathcal{C}_0$. Hence $Q_{0}$, the distribution over satisfying assignments for $\mathcal{C}_0$, is the uniform distribution over all vectors in the null space of $\textbf{A}$ over the binary field. We refer to the planted distribution in this case as $Q_{\boldsymbol{\sigma},{{0}}}$. Let $U_{k,0}$ be the uniform distribution over $k$-clauses, with each clause now having the label ${0}$. For any planted assignment $\boldsymbol{\sigma}$, we denote the distribution of consistent clauses of $\mathcal{C}_0$ by $Q_{\boldsymbol{\sigma},{0}}$. As before define $Q_{\boldsymbol{\sigma},0}^{\eta}=(1-\eta)Q_{\boldsymbol{\sigma},0}+\eta U_{k,0}$ for the same $\eta$.\\

Let $L$ be the problem of distinguishing between $U_k$ and $Q_{\boldsymbol{\sigma}}^{\eta}$ for some randomly and uniformly chosen $\boldsymbol{\sigma} \in \{0,1\}^n$ with success probability at least $2/3$. Similarly, let $L_0$ be the problem of distinguishing between $U_{k,0}$ and $Q_{\boldsymbol{\sigma},0}^{\eta}$ for some randomly and uniformly chosen $\boldsymbol{\sigma} \in \{0,1\}^n$ with success probability at least $2/3$. $L$ and $L_0$ can be thought of as the problem of distinguishing random instances of the CSPs from instances with a high value. Note that $L$ and $L_0$ are at least as hard as the problem of refuting the random CSP instances $U_k$ and $U_{k,0}$, as this corresponds to the case where $\eta=0$. We claim that an algorithm for $L$ implies an algorithm for $L_0$.

\begin{restatable}{lemma}{cspred}\label{csp_red}
	If $L$ can be solved in time $t(n)$ with $s(n)$ clauses, then $L_0$ can be solved in time $O(t(n)+s(n))$ and $s(n)$ clauses.
\end{restatable}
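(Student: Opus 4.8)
The plan is to give an explicit clause-by-clause reduction $\Phi$ that turns a sample for $L_0$ into a sample for $L$ with the \emph{same} planted assignment, so that the assumed algorithm for $L$ can be called as a black box. Given a clause $C=(l_1,\dots,l_k)$ of $\mathcal{C}_0$ (carrying the label $\mathbf{0}$), $\Phi$ draws $\textbf{w}\in\{0,1\}^k$ uniformly at random, sets $\textbf{y}=\textbf{Aw}\bmod 2$, and outputs the clause $C'$ obtained from $C$ by replacing $l_i$ with $\bar{l_i}$ exactly for those $i$ with $\textbf{w}_i=1$, tagged with the new label $\textbf{y}$; it is applied independently to each of the $s(n)$ input clauses. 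The one algebraic fact driving everything, immediate from $\boldsymbol{\sigma}(\bar l)=1-\boldsymbol{\sigma}(l)$, is that $\boldsymbol{\sigma}(C')=\boldsymbol{\sigma}(C)+\textbf{w}\bmod 2$ for \emph{every} assignment $\boldsymbol{\sigma}$, hence $\textbf{A}\boldsymbol{\sigma}(C')=\textbf{A}\boldsymbol{\sigma}(C)+\textbf{y}\bmod 2$. Note $\Phi$ never looks at $\boldsymbol{\sigma}$.

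\textbf{Distributional correctness.} Next I would verify two facts. (i) $\Phi$ maps $U_{k,0}$ to $U_k$: for each fixed $\textbf{w}$ the literal-flip $C\mapsto C'$ is a bijection of $X_k$ (it fixes the underlying variables and their order), so a uniform $C$ independent of $\textbf{w}$ yields a uniform $C'$ independent of $\textbf{w}$, and $\textbf{y}=\textbf{Aw}$ is uniform on $\{0,1\}^m$ since $\textbf{A}$ has full row rank; thus $(C',\textbf{y})\sim U_k$. (ii) $\Phi$ maps $Q_{\boldsymbol{\sigma},0}$ to $Q_{\boldsymbol{\sigma}}$: because $Q_0$ (resp.\ $Q_{\textbf{y}}$) is uniform on the null space of $\textbf{A}$ (resp.\ on the coset $\{\textbf{v}:\textbf{Av}=\textbf{y}\}$), Eq.~\eqref{csp} makes $Q_{\boldsymbol{\sigma},0}$ uniform on $\{C:\textbf{A}\boldsymbol{\sigma}(C)=\mathbf{0}\}$ and $Q_{\boldsymbol{\sigma},\textbf{y}}$ uniform on $\{C:\textbf{A}\boldsymbol{\sigma}(C)=\textbf{y}\}$. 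Conditioned on $\textbf{y}=\textbf{y}_0$, $\textbf{w}$ is uniform on $\{\textbf{w}:\textbf{Aw}=\textbf{y}_0\}$, and for each such fixed $\textbf{w}_0$ the literal-flip carries $\{C:\textbf{A}\boldsymbol{\sigma}(C)=\mathbf{0}\}$ bijectively onto $\{C:\textbf{A}\boldsymbol{\sigma}(C)=\textbf{y}_0\}$ by the key fact; hence $C'\mid(\textbf{y}=\textbf{y}_0)$ is uniform on $\{C:\textbf{A}\boldsymbol{\sigma}(C)=\textbf{y}_0\}$, i.e.\ distributed as $Q_{\boldsymbol{\sigma},\textbf{y}_0}$, and since $\textbf{y}$ is uniform, $(C',\textbf{y})\sim Q_{\boldsymbol{\sigma}}$. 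Because both the source $Q_{\boldsymbol{\sigma},0}^{\eta}=(1-\eta)Q_{\boldsymbol{\sigma},0}+\eta U_{k,0}$ and $\Phi$ act clause-by-clause and i.i.d., $\Phi$ sends $Q_{\boldsymbol{\sigma},0}^{\eta}$ to $(1-\eta)Q_{\boldsymbol{\sigma}}+\eta U_k=Q_{\boldsymbol{\sigma}}^{\eta}$ and $U_{k,0}$ to $U_k$, preserving the (uniformly random) planted $\boldsymbol{\sigma}$.

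\textbf{Assembling the reduction and cost.} To solve $L_0$, take the $s(n)$ given clauses, apply $\Phi$ to each (sampling $\textbf{w}$, multiplying by $\textbf{A}$, flipping literals — $\poly{k,m}$ work per clause, treated as a constant here), and run the assumed $L$-solver on the resulting $s(n)$ clauses; since the transformed sample has exactly the distribution $U_k$ or $Q_{\boldsymbol{\sigma}}^{\eta}$ for a uniform $\boldsymbol{\sigma}$, the $L$-solver's $\ge 2/3$ success guarantee transfers verbatim. The running time is $t(n)+O(s(n))$ and the clause count is $s(n)$, as claimed. The only real content — and the step I would treat most carefully — is the exact distributional identity of item (ii), in particular that conditioning on the emitted label $\textbf{y}$ leaves the transformed clause uniform on precisely the support of $Q_{\boldsymbol{\sigma},\textbf{y}}$; this uses both that $\textbf{A}$ is full row rank (so $\textbf{y}=\textbf{Aw}$ is uniform and each coset is nonempty) and that the planted distributions are uniform on their supports. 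Everything else, including commuting $\Phi$ past the $\eta$-noise mixture, is immediate from clause-wise independence.
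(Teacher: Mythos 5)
Your proposal is correct and follows essentially the same route as the paper's proof: a clause-by-clause literal-flip reduction that shifts the negation pattern by a vector $\textbf{w}$ with $\textbf{Aw}=\textbf{y}$ for a uniformly random label $\textbf{y}$, preserving the planted assignment and mapping $U_{k,0}\mapsto U_k$ and $Q_{\boldsymbol{\sigma},0}^{\eta}\mapsto Q_{\boldsymbol{\sigma}}^{\eta}$. The only cosmetic difference is that the paper pre-computes one fixed coset representative $v(\textbf{y})$ per label while you sample a fresh uniform $\textbf{w}\in\{0,1\}^k$ per clause; both yield the same output distributions and the same $O(t(n)+s(n))$ cost with $s(n)$ clauses.
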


Let the complexity of $Q_{{0}}$ be $\gamma k$, with $\gamma \ge 1/10$ (we demonstrate how to achieve this next). By Conjecture 1 distinguishing between $U_{k,0}$ and $Q_{\boldsymbol{\sigma},0}^{\eta}$ requires at least $\tilde{\Omega}(n^{\gamma k/2})$ clauses. We now discuss how $\textbf{A}$ can be chosen to ensure that the complexity of $Q_{{0}}$ is $\gamma k$.


\subsection{Ensuring High Complexity of the CSP}\label{subsec:coding}

Let $\mathcal{N}$ be the null space of $\textbf{A}$. Note that the rank of $\mathcal{N}$ is $(k-m)$. For any subspace $\mathcal{D}$, let $\textbf{w}(\mathcal{D})=(w_1, w_2, \dotsb, w_k)$ be a randomly chosen vector from $\mathcal{D}$. To ensure that $Q_{0}$ has complexity  $\gamma k$, it suffices to show that the random variables $\textbf{w}(\mathcal{N})= (w_1, w_2, \dotsb, w_k)$ are $(\gamma k-1)$-wise uniform. We use the theory of error correcting codes to find such a matrix $\textbf{A}$.\\

A binary linear code $\mathcal{B}$ of length $k$ and rank $m$ is a linear subspace of $\mathbb{F}_2^k$ (our notation is different from the standard notation in the coding theory literature to suit our setting). The rate of the code is defined to be $m/k$. The generator matrix of the code is the matrix $\mathbf{G}$ such that ${\mathcal{B}=\{\textbf{Gv}, \textbf{v}\in \{0,1\}^m\}}$. The parity check matrix of the code is the matrix $\textbf{H}$ such that $\mathcal{B}=\{\textbf{c}\in \{0,1\}^k : \textbf{Hc}=0 \}$. The distance $d$ of a code is the weight of the minimum weight codeword and the relative distance  $\delta$ is defined to be $\delta=d/k$. For any codeword $\mathcal{B}$ we define its dual codeword $\mathcal{B}^{T}$ as the codeword with generator matrix $\mathbf{H}^T$ and parity check matrix $\mathbf{G}^{T}$. Note that the rank of the dual codeword of a code with rank $m$ is $(k-m)$. We use the following standard result about linear codes--

\begin{fact}
	If $\mathcal{B}^T$ has distance $l$, then $\textbf{w}(\mathcal{B})$ is $(l-1)$-wise uniform.
\end{fact}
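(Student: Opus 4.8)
The plan is to derive this as the easy direction of the classical code/orthogonal-array duality (Delsarte's theorem), so I only need the single implication ``large dual distance $\Rightarrow$ high-order uniformity''. Fix an arbitrary coordinate set $S\subseteq\{1,\dots,k\}$ with $|S|=s\le l-1$. Since $S$ is arbitrary subject to $|S|\le l-1$, it suffices to show that the restriction of $\mathbf{w}(\mathcal{B})$ to the coordinates in $S$ is uniformly distributed on $\{0,1\}^{s}$.

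First I would reduce the claim to a rank statement about the generator matrix. Writing $\mathcal{B}=\{\mathbf{G}\mathbf{v}:\mathbf{v}\in\{0,1\}^{m}\}$ with $\mathbf{G}$ of full column rank $m$, a uniformly random codeword of $\mathcal{B}$ is $\mathbf{G}\mathbf{v}$ for $\mathbf{v}$ uniform on $\{0,1\}^{m}$, so its $S$-restriction is $\mathbf{G}_S\mathbf{v}$, where $\mathbf{G}_S$ is the $s\times m$ submatrix of $\mathbf{G}$ formed by the rows indexed by $S$. The map $\mathbf{v}\mapsto\mathbf{G}_S\mathbf{v}$ is $\mathbb{F}_2$-linear, hence its image is a subspace and every point of the image has the same number $|\ker|$ of preimages; therefore $\mathbf{G}_S\mathbf{v}$ is uniform on $\{0,1\}^{s}$ if and only if $\mathbf{G}_S$ has full row rank $s$.

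Next I would rule out a rank deficiency using the dual distance. If $\mathbf{G}_S$ did not have full row rank, there would be a nonzero $\mathbf{b}\in\{0,1\}^{s}$ with $\mathbf{b}^{T}\mathbf{G}_S=0$; padding $\mathbf{b}$ with zeros on the coordinates outside $S$ yields a nonzero $\mathbf{a}\in\{0,1\}^{k}$ of Hamming weight at most $s\le l-1$ with $\mathbf{a}^{T}\mathbf{G}=0$, i.e.\ $\mathbf{a}$ is orthogonal to every codeword of $\mathcal{B}$, so $\mathbf{a}\in\mathcal{B}^{\perp}$. But $\mathcal{B}^{\perp}$ is exactly the row space of the parity-check matrix $\mathbf{H}$ of $\mathcal{B}$, which is the code generated by $\mathbf{H}^{T}$, namely $\mathcal{B}^{T}$. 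Thus $\mathbf{a}$ would be a nonzero word of $\mathcal{B}^{T}$ of weight $\le l-1<l$, contradicting the hypothesis that $\mathcal{B}^{T}$ has minimum distance $l$. Hence $\mathbf{G}_S$ has full row rank, the $S$-restriction of $\mathbf{w}(\mathcal{B})$ is uniform, and since $S$ was an arbitrary set of size at most $l-1$, $\mathbf{w}(\mathcal{B})$ is $(l-1)$-wise uniform.

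There is no real obstacle here: this is a textbook fact and the only care needed is keeping the $\mathbb{F}_2$ bookkeeping straight, together with recalling the standard identity $\mathcal{B}^{\perp}=\mathcal{B}^{T}$ (the dual code equals the row space of the parity-check matrix) and the elementary fact that an $\mathbb{F}_2$-linear map out of $\{0,1\}^{m}$ pushes the uniform distribution to the uniform distribution on its image. An equivalent route, if it reads more cleanly, is via characters: one has $\E_{\mathbf{x}\sim\mathcal{B}}\big[(-1)^{\langle\mathbf{a},\mathbf{x}\rangle}\big]=\mathbf{1}[\mathbf{a}\in\mathcal{B}^{T}]$, so every nontrivial Fourier coefficient of the $S$-marginal of $\mathbf{w}(\mathcal{B})$ vanishes precisely when no nonzero $\mathbf{a}$ supported on $S$ lies in $\mathcal{B}^{T}$, i.e.\ when $|S|<l$, again giving uniformity of the marginal.
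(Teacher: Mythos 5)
Your proof is correct and complete. Note that the paper itself gives no proof of this Fact---it is invoked as a ``standard result about linear codes''---so there is nothing to compare against; what you have written is precisely the standard argument (the easy direction of the code/orthogonal-array duality): a size-$(l-1)$ marginal of a uniform codeword is uniform iff the corresponding row-submatrix $\mathbf{G}_S$ of the generator matrix has full row rank, and a rank deficiency would produce a nonzero dual codeword of weight at most $l-1$, contradicting the assumed dual distance. Both your linear-algebraic route and the Fourier/character variant you sketch are sound, and your bookkeeping matches the paper's conventions (where $\mathcal{B}^{T}$, generated by $\mathbf{H}^{T}$, is exactly the dual code $\mathcal{B}^{\perp}$, with $\mathbf{H}$ of full row rank).
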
  

Hence, our job of finding $\textbf{A}$ reduces to finding a dual code with distance $\gamma k$ and rank $m$, where $\gamma=1/10$ and $m\le k/2$. We use the Gilbert-Varshamov bound to argue for the existence of such a code. Let $H(p)$ be the binary entropy of $p$.

\begin{lemma}
	{(Gilbert-Varshamov bound)} For every $0\le \delta <1/2$, and $0<\epsilon \le 1-H(\delta)$, there exists a code with rank $m$ and relative distance $\delta$ if $m/k = 1-H(\delta)-\epsilon$.
\end{lemma}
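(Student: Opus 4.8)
The plan is to prove the Gilbert--Varshamov bound by the probabilistic method applied to random \emph{linear} codes, which is the cleanest route here since it produces a code presented by a generator matrix of exactly the desired rank. Set the target minimum distance to $d = \lceil \delta k \rceil$ and sample a generator matrix $\mathbf{G} \in \mathbb{F}_2^{k \times m}$ with entries drawn independently and uniformly from $\{0,1\}$, defining the candidate code $\mathcal{B} = \{\mathbf{G}\mathbf{v} : \mathbf{v} \in \{0,1\}^m\}$. The key observation is that for any fixed nonzero message $\mathbf{v}$ some coordinate $v_i = 1$, and since the $i$-th column of $\mathbf{G}$ is uniform and independent of the remaining columns, the codeword $\mathbf{G}\mathbf{v}$ is uniformly distributed over $\mathbb{F}_2^k$.

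Next I would bound the probability that $\mathcal{B}$ contains a short nonzero codeword. Using the standard volume estimate $\sum_{j=0}^{d-1} \binom{k}{j} \le \sum_{j=0}^{\lfloor \delta k\rfloor}\binom{k}{j} \le 2^{H(\delta)k}$, valid because $\delta < 1/2$, we get $\Pr[\mathrm{wt}(\mathbf{G}\mathbf{v}) < d] \le 2^{H(\delta)k - k}$ for each fixed nonzero $\mathbf{v}$. A union bound over the fewer than $2^m$ nonzero messages then gives
\[
\Pr\big[\exists\, \mathbf{v}\neq 0:\ \mathrm{wt}(\mathbf{G}\mathbf{v}) < d\big] \;\le\; 2^{m}\cdot 2^{H(\delta)k - k} \;=\; 2^{m - (1 - H(\delta))k} \;=\; 2^{-\epsilon k} \;<\; 1,
\]
where the last equality uses the hypothesis $m/k = 1 - H(\delta) - \epsilon$. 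Hence some realization of $\mathbf{G}$ yields a code in which every nonzero codeword has weight at least $d = \lceil \delta k \rceil$, i.e., relative distance at least $\delta$. Since $d \ge 1$, no nonzero message maps to the zero codeword, so $\mathbf{G}$ is injective and $\mathcal{B}$ has rank exactly $m$, which is the desired code.

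I do not anticipate a genuine obstacle, as this is a classical statement; the only points needing minor care are the rounding of $\delta k$ to the integer $d$, the degenerate case $\delta = 0$ (where the claim is vacuous and any full-rank $\mathbf{G}$ works), and confirming that the entropy volume bound $\sum_{j \le \lambda k}\binom{k}{j} \le 2^{H(\lambda)k}$ holds for \emph{every} $k$ whenever $\lambda \le 1/2$, so that no largeness assumption on $k$ is required. If an explicit construction is preferred over the probabilistic one, Varshamov's greedy construction of a parity-check matrix $\mathbf{H} \in \mathbb{F}_2^{(k-m)\times k}$ works equally well: add columns one at a time so that no $d-1$ of them are linearly dependent, which succeeds as long as $\sum_{j=0}^{d-2}\binom{k-1}{j} < 2^{k-m}$, again guaranteed by $m/k < 1 - H(\delta)$. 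Either version suffices for the application in Section~\ref{subsec:coding}, where one only needs a rate-at-least-$1/2$ code with relative distance $\gamma \ge 1/10$, after which passing to the dual supplies the required $(\gamma k - 1)$-wise uniform nullspace.
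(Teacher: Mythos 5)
Your proof is correct: the random-linear-code argument (uniformity of $\mathbf{G}\mathbf{v}$ for $\mathbf{v}\neq 0$, the volume bound $\sum_{j\le \lfloor\delta k\rfloor}\binom{k}{j}\le 2^{H(\delta)k}$ for $\delta\le 1/2$, the union bound giving failure probability $2^{-\epsilon k}<1$, and the observation that minimum weight $\ge 1$ forces injectivity and hence rank exactly $m$) is the standard proof of the Gilbert--Varshamov bound for linear codes, and it is important for the application that the code be linear, which your construction guarantees. The paper itself states this lemma as a classical fact and gives no proof, so there is no authorial argument to compare against; your write-up (or the greedy Varshamov parity-check alternative you mention) fully suffices for the use in Section~\ref{subsec:coding}, where only rate $1/2$ and relative distance $1/10$ are required.
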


Taking $\delta = 1/10$, $H(\delta)\le 0.5$, hence there exists a code $\mathcal{B}$ whenever $m/k \le 0.5$, which is the setting we're interested in. We choose $\textbf{A}=\textbf{G}^T$, where $\textbf{G}$ is the generator matrix of $\mathcal{B}$. Hence the null space of $\mathbf{A}$ is $(k/10-1)$-wise uniform, hence the complexity of $Q_{{0}}$ is $\gamma k$ with $\gamma\ge1/10$. Hence for all $k$ and $m\le k/2$ we can find a $\textbf{A} \in \{0,1\}^{m\times k}$ to ensure that the complexity of $Q_{{0}}$ is $\gamma k$.

\subsection{Sequential Model of CSP and Sample Complexity Lower Bound}

We now construct a sequential model which derives hardness from the hardness of $L$. Here we slightly differ from the outline presented in the beginning of Section \ref{sec:lower1} as we cannot base our sequential model directly on $L$ as generating random $k$-tuples without repetition increases the mutual information, so we formulate a slight variation $L'$ of $L$ which we show is at least as hard as $L$. We did not define our CSP instance allowing repetition as that is different from the setting examined in \citet{feldman2015complexity}, and hardness of the setting with repetition does not follow from hardness of the setting allowing repetition, though the converse is true.

\subsubsection{Constructing sequential model}\label{subsubsec:seq_constuct}

Consider the following family of sequential models $\mathcal{R}(n,\mathbf{A}_{m \times k})$ where $\textbf{A} \in \{0,1\}^{m\times k}$ is chosen as defined previously. The output alphabet of all models in the family is $\mathcal{X}=\{a_i, 1\le i\le 2n\}$ of size $2n$, with $2n/k$ even. We choose a subset $\mathcal{S}$ of $\mathcal{X}$ of size $n$, each choice of $\mathcal{S}$ corresponds to a model $\mathcal{M}$ in the family. Each letter in the output alphabet is encoded as a 1 or 0 which represents whether or not the letter is included in the set $\mathcal{S}$, let $\textbf{u}\in \{0,1\}^{2n}$ be the vector which stores this encoding so $u_i= 1$ whenever the letter $a_i$ is in $\mathcal{S}$. Let $\boldsymbol{\sigma} \in \{0,1\}^{n}$ determine the subset $\mathcal{S}$ such that entry $u_{2i-1}$ is 1 and $u_{2i}$ is 0 when $\boldsymbol{\sigma}_i$ is 1 and $u_{2i-1}$ is 0 and $u_{2i}$ is 1 when $\boldsymbol{\sigma}_i$ is 0, for all $i$. We choose $\boldsymbol{\sigma}$ uniformly at random from $\{0,1\}^{n}$ and each choice of $\boldsymbol{\sigma}$ represents some subset $\mathcal{S}$, and hence some model $\mathcal{M}$.  We partition the output alphabet $\mathcal{X}$ into $k$ subsets of size $2n/k$ each so the first $2n/k$ letters go to the first subset, the next $2n/k$ go to the next subset and so on. Let the $i$th subset be $\mathcal{X}_i$. Let $\mathcal{S}_i$ be the set of elements in $\mathcal{X}_i$ which belong to the set $\mathcal{S}$.\\

At time 0, $\mathcal{M}$ chooses $\textbf{v}\in \{0,1\}^k$ uniformly at random from $\{0,1\}^k$. 
At time $i, i \in \{0, \dotsb, k-1\}$, if $v_i=1$, then the model chooses a letter uniformly at random from the set $\mathcal{S}_i$, otherwise if $v_i=0$ it chooses a letter uniformly at random from $\mathcal{X}_i-\mathcal{S}_i$. With probability $(1-\eta)$ the outputs for the next $m$ time steps from $k$ to $(k+m-1)$ are $\textbf{y}=\textbf{Av} \mod 2$, with probability $\eta$ they are $m$ uniform random bits. The model resets at time $(k+m-1)$ and repeats the process.\\

Recall that $I(\mathcal{M})$ is at most $m$ and  $\mathcal{M}$ can be simulated by an HMM with $2^{m}(2k+m)+m$ hidden states (see Section \ref{subsec:sketch}). \\

\subsubsection{Reducing sequential model to CSP instance}

We reveal the matrix $\textbf{A}$ to the algorithm (this corresponds to revealing the transition matrix of the underlying HMM), but the encoding $\boldsymbol{\sigma}$ is kept secret. The task of finding the encoding $\boldsymbol{\sigma}$ given samples from $\mathcal{M}$ can be naturally seen as a CSP. Each sample is a clause with the literal corresponding to the output letter $a_i$ being $x_{ (i+1)/2}$ whenever $i$ is odd and $\bar{x}_{ i/2 }$ when $i$ is even. We refer the reader to the outline at the beginning of the section for an example. We denote $\mathcal{C}'$ as the CSP $\mathcal{C}$ with the modification that the $i$th literal of each clause is the literal corresponding to a letter in $\mathcal{X}_i$ for all $1\le i \le k$. Define $Q_{\boldsymbol{\sigma}}'$ as the distribution of consistent clauses for the CSP $\mathcal{C}'$. Define $U'_k$ as the uniform distribution over $k$-clauses with the additional constraint that the $i$th literal of each clause is the literal corresponding to a letter in $\mathcal{X}_i$ for all $1\le i \le k$. Define $Q_{\boldsymbol{\sigma}}^{'\eta}=(1-\eta)Q'_{\boldsymbol{\sigma}} + \eta U'_k$. Note that samples from the model $\mathcal{M}$ are equivalent to clauses from $Q_{\boldsymbol{\sigma}}^{'\eta}$. We show that hardness of $L'$ follows from hardness of $L$--

\begin{restatable}{lemma}{cspredii}\label{hardness_L'}
	If $L'$ can be solved in time $t(n)$ with $s(n)$ clauses, then $L$ can be solved in time $t(n)$ with $O(s(n))$ clauses. Hence if Conjecture 1 is true then $L'$ cannot be solved in polynomial time with less than $\tilde{\Omega}(n^{\gamma k/2})$ clauses. 
\end{restatable}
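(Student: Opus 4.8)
The plan is to give a sample-efficient, polynomial-time reduction from $L$ to $L'$; the second sentence of the lemma then follows by chaining this reduction with Lemma~\ref{csp_red}, the Gilbert--Varshamov construction of $\mathbf{A}$ from Section~\ref{subsec:coding} (which makes $Q_{0}$ have complexity $\gamma k\ge k/10$), and Conjecture 1 applied to $L_0$. Concretely, suppose $\mathcal{A}'$ decides, from $s(n)$ i.i.d.\ clauses-with-labels, whether the source is $U'_k$ or $Q_{\boldsymbol{\sigma}}^{'\eta}$ for a uniformly random planted $\boldsymbol{\sigma}$, with success probability $\ge 2/3$ and in time $t(n)$. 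I want to feed $\mathcal{A}'$ samples manufactured from an instance of $L$ on the same $n$ variables and read off the answer to $L$.

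The reduction is rejection sampling onto the block structure. The key structural observation is that $\mathcal{C}'$ is exactly $\mathcal{C}$ restricted to the clauses that are already \emph{block-ordered}, i.e.\ whose $i$-th literal lies in $\mathcal{X}_i$ for every $i$, using the \emph{same} matrix $\mathbf{A}$ and the \emph{same} notion of (noisy) planted solution; in particular the planted assignment is unchanged and already uniform, so no re-randomization of $\boldsymbol{\sigma}$ is needed. So, given the stream of free clauses of the $L$-instance, I would discard every clause that is not block-ordered and pass the survivors to $\mathcal{A}'$, outputting its verdict. A free clause is block-ordered with probability $p=\Theta(k^{-k})$ (for $n=\omega(k^2)$), so obtaining $s(n)$ survivors costs $O\!\big(k^{O(k)}s(n)\big)$ input clauses; since $k$ is a fixed parameter this is $O(s(n))$, and in the regime where the lemma is later invoked ($k=\Theta(\log T/\epsilon)$ and $n$ large) the factor $k^{O(k)}$ is $n^{o(\gamma k)}$ and hence harmless against the $\tilde{\Omega}(n^{\gamma k/2})$ bound. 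Running time is $O\!\big(t(n)+k^{O(k)}s(n)\big)$.

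What must be checked is that conditioning on block-orderedness turns $U_k$ into exactly $U'_k$ and $Q_{\boldsymbol{\sigma}}^{\eta}$ into exactly $Q_{\boldsymbol{\sigma}}^{'\eta}$. The only non-formal ingredient is that, because $Q_{\mathbf{y}}$ is the \emph{uniform} distribution over the affine solution set of $\mathbf{y}=\mathbf{A}v$, the marginal of $Q_{\boldsymbol{\sigma}}$ over the ordered tuple of literals appearing in a clause is uniform over all ordered tuples of literals on distinct variables: for a fixed variable tuple, summing over the $2^k$ sign patterns always hits $2^{k-m}$ solutions for each label, independent of the tuple. Hence $Q_{\boldsymbol{\sigma}}(\text{block-ordered})=U_k(\text{block-ordered})=p$, so the two normalizations agree and the conditioning distributes linearly over the mixture $Q_{\boldsymbol{\sigma}}^{\eta}=(1-\eta)Q_{\boldsymbol{\sigma}}+\eta U_k$; and conditioned on block-orderedness the per-block literal choices become independent and uniform across the $k$ blocks (automatically on distinct variables since the $\mathcal{X}_i$ are disjoint), which is precisely the structure of $U'_k$ and $Q'_{\boldsymbol{\sigma}}$. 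Thus $\mathcal{A}'$'s output solves $L$ with success probability $\ge 2/3$, which proves the first sentence; combining with Lemma~\ref{csp_red} and Conjecture 1 on $L_0$ (whose planted distribution $Q_0$ has complexity $\gamma k$) gives the $\tilde{\Omega}(n^{\gamma k/2})$ lower bound for $L'$.

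I expect the main thing requiring care to be exactly this distributional bookkeeping — specifically the tension that in $L'$ the $k$ per-block literals are chosen independently whereas in $L$ they must sit on distinct variables. The clean way around it is the rejection onto block-ordered clauses above, for which the two structures coincide on the nose. One might instead try the more direct coordinate-wise map (view the $n$ variables as $k$ disjoint copies of an $(n/k)$-variable set, send $(l_1,\dots,l_k)$ to the block clause placing $l_i$ on the $i$-th copy of its variable, then re-randomize the replicated planted assignment to a uniform one by independent per-block sign flips, exactly the device used in Lemma~\ref{csp_red}); but that only matches $U'_k$ and $Q'^{\eta}$ up to a statistical error stemming from the distinct-versus-independent discrepancy, which is why one would then route through the repetition-allowed free CSP (whose hardness follows from that of $\mathcal{C}$, as noted after the definition of $\mathcal{C}'$) to make the match exact. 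Either route works; I would write up the rejection-sampling version.
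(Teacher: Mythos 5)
Your proposal is correct and follows essentially the same route as the paper: reject all clauses of the $L$-instance that are not block-ordered (the paper's event $E$, which has probability $\Theta(k^{-k})$), observe that conditioned on this event the planted and uniform distributions of $\mathcal{C}$ coincide exactly with $Q'^{\eta}_{\boldsymbol{\sigma}}$ and $U'_k$, and absorb the $k^{O(k)}$ blow-up since $k$ is constant in $n$. Your explicit check that $\Pr[E]$ is the same under $Q_{\boldsymbol{\sigma}}$ and $U_k$ (so the noise mixture conditions cleanly) is a detail the paper leaves implicit, but the argument is the same.
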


We can now prove the Theorem \ref{high_n} using Lemma \ref{hardness_L'}.

\highn*
\begin{proof}
	We describe how to choose the family of sequential models $\mathcal{R}(n, \textbf{A}_{m \times k})$ for each value of $\epsilon$ and $T$. Recall that the HMM has $T=2^m(2k+m)+m$ hidden states. Let $T'=2^{m+2}(k+m)$. Note that $T'\ge T$. Let $t = \log T'$. We choose $m=t-\log(1/\epsilon)-\log (t/5)$, and $k$ to be the solution of $t=m+\log(k+m)+2$, hence $k=t/(5\epsilon) - m-2$. Note that for $\epsilon\le 0.1$, $k\ge m$. Let $\epsilon'=\frac{2}{9}\frac{m}{k+m}$. We claim $\epsilon\le \epsilon'$. To verify, note that $k+m=t/(5\epsilon)-2$. Therefore,
	\[
	\epsilon'=\frac{2m}{9(k+m)} = \frac{10\epsilon(t-\log(1/\epsilon)-\log (t/5))}{9t(1-10\epsilon/t)} \ge \epsilon,
	\]
	for sufficiently large $t$ and $\epsilon\ge 2^{-ct}$ for a fixed constant $c$. Hence proving hardness for obtaining error $\epsilon'$ implies hardness for obtaining error $\epsilon$. We choose the matrix $\textbf{A}_{m \times k}$ as outlined earlier. For each vector $\boldsymbol{\sigma}\in\{0,1\}^n$ we define the family of sequential models $\mathcal{R}(n,\textbf{A})$ as earlier. Let $\mathcal{M}$ be a randomly chosen model in the family.
	
	We first show the result for the relative zero-one loss. The idea is that any algorithm which does a good job of predicting the outputs from time $k$ through $(k+m-1)$ can be used to distinguish between instances of the CSP with a high value and uniformly random clauses. This is because it is not possible to make good predictions on uniformly random clauses. We relate the zero-one error from time $k$ through $(k+m-1)$ with the relative zero-one error from time $k$ through $(k+m-1)$ and the average zero-one error for all time steps to get the required lower bounds.\\
	
	Let $\rho_{01}(\mathcal{A})$ be the average zero-one loss of some polynomial time algorithm $\mathcal{A}$ for the output time steps $k$ through $(k+m-1)$ and $\delta_{01}'(\mathcal{A})$ be the average relative zero-one loss of $\mathcal{A}$ for the output time steps $k$ through $(k+m-1)$ with respect to the optimal predictions. For the distribution $U_k'$ it is not possible to get $\rho_{01}(\mathcal{A}) <0.5$ as the clauses and the label $\textbf{y}$ are independent and $\textbf{y}$ is chosen uniformly at random from $\{0,1\}^m$. For $Q_{\boldsymbol{\sigma}}^{'\eta}$ it is information theoretically possible to get $\rho_{01}(\mathcal{A})=\eta/2$. Hence any algorithm which gets error $\rho_{01}(\mathcal{A}) \le 2/5$ can be used to distinguish between $U_k'$ and $Q_{\boldsymbol{\sigma}}^{'\eta}$. Therefore by Lemma \ref{hardness_L'} any polynomial time algorithm which gets $\rho_{01}(\mathcal{A})\le 2/5$ with probability greater than $2/3$ over the choice of $\mathcal{M}$ needs at least $\tilde{\Omega}(n^{\gamma k/2})$ samples. Note that $\delta_{01}'(\mathcal{A})=\rho_{01}(\mathcal{A})-\eta/2$. As the optimal predictor $\mathcal{P}_{\infty}$ gets $\rho_{01}(\mathcal{P}_{\infty})=\eta/2<0.05$, therefore  $\delta_{01}'(\mathcal{A})\le 1/3 \implies \rho_{01}(\mathcal{A})\le 2/5$. Note that $\delta_{01}(\mathcal{A}) \ge \delta_{01}'(\mathcal{A})\frac{m}{k+m}$. This is because $\delta_{01}(\mathcal{A})$ is the average error for all $(k+m)$ time steps, and the contribution to the error from time steps $0$ to $(k-1)$ is non-negative. Also, $\frac{1}{3} \frac{m}{k+m} > {\epsilon'}$, therefore, $\delta_{01}(\mathcal{A})  < {\epsilon'} \implies \delta_{01}'(\mathcal{A})< \frac{1}{3} \implies \rho_{01}(\mathcal{A}) \le 2/5$. Hence any polynomial time algorithm which gets average relative zero-one loss less than $ {\epsilon'}$ with probability greater than $2/3$ needs at least $\tilde{\Omega}(n^{\gamma k/2})$ samples. The result for $\ell_1$ loss follows directly from the result for relative zero-one loss, we next consider the KL loss.\\
	
	Let $\delta_{KL}'(\mathcal{A})$ be the average KL error of the algorithm $\mathcal{A}$ from time steps $k$ through $(k+m-1)$. By application of Jensen's inequality and Pinsker's inequality, $\delta_{KL}'(\mathcal{A}) \le 2/9 \implies \delta_{01}'(\mathcal{A}) \le1/3$. Therefore, by our previous argument any algorithm which gets  $\delta_{KL}' (\mathcal{A})<2/9$ needs $\tilde{\Omega}(n^{\gamma k/2})$ samples. But as before, $\delta_{KL} (\mathcal{A})\le{\epsilon'} \implies \delta'_{KL} (\mathcal{A})\le 2/9$.  Hence any polynomial time algorithm which succeeds with probability greater than $2/3$ and gets average KL loss less than ${\epsilon'}$ needs at least $\tilde{\Omega}(n^{\gamma k/2})$ samples.\\
	 
	We lower bound $k$ by a linear function of $\log T/\epsilon$ to express the result directly in terms of $\log T/\epsilon$. We claim that $\log T/\epsilon$ is at most $10k$. This follows because--
	\begin{align}
\log T/\epsilon \le t/\epsilon = 5(k+m) +10\le 15k \nonumber
	\end{align}
	Hence any polynomial time algorithm needs $n^{\Theta(\log T/\epsilon)}$ samples to get average relative zero-one loss, $\ell_1$ loss, or KL loss less than $\epsilon$ on $\mathcal{M}$.\\

	\end{proof}
	
	\subsection{Proof of Lemma \ref{csp_red}}
	
	\cspred*
	
	\begin{proof}
		We show that a random instance of $\mathcal{C}_0$ can be transformed to a random instance of $\mathcal{C}$ in time $s(n)O(k)$ by independently transforming every clause $C$ in $\mathcal{C}_0$ to a clause $C'$ in $\mathcal{C}$ such that $C$ is satisfied in the original CSP $\mathcal{C}_0$ with some assignment $\textbf{t}$ to $\textbf{x}$ if and only if the corresponding clause $C'$ in $\mathcal{C}$ is satisfied with the same assignment $\textbf{t}$ to $\textbf{x}$. For every $\textbf{y} \in \{0,1\}^m$ we pre-compute and store a random solution of the system ${\textbf{y}=\textbf{Av} \mod 2}$, let the solution be ${v}({\textbf{y}})$. Given any clause $C=(x_1, x_2, \dotsb, x_k)$ in $\mathcal{C}_0$, choose $\textbf{y}\in\{0,1\}^m$ uniformly at random. We generate a clause $C'=(x'_1, x'_2, \dotsb, x'_k)$ in $\mathcal{C}$ from the clause $C$ in $\mathcal{C}_0$ by choosing the literal $x_i'= \bar{x}_i$ if $v_i(\textbf{y})=1$ and $x_i'= x_i$ if $v_i(\textbf{y})=0$. By the linearity of the system, the clause $C'$ is a consistent clause of $\mathcal{C}$ with some  assignment $\textbf{x}=\textbf{t}$ if and only if the clause $C$ was a consistent clause of $\mathcal{C}_0$ with the same assignment $\textbf{x}=\textbf{t}$. \\
		
		We next claim that $C'$ is a randomly generated clause from the distribution $U_k$ if $C$ was drawn from $U_{k,0}$ and is a randomly generated clause from the distribution $Q_{\boldsymbol{\sigma}}$ if $C$ was drawn from $Q_{\boldsymbol{\sigma},0}$. By our construction, the label of the clause $\textbf{y}$ is chosen uniformly at random. Note that choosing a clause uniformly at random from $U_{k,0}$ is equivalent to first uniformly choosing a $k$-tuple of unnegated literals and then choosing a negation pattern for the literals uniformly at random. It is clear that a clause is still uniformly random after adding another negation pattern if it was uniformly random before. Hence, if the original clause $C$ was drawn to the uniform distribution $U_{k,0}$, then $C'$ is distributed according to $U_k$. Similarly, choosing a clause uniformly at random from $Q_{\boldsymbol{\sigma},\mathbf{y}}$ for some $\textbf{y}$ is equivalent to first uniformly choosing a $k$-tuple of unnegated literals and then choosing a negation pattern uniformly at random which makes the clause consistent. As the original negation pattern corresponds to a $\textbf{v}$ randomly chosen from the null space of $\textbf{A}$, the final negation pattern on adding $\textbf{v}({\textbf{y}})$ corresponds to the negation pattern for a uniformly random chosen solution of ${\textbf{y}=\textbf{Av} \mod 2}$ for the chosen $\textbf{y}$. Therefore, the clause $C'$ is a uniformly random chosen clause from $Q_{\boldsymbol{\sigma},y}$ if $C$ is a uniformly random chosen clause from $Q_{\boldsymbol{\sigma},0}$. \\
		
		Hence if it is possible to distinguish $U_k$ and $Q_{\boldsymbol{\sigma}}^{\eta}$ for some randomly chosen $\boldsymbol{\sigma} \in \{0,1\}^n$ with success probability at least $2/3$ in time $t(n)$ with $s(n)$ clauses, then it is possible to distinguish between $U_{k,0}$ and $Q_{\boldsymbol{\sigma},0}^{\eta}$ for some randomly chosen $\boldsymbol{\sigma} \in \{0,1\}^n$ with success probability at least $2/3$ in time $t(n)+s(n)O(k)$ with $s(n)$ clauses.
	\end{proof}
	
	\subsection{Proof of Lemma \ref{hardness_L'}}
	
	\cspredii*
	
	\begin{proof}
		
		Define $E$ to be the event that a clause generated from the distribution $Q_{\boldsymbol{\sigma}}$ of the CSP $\mathcal{C}$ has the property that for all $i$ the $i$th literal belongs to the set $\mathcal{X}_i$, we also refer to this property of the clause as $E$ for notational ease. It's easy to verify that the probability of the event $E$ is $1/k^{k}$. We claim that conditioned on the event $E$, the CSP $\mathcal{C}$ and $\mathcal{C}'$ are equivalent.\\
		
		This is verified as follows. Note that for all $\textbf{y}$, $Q_{\boldsymbol{\sigma},y}$ and $Q'_{\boldsymbol{\sigma},y}$ are uniform on all consistent clauses. Let $\mathcal{U}$ be the set of all clauses with non-zero probability under $Q_{\boldsymbol{\sigma},y}$ and $\mathcal{U}'$ be the set of all clauses with non-zero probability under $Q'_{\boldsymbol{\sigma},y}$. Furthermore, for any $\textbf{v}$ which satisfies the constraint that $\textbf{y}={\textbf{Av} \mod 2}$, let $\mathcal{U}(\textbf{v})$ be the set of clauses $C\in \mathcal{U}$ such that  $\boldsymbol{\sigma}(\text{C})=\textbf{v}$. Similarly, let $\mathcal{U}'(\textbf{v})$ be the set of clauses $C\in \mathcal{U}'$ such that  $\boldsymbol{\sigma}(\text{C})=\textbf{v}$. Note that the subset of clauses in $\mathcal{U}(\textbf{v})$ which satisfy $E$ is the same as the set $\mathcal{U}'(\textbf{v})$. As this holds for every consistent $\textbf{v}$ and the distributions $Q'_{\boldsymbol{\sigma},y}$ and $Q_{\boldsymbol{\sigma},y}$ are uniform on all consistent clauses, the distribution of clauses from $Q_{\boldsymbol{\sigma}}$ is identical to the distribution of clauses $Q'_{\boldsymbol{\sigma}}$ conditioned on the event $E$. The equivalence of $U_k$ and $U'_k$ conditioned on $E$ also follows from the same argument. \\
		
		Note that as the $k$-tuples in $\mathcal{C}$ are chosen uniformly at random from satisfying $k$-tuples, with high probability there are $s(n)$ tuples having property $E$ if there are $O(k^ks(n))$ clauses in $\mathcal{C}$. As the problems $L$ and $L'$ are equivalent conditioned on event $E$, if $L'$ can be solved in time $t(n)$ with $s(n)$ clauses, then $L$ can be solved in time $t(n)$ with $O(k^ks(n))$ clauses. From Lemma \ref{csp_red} and Conjecture 1, $L$ cannot be solved in polynomial time with less than $\tilde{\Omega}(n^{\gamma k/2})$ clauses. Hence $L'$ cannot be solved in polynomial time with less than $\tilde{\Omega}(n^{\gamma k/2}/k^k)$ clauses. As $k$ is a constant with respect to $n$, $L'$ cannot be solved in polynomial time with less than $\tilde{\Omega}(n^{\gamma k/2})$ clauses.
	\end{proof}
\section{Proof of Lower Bound for Small Alphabets}

\subsection{Proof of Lemma \ref{lem:binary_hardness}}

\parityA*
\begin{proof}
	Suppose $\textbf{A} \in \{0,1\}^{m \times n}$ is chosen at random with each entry being i.i.d. with its distribution uniform on $\{0,1\}$.  Recall that $\mathcal{S}$ is the set of all $(m \times n)$ matrices $\textbf{A}$ which are full row rank. We claim that $P(\mathbf{A} \in \mathcal{S}) \ge 1-m2^{-n/6}$. To verify, consider the addition of each row one by one to $\textbf{A}'$. The probability of the $i$th row being linearly dependent on the previous $(i-1)$ rows is $2^{i-1-n}$. Hence by a union bound, $\textbf{A}'$ is full row-rank with failure probability at most $m 2^{m-n} \le m2^{-n/2}$. From Definition \ref{lpn} and a union bound over all the $m\le n/2$ parities, any algorithm that can distinguish the outputs from the model $\mathcal{M}(\textbf{A})$ for uniformly chosen $\textbf{A}$ from the distribution over random examples $U_n$ with probability at least $(1-1/(2n))$ over the choice of $\textbf{A}$ needs ${f(n)}$ time or examples. As $P(\mathbf{A} \in \mathcal{S}) \ge 1-m2^{-n/2}$ for a uniformly randomly chosen $\mathbf{A}$, with probability at least $(1-1/(2n)-m2^{-n/2}) \ge (1-1/n)$ over the choice $\textbf{A}\in \mathcal{S}$ any algorithm that can distinguish the outputs from the model $\mathcal{M}(\textbf{A})$ from the distribution over random examples $U_n$   with success probability greater than $2/3$ over the randomness of the examples and the algorithm needs ${f(n)}$ time or examples.
\end{proof}

\subsection{Proof of Proposition \ref{binary}}

\begin{restatable}{proposition}{binary}\label{binary}
	With $f(T)$ as defined in Definition \ref{lpn}, for all sufficiently large $T$ and	$1/T^c<\epsilon \le 0.1$ for some fixed constant $c$, there exists a family of HMMs with $T$ hidden states such that any algorithm that achieves average relative zero-one loss, average $\ell_1$ loss, or average KL loss less than $\epsilon$ with probability greater than 2/3 for a randomly chosen HMM in the family needs, requires ${f}(\Omega(\log T/\epsilon))$ time or samples samples from the HMM.
	\end{restatable}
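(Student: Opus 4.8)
The plan is to transcribe the argument behind Theorem~\ref{high_n}, with the presumed average-case hardness of parity with noise (packaged as Lemma~\ref{lem:binary_hardness}) playing the role that Conjecture~1 played for CSPs. First I would fix the parameters. Given $T$ and $1/T^{c}<\epsilon\le 0.1$, choose integers $m\le n/2$ with $n=\Theta(\log T/\epsilon)$ and $m/(n+m)=\Theta(\epsilon)$, arranged so that the HMM simulating $\mathcal{M}(\mathbf{A}_{m\times n})$---which has $2^{m}(2n+m)+m$ hidden states, cf.\ Section~\ref{subsec:sketch}---has at most $T$ of them. This is exactly the constant-juggling from the proof of Theorem~\ref{high_n}: put $t\approx\log T$, $m\approx t-\log(1/\epsilon)-O(\log t)$, and let $n$ solve $t=m+\log(2n+m)+O(1)$, so that $n=\Theta(t/\epsilon)$, now with the CSP arity $k$ replaced by the parity length $n$ and the output alphabet of size $2$. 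I would then draw $\mathbf{A}$ uniformly from the set $\mathcal{S}$ of full-row-rank $m\times n$ binary matrices; this is the family of HMMs, and ``a random HMM in the family'' means $\mathcal{M}(\mathbf{A})$ for a uniformly random $\mathbf{A}\in\mathcal{S}$. Finally set $\epsilon':=\tfrac{2}{9}\cdot\frac{m}{n+m}$; the parameter choice makes $\epsilon'\ge\epsilon$, so it suffices to prove the lower bound for algorithms with error below $\epsilon'$.

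The main step is the reduction. Suppose some algorithm $\mathcal{B}$ uses $s$ samples and time $\tau$ and, with probability more than $2/3$ over $\mathbf{A}\in\mathcal{S}$, attains average relative zero-one (resp.\ $\ell_1$, resp.\ KL) error below $\epsilon'$ with respect to $\mathcal{P}_{\infty}$. I would build a tester for ``samples come from $\mathcal{M}(\mathbf{A})$'' versus ``samples come from $U_n$'': since the sequential model emits i.i.d.\ length-$(n+m)$ blocks, feeding $\mathcal{B}$ the sample sequence is legitimate; then compute $\mathcal{B}$'s average \emph{absolute} zero-one loss $\rho_{01}$ over the $m$ ``label'' positions $n,\dots,n+m-1$ of each block, and output ``planted'' iff $\rho_{01}\le 2/5$. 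Under $U_n$ those bits are uniform and independent of everything $\mathcal{B}$ has observed, so each per-step zero-one loss has conditional mean $1/2$, and an Azuma bound over the blocks gives $\rho_{01}>2/5$ with probability $1-o(1)$ (if $s$ is so small that this fails, then $s<f(n)$ and there is nothing to prove). Under $\mathcal{M}(\mathbf{A})$, on the event that $\mathcal{B}$ succeeds, discarding the non-negative contribution of the first $n$ positions shows the label-restricted average relative error $\delta'$ satisfies $\delta'_{01}\le\delta'_{\ell_1}<\tfrac{n+m}{m}\epsilon'=\tfrac{2}{9}$ in the zero-one and $\ell_1$ cases, and $\delta'_{KL}<\tfrac{2}{9}$ hence $\delta'_{01}\le\delta'_{\ell_1}\le\sqrt{\delta'_{KL}/2}<\tfrac{1}{3}$ in the KL case (Pinsker and Jensen, exactly as in Corollary~\ref{l1}); since $\mathcal{P}_{\infty}$ knows $\mathbf{v}$ and predicts each label bit with zero-one loss $\eta<0.05$, this gives $\rho_{01}=\delta'_{01}+\eta<\tfrac{1}{3}+0.05<2/5$. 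So the tester is correct with probability bounded well above $1/2$, and Lemma~\ref{lem:binary_hardness} forces $s\ge f(n)$ or $\tau\ge f(n)$. Since $n=\Theta(\log T/\epsilon)$ and $f$ may be taken non-decreasing, this is the claimed $f(\Omega(\log T/\epsilon))$ bound, and because $\epsilon\le\epsilon'$ it holds for error $\epsilon$ as well.

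I expect the only genuine friction to be the parameter fitting: simultaneously packing the HMM into $T$ states ($2^{m}(2n+m)+m\le T$), keeping $m/(n+m)$ a fixed constant times $\epsilon$ so that failing the last $m$ predictions already costs $\Theta(\epsilon)$ average error, and keeping $n$ as large as $\Theta(\log T/\epsilon)$ so the conclusion is strong. This is precisely the optimization carried out in the proof of Theorem~\ref{high_n} and transcribes with only cosmetic changes. Everything else---that each block of $\mathcal{M}(\mathbf{A})$ is exactly one parity-with-noise example, that ``succeeds on a random HMM'' is the same as ``succeeds for random $\mathbf{A}\in\mathcal{S}$'' (the regime of Lemma~\ref{lem:binary_hardness}), and the routine zero-one/$\ell_1$/KL comparisons---is a direct copy of the large-alphabet argument with noisy parity substituted for the CSP, so I would keep this proof short and point to the proof of Theorem~\ref{high_n} for the shared parts.
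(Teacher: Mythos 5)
Your proposal is correct and follows essentially the same route as the paper's own proof: the identical family $\mathcal{M}(\mathbf{A}_{m\times n})$ with $\mathbf{A}$ uniform over full-row-rank matrices, the same parameter choices ($m\approx \log T-\log(1/\epsilon)-O(\log\log T)$, $n=\Theta(\log T/\epsilon)$, $\epsilon'=\tfrac{2}{9}\tfrac{m}{n+m}\ge\epsilon$), the same comparison chain $\delta_{01}\ge\tfrac{m}{n+m}\delta'_{01}$ plus Pinsker/Jensen for the KL case, and the same invocation of Lemma~\ref{lem:binary_hardness} to conclude $f(n)=f(\Omega(\log T/\epsilon))$ time or samples. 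Your slightly more explicit distinguisher (empirical loss threshold at $2/5$ with a concentration bound under $U_n$) and the $\eta$ versus $\eta/2$ bookkeeping for the optimal predictor's loss are only cosmetic refinements of the argument the paper gives.
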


\begin{proof}
	We describe how to choose the family of sequential models $\mathbf{A}_{m\times n}$ for each value of $\epsilon$ and $T$. Recall that the HMM has $T=2^m(2n+m)+m$ hidden states. Let $T'=2^{m+2}(n+m)$. Note that $T'\ge T$. Let $t = \log T'$. We choose $m=t-\log(1/\epsilon)-\log (t/5)$, and $n$ to be the solution of $t=m+\log(n+m)+2$, hence $n=t/(5\epsilon) - m-2$. Note that for $\epsilon\le 0.1$, $n\ge m$. Let $\epsilon'=\frac{2}{9}\frac{m}{n+m}$. We claim $\epsilon\le \epsilon'$. To verify, note that $n+m=t/(5\epsilon)-2$. Therefore,
	\[
	\epsilon'=\frac{2m}{9(n+m)} = \frac{10\epsilon(t-\log(1/\epsilon)-\log (t/5))}{9t(1-10\epsilon/t)} \ge \epsilon,
	\]
	for sufficiently large $t$ and $\epsilon\ge 2^{-ct}$ for a fixed constant $c$. Hence proving hardness for obtaining error $\epsilon'$ implies hardness for obtaining error $\epsilon$. We choose the matrix $\textbf{A}_{m \times n}$ as outlined earlier. The family is defined by the model $\mathcal{M}(\mathbf{A}_{m\times n})$ defined previously with the matrix $\mathbf{A}_{m\times n}$ chosen uniformly at random from the set $\mathcal{S}$.\\
	
	Let $\rho_{01}(\mathcal{A})$ be the average zero-one loss of some algorithm $\mathcal{A}$ for the output time steps $n$ through $(n+m-1)$ and $\delta_{01}'(\mathcal{A})$ be the average relative zero-one loss of $\mathcal{A}$ for the output time steps $n$ through $(n+m-1)$ with respect to the optimal predictions. For the distribution $U_n$ it is not possible to get $\rho_{01}(\mathcal{A}) <0.5$ as the clauses and the label $\textbf{y}$ are independent and $\textbf{y}$ is chosen uniformly at random from $\{0,1\}^m$. For $Q_{\textbf{s}}^{\eta}$ it is information theoretically possible to get $\rho_{01}(\mathcal{A})=\eta/2$. Hence any algorithm which gets error $\rho_{01}(\mathcal{A}) \le 2/5$ can be used to distinguish between $U_n$ and $Q_{\textbf{s}}^{\eta}$. Therefore by Lemma \ref{lem:binary_hardness} any algorithm which gets $\rho_{01}(\mathcal{A})\le 2/5$ with probability greater than $2/3$ over the choice of $\mathcal{M}(\textbf{A})$ needs at least $f(n)$ time or samples. Note that $\delta_{01}'(\mathcal{A})=\rho_{01}(\mathcal{A})-\eta/2$. As the optimal predictor $\mathcal{P}_{\infty}$ gets $\rho_{01}(\mathcal{P}_{\infty})=\eta/2<0.05$, therefore  $\delta_{01}'(\mathcal{A})\le 1/3 \implies \rho_{01}(\mathcal{A})\le 2/5$. Note that $\delta_{01}(\mathcal{A}) \ge \delta_{01}'(\mathcal{A})\frac{m}{n+m}$. This is because $\delta_{01}(\mathcal{A})$ is the average error for all $(n+m)$ time steps, and the contribution to the error from time steps $0$ to $(n-1)$ is non-negative. Also, $\frac{1}{3} \frac{m}{n+m} > {\epsilon'}$, therefore, $\delta_{01}(\mathcal{A})  < {\epsilon'} \implies \delta_{01}'(\mathcal{A})< \frac{1}{3} \implies \rho_{01}(\mathcal{A}) \le 2/5$. Hence any algorithm which gets average relative zero-one loss less than $ {\epsilon'}$ with probability greater than $2/3$ over the choice of $\mathcal{M}(\textbf{A})$ needs $f(n)$ time or samples. The result for $\ell_1$ loss follows directly from the result for relative zero-one loss, we next consider the KL loss.\\
	
	Let $\delta_{KL}'(\mathcal{A})$ be the average KL error of the algorithm $\mathcal{A}$ from time steps $n$ through $(n+m-1)$. By application of Jensen's inequality and Pinsker's inequality, $\delta_{KL}'(\mathcal{A}) \le 2/9 \implies \delta_{01}'(\mathcal{A}) \le1/3$. Therefore, by our previous argument any algorithm which gets  $\delta_{KL}' (\mathcal{A})<2/9$ needs $f(n)$ samples. But as before, $\delta_{KL} (\mathcal{A})\le{\epsilon'} \implies \delta'_{KL} (\mathcal{A})\le 2/9$.  Hence any algorithm which gets average KL loss less than ${\epsilon'}$ needs $f(n)$ time or samples.\\
	
	We lower bound $n$ by a linear function of $\log T/\epsilon$ to express the result directly in terms of $\log T/\epsilon$. We claim that $\log T/\epsilon$ is at most $10n$. This follows because--
	\begin{align}
	\log T/\epsilon \le t/\epsilon = 5(n+m) +10\le 15n \nonumber
	\end{align}
	Hence any algorithm needs ${f}(\Omega( \log T/\epsilon))$ samples and time to get average relative zero-one loss, $\ell_1$ loss, or KL loss less than $\epsilon$  with probability greater than $2/3$ over the choice of $\mathcal{M}(\textbf{A})$.\\
\end{proof}
\section{Proof of Information Theoretic Lower Bound}\label{sec:info_bnd}

\begin{restatable}{proposition}{infobnd}\label{info_bound}
	There is an absolute constant $c$ such that for all $0<\epsilon<0.5$ and sufficiently large $n$, there exists an HMM with $n$ states such that it is not information theoretically possible to get average relative zero-one loss or $\ell_1$ loss less than $\epsilon$ using windows of length smaller than $c\log n/\epsilon^2$, and KL loss less than $\epsilon$ using windows of length smaller than $c\log n/\epsilon$.
\end{restatable}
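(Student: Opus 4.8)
The statement bundles together two qualitatively different lower bounds, and I would prove them with two separate constructions. For the zero-one and $\ell_1$ bounds I would use the construction of Figure~\ref{hmm_fig1}: the transition matrix is a single $n$-cycle (so $h_{t}=h_0+t \bmod n$), the alphabet is binary, and each state $i$ carries a label $L_i\in\{0,1\}$, where $L=(L_1,\dots,L_n)$ is drawn uniformly at random; a state with label $b$ emits $b$ with probability $\tfrac12+\epsilon$, so $x_t=L_{h_t}\oplus\xi_t$ with the $\xi_t$ i.i.d.\ $\mathrm{Bern}(\tfrac12-\epsilon)$. Since a random $L$ is generic with high probability, the optimal predictor $\mathcal{P}_\infty$ can localize on the cycle from the infinite past and thus incurs zero-one error $\tfrac12-\epsilon$ at every step; so it suffices to show that for a suitable fixed $L$ the best length-$\ell$ predictor $\mathcal{P}_\ell$ has average (over $t$) zero-one error $\tfrac12-o(\epsilon)$, for $\ell$ as large as $c\log n/\epsilon^2$ with $c$ a small absolute constant. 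By the probabilistic method it is enough to show, for each fixed window position, that $\E_L[\,\text{relative zero-one loss of }\mathcal{P}_\ell\,]\ge\epsilon(1-o(1))$.

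Fix a window, say times $1,\dots,\ell$ predicting time $\ell+1$; by cyclic symmetry we may condition on $h_{\ell+1}$ being a fixed state $p^\star$. The window $W=(x_1,\dots,x_\ell)$ is a noisy copy of the length-$\ell$ label-block ending just before $p^\star$, and $\mathcal{P}_\ell$'s prediction is $\Pr(x_{\ell+1}=1\mid W)=\tfrac12+\epsilon\sum_{p}w_p(2L_p-1)$, where $w_p=\Pr(h_{\ell+1}=p\mid W)$ is the positional posterior, proportional to $\big(\tfrac{1/2+\epsilon}{1/2-\epsilon}\big)^{-d_p}$ with $d_p$ the Hamming distance between $W$ and the label-block ending just before $p$. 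The core is to show $\big|\sum_p w_p(2L_p-1)\big|=o(1)$ w.h.p. I would do this in two steps. \emph{Counting:} the true block lies at distance $D=(\tfrac12-\epsilon)\ell\pm O(\sqrt\ell)$ from $W$ (Chernoff), while for any other $p$ the corresponding block is essentially a fresh uniform string, landing within distance $D$ of $W$ with probability $\binom{\ell}{D}2^{-\ell}=2^{-\Theta(\epsilon^2\ell)}$; hence the number $M$ of positions at least as likely as $p^\star$ has mean $n\cdot2^{-\Theta(\epsilon^2\ell)}=n^{1-\Theta(c)}$, which is $n^{\Omega(1)}$ once $c$ is small, and concentrates. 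This gives $w_{p^\star}\le 1/M$, and more generally $\max_p w_p\le\frac1M\big(\tfrac{1/2+\epsilon}{1/2-\epsilon}\big)^{D-d_{\min}}$, where $d_{\min}$ is the smallest distance from $W$ to any of the $n$ blocks; since the minimum of $n$ roughly-independent binomial-type distances concentrates near $\ell\,H^{-1}(1-\log n/\ell)$, we get $D-d_{\min}=O(\epsilon\ell/\sqrt c)$, hence $\big(\tfrac{1/2+\epsilon}{1/2-\epsilon}\big)^{D-d_{\min}}=n^{O(\sqrt c)}$ and $\max_p w_p\le n^{-1+O(\sqrt c)}=n^{-\Omega(1)}$. \emph{Sign concentration:} conditioned on the blocks (hence on the $w_p$), the bits $L_p$ entering the sum are near-independent fair bits — each $L_p$ lies outside the block that determines $w_p$ — so a Hoeffding/Azuma bound gives $\big|\sum_p w_p(2L_p-1)\big|\le O\big(\sqrt{\log n\cdot\textstyle\sum_p w_p^2}\big)\le O\big(\sqrt{\log n\cdot\max_p w_p}\big)=o(1)$ w.h.p. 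Thus $\Pr(x_{\ell+1}=1\mid W)=\tfrac12\pm o(\epsilon)$, so $\mathcal{P}_\ell$'s zero-one error is $\tfrac12-o(\epsilon)$ and its relative loss is $\epsilon(1-o(1))$ at this step (in expectation, after absorbing the $o(1)$-probability bad events). Averaging over $t$ and invoking the probabilistic method yields a fixed $L$ whose average relative zero-one loss exceeds $\epsilon$ when $\ell<c\log n/\epsilon^2$; the $\ell_1$ bound follows at once because relative zero-one loss is pointwise at most the $\ell_1$ loss.

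For the KL bound, with the shorter threshold $c\log n/\epsilon$, I would use the elementary deterministic construction already flagged in the text: a cyclic HMM of period $P=\Theta(\log n/\epsilon)$ in which the first $m=\Theta(\log n)$ states each emit a fresh uniform bit and the state thereafter \emph{remembers} those $m$ bits (costing $2^m=\poly{n}$ states, with $2^m P\le n$ for the right constants), all intermediate states emit independent uniform bits, and states $P,\dots,P+m-1$ re-emit the stored bits verbatim. Then $\mathcal{P}_\infty$, seeing the original bit, predicts each re-emission with probability $1$, whereas any window of length $\ell<P-m$ contains no information about the stored bit, so $\mathcal{P}_\ell$ must predict $(\tfrac12,\tfrac12)$ there and incurs KL loss $\log2$ at each of the $m$ re-emission steps and $0$ elsewhere — an average relative KL loss of $m\log2/P=\Theta(\epsilon)$ — while needing a window of length $P-m=\Theta(\log n/\epsilon)$ to do better; rescaling constants gives the stated bound.

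The main obstacle is the second paragraph, and within it the bound on $\max_p w_p$: because for small $c$ the posterior can concentrate on some spuriously close block \emph{more} than on the true one, one cannot simply say ``the true state has tiny posterior mass'' — one must quantify how close the nearest of $n$ random length-$\ell$ strings can be to $W$ and push that through the likelihood ratio, and then run the sign concentration despite the mild dependence of the $w_p$ on the global labeling (handled either by restricting to a sub-collection of confusable positions spaced $\ge 2\ell$ apart, or by a bounded-differences argument over the $n$ independent label bits). The KL part, by contrast, is essentially bookkeeping.
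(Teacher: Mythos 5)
Your construction and overall strategy for the zero-one/$\ell_1$ part coincide with the paper's: a permutation (cycle) on $n$ states with i.i.d.\ uniform binary labels emitted with bias $\epsilon$, the probabilistic method over labelings, and the same counting heart --- the true length-$\ell$ label block sits at Hamming distance $\approx(\tfrac12-\epsilon)\ell$ from the window, while for $\ell\le c\log n/\epsilon^2$ with $c$ small, polynomially many other blocks are at least as close, so the window posterior spreads over many positions whose next-bits are fresh fair coins. Where you differ is in execution. The paper never needs a bound on $\max_p w_p$: it hands the window predictor extra side information (the position of the hidden state modulo $\ell+1$), which only strengthens the lower bound, and thereby restricts the confusable candidates to the $t=n/(\ell+1)$ segment-aligned positions, whose label blocks $r(k)$ are pairwise disjoint and also disjoint from the predicted bits $\mathcal{S}_2$; independence is then automatic, and the conclusion follows from ``at least $n^{0.25}$ segments carry weight at least that of the true segment, and an average dominated by $m$ independent fair bits correlates at most $1/\sqrt m$ with any one of them.'' Your route --- the full posterior over all $n$ cyclic shifts, the bound $\max_p w_p\le n^{-1+O(\sqrt c)}$ via the minimum of the $n$ distances, then Hoeffding over the label bits --- is plausible, but it is exactly where the unfinished work lies: overlapping blocks make the $n$ distances (hence $M$, $d_{\min}$, and the weights $w_p$) mutually dependent and dependent on the very labels $L_p$ you want to treat as fresh coins, and a single label bit influences up to $\ell$ of the weights, so the ``bounded-differences over the $n$ label bits'' variant needs genuine care. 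Your fallback of restricting attention to candidates spaced at least $2\ell$ apart is essentially the paper's segment/side-information device, and is the clean way to close this; adopt it outright rather than fight the dependencies. For the KL clause, your explicit ``emit $m$ uniform bits, replay them one period later'' HMM is the construction the paper itself flags in the main text as the easy route (and it is correct, with the $2^m P\le n$ bookkeeping you note), whereas the paper's appendix instead deduces the KL threshold from the $\ell_1$ bound via Pinsker's and Jensen's inequalities; both work --- yours is more self-contained, the paper's avoids a second construction.
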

\begin{proof}
	Consider a Hidden Markov Model with the Markov chain being a permutation on $n$ states. The output alphabet of each hidden state is binary. Each state $i$ is marked with a label $l_i$ which is 0 or 1, let $G(i)$ be mapping from hidden state $h_i$ to its label $l_i$. All the states labeled 1 emit 1 with probability $(0.5+\epsilon)$ and 0 with probability $(0.5-\epsilon)$. Similarly, all the states labeled 0 emit 0 with probability $(0.5+\epsilon)$ and 1 with probability $(0.5-\epsilon)$. Fig. \ref{hmm_fig} illustrates the construction and provides the high-level proof idea.\\
	
	\begin{figure}[h]
		\centering
		\includegraphics[width=2 in]{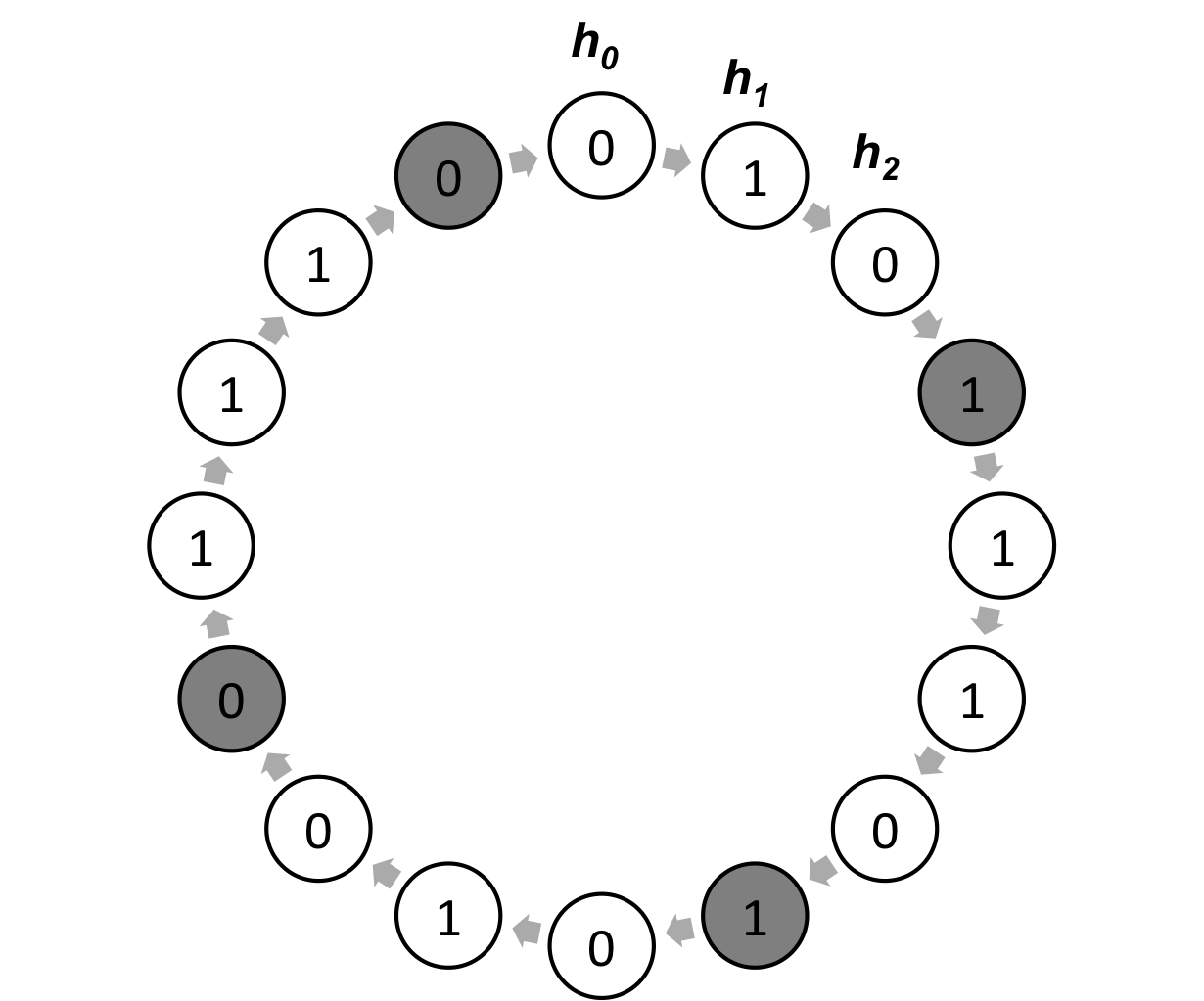}
		\caption{Lower bound construction, $\ell=3, n=16$. A note on notation used in the rest of the proof with respect to this example: $r(0)$ corresponds to the label of $h_0$, $h_1$ and $h_2$ and is $(0,1,0)$ in this case. Similarly, $r(1)=(1,1,0)$ in this case. The segments between the shaded nodes comprise the set $\mathcal{S}_1$ and are the possible sequences of states from which the last $\ell=3$ outputs could have come. The shaded nodes correspond to the states in $\mathcal{S}_2$, and are the possible predictions for the next time step. In this example $\mathcal{S}_1=\{(0,1,0),(1,1,0),(0,1,0),(1,1,1)\}$ and $\mathcal{S}_2= \{1,1,0,0\}$.}
		\label{hmm_fig}
	\end{figure}
	
	Assume $n$ is a multiple of $(\ell+1)$, where $(\ell+1)=c\log n/\epsilon^2$, for a constant $c=1/33$. We will regard $\epsilon$ as a constant with respect to $n$. Let $n/(\ell+1)=t$. We refer to the hidden states by $h_i$, where$ 0 \le i \le (n-1)$, and $h_i^j$ refers to the sequence of hidden states $i$ through $j$. We will show that a model looking at only the past $\ell$ outputs cannot get average zero-one loss less than $0.5-o(1)$. As the optimal prediction looking at all past outputs gets average zero-one loss $0.5-\epsilon+o(1)$ (as the hidden state at each time step can be determined to an arbitrarily high probability if we are allowed to look at an arbitrarily long past), this proves that windows of length $\ell$ do not suffice to get average zero-one error less than $\epsilon-o(1)$ with respect to the optimal predictions. Note that the Bayes optimal prediction at time $(\ell+1)$ to minimize the expected zero-one loss given outputs from time $1$ to $\ell$ is to predict the mode of the distribution $\Pr(x_{\ell+1}|x_{1}^{\ell}=s_1^{\ell})$ where $s_1^{\ell}$ is the sequence of outputs from time $1$ to $\ell$. Also, note that $\Pr(x_{\ell+1}|x_{1}^{\ell}=s_1^{\ell})= \sum _i \Pr(h_{i_{\ell}=i}|x_{1}^{\ell}=s_1^{\ell})\Pr(x_{\ell+1}|h_{i_{\ell}=i})$ where $h_{i_{\ell}}$ is the hidden state at time $\ell$. Hence the predictor is a weighted average of the prediction of each hidden state with the weight being the probability of being at that hidden state.\\
	
	We index each state $h_i$ of the permutation by a tuple $(f(i),g(i))=(j,k)$ where ${j = i \mod (\ell+1)}$ and $k=\lfloor \frac{i}{\ell+1}\rfloor$ hence $0 \le j \le \ell$, $0\le k\le (t-1)$ and $i=k(\ell+1)+j$. We help the predictor to make the prediction at time $(\ell+1)$ by providing it with the index $f(i_{\ell})= i_{\ell} \mod (\ell+1)$ of the true hidden state $h_{i_{\ell}}$ at time $\ell$. Hence this narrows down the set of possible hidden states at time $\ell$ (in Fig. \ref{hmm_fig}, the set of possible states given this side information are all the hidden states before the shaded states). The Bayes optimal prediction at time $(\ell+1)$ given outputs $s_1^{\ell}$ from time $1$ to $\ell$ and index $f(h_{i_{\ell}})=j$ is to predict the mode of $\Pr(x_{\ell+1}|x_{1}^{\ell}=s_1^{\ell}, f(h_{i_{\ell}})=j)$. Note that by the definition of Bayes optimality, the average zero-one loss of the prediction using $\Pr(x_{\ell+1}|x_{1}^{\ell}=s_1^{\ell}, f(h_{i_{\ell}})=j)$ cannot be worse than the average zero-one loss of the prediction using $\Pr(x_{\ell+1}|x_{1}^{\ell}=s_1^{\ell})$. Hence we only need to show that the predictor with access to this side information is poor. We refer to this predictor using $\Pr(x_{\ell+1}|x_{1}^{\ell}=s_1^{\ell}, f(h_{i_{\ell}})=j)$ as $\mathcal{P}$. We will now show that there exists some permutation for which the average zero-one loss of the predictor $\mathcal{P}$ is $0.5 - o(1)$. We argue this using the probabilistic method. We choose a permutation uniformly at random from the set of all permutations. We show that the expected average zero-one loss of the predictor $\mathcal{P}$ over the randomness in choosing the permutation is $0.5 - o(1)$. This means that there must exist some permutation such that the average zero-one loss of the predictor $\mathcal{P}$ on that permutation is $0.5 - o(1)$.\\

	To find the expected average zero-one loss of the predictor $\mathcal{P}$ over the randomness in choosing the permutation, we will find the expected average zero-one loss of the predictor $\mathcal{P}$ given that we are in some state $h_{i_{\ell}}$ at time $\ell$. Without loss of generality let $f(i_{\ell})=0$ and $g(i_{\ell})=(\ell-1)$, hence we were at the $(\ell-1)$th hidden state at time $\ell$. Fix any sequence of labels for the hidden states $h_0^{\ell-1}$. For any string $s_0^{\ell-1}$ emitted by the hidden states $h_0^{\ell-1}$ from time 0 to $\ell-1$, let $\E[\delta(s_0^{\ell-1})]$ be the expected average zero-one error of the predictor $\mathcal{P}$ over the randomness in the rest of the permutation. Also, let $\E[\delta(h_{\ell-1})]=\sum_{s_0^{\ell-1}}^{}\E[\delta(s_0^{\ell-1})]\Pr[s_0^{\ell-1}]$ be the expected error averaged across all outputs. We will argue that $\E[\delta(h_{\ell-1})]=0.5-o(1)$. The set of hidden states $h_i$ with $g(i)=k$ defines a segment of the permutation, let $r(k)$ be the label $G(h_{(k-1)({\ell}+1)}^{k(\ell+1)-2})$ of the segment $k$, excluding its last bit which corresponds to the predictions. Let $\mathcal{S}_1=\{r(k), \forall\; k \ne 0\}$ be the set of all the labels excluding the first label $r(0)$ and $\mathcal{S}_2=\{G(h_{k(\ell+1)+\ell}), \forall\; k\}$ be the set of all the predicted bits (refer to Fig. \ref{hmm_fig} for an example). Consider any assignment of $r(0)$. To begin, we show that with high probability over the output $s_0^{\ell-1}$, the Hamming distance $D(s_0^{\ell-1},r(0))$ of the output $s_0^{\ell-1}$ of the set of hidden states $h_0^{\ell-1}$ from $r(0)$ is at least $\frac{\ell}{2}-2\epsilon \ell$. This follows directly from Hoeffding's inequality\footnote{For $n$ independent random variables $\{X_i\}$ lying in the interval $[0,1]$ with $\bar{X}=\frac{1}{n}\sum_{i}^{}X_i$,  ${\Pr[X\le\E[\bar{X}] -t]}\le e^{-2nt^2}$. In our case $t=\epsilon$ and $n=\ell$.} as all the outputs are independent conditioned on the hidden state--
	\begin{align}
		\Pr[D(s_0^{\ell-1},r(0))\le {\ell}/{2}-2\epsilon \ell] &\le e^{-2\ell\epsilon^2}
		\le n^{-2c}\label{hoeffding}
	\end{align}
	We now show that for any $k\ne 0$, with decent probability the label $r(k)$ of the segment $k$ is closer in Hamming distance to the output $s_0^{\ell-1}$ than $r(0)$. Then we argue that with high  probability there are many such segments which are closer to $s_0^{\ell-1}$ in Hamming distance than $r(0)$. Hence these other segments are assigned as much weight in predicting the next output as $r(0)$, which means that the output cannot be predicted with a high accuracy as the output bits corresponding to different segments are independent.\\
	
	We first find the probability that the segment corresponding to some $k$ with label $r(k)$ has a Hamming distance less than $\frac{\ell}{2}-\sqrt{\ell\log t/8}$ from any fixed binary string $x$ of length $\ell$. Let $F(l,m,p)$ be the probability of getting at least $l$ heads in $m$ i.i.d. trails with each trial having probability $p$ of giving a head. $F(l,m,p)$ can be bounded below by the following standard inequality--
	\begin{align}
		F(l,m,p) \ge \frac{1}{\sqrt{2m}}\exp\Big(-mD_{KL}\Big(\frac{l}{m}\Big\| p\Big)\Big)\nonumber
	\end{align}
	where $D_{KL}(q\parallel p)=q\log\frac{q}{p}+(1-q)\log\frac{1-q}{1-p}$. We can use this to lower bound $\Pr\Big[D(r(k),x) \le {\ell}/{2}-\sqrt{\ell\log t/8}\Big] $,
	\begin{align}
		\Pr\Big[D(r(k),x) \le {\ell}/{2}-\sqrt{\ell\log t/8}\Big] &= F({\ell}/{2}+\sqrt{\ell\log t/8},\ell,1/2)\nonumber\\
		&\ge \frac{1}{\sqrt{2{\ell}}}\exp\Big(-\ell D_{KL}\Big(\frac{1}{2}+\sqrt{\frac{\log t}{8\ell}}\Big\| \frac{1}{2}\Big)\Big)\nonumber
	\end{align}
	Note that $D_{KL}(\frac{1}{2}+v\parallel \frac{1}{2}) \le 4v^2$ by using the inequality $\log (1+v)\le v$. We can simplify the KL-divergence using this and write--
	\begin{align}
		\Pr\Big[D(r(k),x) \le {\ell}/{2}-\sqrt{\ell\log t/8}\Big] \ge {1}/{\sqrt{2\ell t}}\label{eq:bound1}
	\end{align}
Let $\mathcal{D}$ be the set of all $k\ne 0$ such that $D(r(k),x) \le \frac{\ell}{2}- \sqrt{\ell\log t/8}$ for some fixed $x$. We argue that with high probability over the randomness of the permutation $|\mathcal{D}|$ is large. This follows from Eq. \ref{eq:bound1} and the Chernoff bound\footnote{For independent random variables $\{X_i\}$ lying in the interval $[0,1]$ with $X=\sum_{i}^{}X_i$ and $\mu=\E[X]$, ${\Pr[X\le (1-\epsilon)\mu]}\le \exp({-{\epsilon^2\mu}/{2}})$. In our case $\epsilon=1/2$ and $\mu=\sqrt{t/(2\ell)}$.} as the labels for all segments $r(k)$ are chosen independently--
\begin{align}
	\Pr\Big[|\mathcal{D}| \le \sqrt{{t}/({8\ell})}\Big] \le e^{-\frac{1}{8}\sqrt{{t}/(2\ell)}}\nonumber
\end{align}
Note that $\sqrt{{t}/({8\ell})} \ge n^{0.25}$. Therefore for any fixed $x$, with probability $1-\exp(-\frac{1}{8}\sqrt{\frac{t}{2\ell}})\ge 1-n^{-0.25}$ there are $\sqrt{\frac{t}{8\ell}}\ge n^{0.25}$ segments in a randomly chosen permutation which have Hamming distance less than ${\ell}/{2}-\sqrt{\ell\log t/8}$ from $x$. Note that by our construction $2\epsilon \ell \le  \sqrt{\ell\log t/8}$ because $\log (\ell +1) \le (1-32c)\log n $. Hence the segments in $\mathcal{D}$ are closer in Hamming distance to the output $s_0^{\ell-1}$ if $D(s_0^{\ell-1},r(0))> {\ell}/{2}-2\epsilon \ell$. \\

Therefore if $D(s_0^{\ell-1},r(0))> {\ell}/{2}-2\epsilon \ell$, then with high probability over randomly choosing the segments $\mathcal{S}_1$ there is a subset $\mathcal{D}$ of segments in $\mathcal{S}_1$ with $|\mathcal{D}|\ge n^{0.25}$ such that all of the segments in $\mathcal{D}$ have Hamming distance less than $D(s_0^{\ell-1},r(0))$ from $s_0^{\ell-1}$. Pick any $s_0^{\ell-1}$ such that $D(s_0^{\ell-1},r(0))> {\ell}/{2}-2\epsilon \ell$. Consider any set of segments $\mathcal{S}_1$ which has such a subset $\mathcal{D}$ with respect to the string $s_0^{\ell-1}$. For all such permutations, the predictor $\mathcal{P}$ places at least as much weight on the hidden states $h_i$ with $g(i)=k$, with $k$ such that $r(k) \in \mathcal{D}$ as the true hidden state $h_{\ell-1}$. The prediction for any hidden state $h_i$ is the corresponding bit in $\mathcal{S}_2$. 
Notice that the bits in $\mathcal{S}_2$ are independent and uniform as we've not used them in any argument so far. The average correlation of an equally weighted average of $m$ independent and uniform random bits with any one of the random bits is at most $1/\sqrt{m}$. Hence over the randomness of $\mathcal{S}_2$, the expected zero-one loss of the predictor is at least $0.5-n^{-0.1}$. Hence we can write-
\begin{align}
	\E[\delta(s_0^{\ell-1})] &\ge (0.5 - n^{-0.1})\Pr[|\mathcal{D}| \ge \sqrt{{t}/({8\ell})}]\nonumber\\
	&\ge (0.5 - n^{-0.1}) (1-e^{-n^{0.25}})\nonumber\\
	&\ge 0.5-2n^{-0.1}\nonumber
\end{align}
By using Equation \ref{hoeffding}, for any assignment $r(0)$ to $h_0^{\ell-1}$
\begin{align}
\E[\delta(h_{\ell-1})] &\ge \Pr\Big[D(s_0^{\ell-1},r(0)) > {\ell}/{2}-2\epsilon \ell\Big] E\Big[\delta(s_0^{\ell-1})\Big|D(s_0^{\ell-1},r(0)) > {\ell}/{2}-2\epsilon \ell\Big]\nonumber\\
&\ge (1-n^{-2c})(0.5-2n^{-0.1})\nonumber\\
&= 0.5-o(1)\nonumber
\end{align}
As this is true for all assignments $r(0)$ to $h_0^{\ell-1}$ and for all choices of hidden states at time $\ell$, using linearity of expectations and averaging over all hidden states, the expected average zero-one loss of the predictor $\mathcal{P}$ over the randomness in choosing the permutation is $0.5 - o(1)$. This means that there must exist some permutation such that the average zero-one loss of the predictor $\mathcal{P}$ on that permutation is $0.5 - o(1)$. Hence there exists an HMM on $n$ states such that is not information theoretically possible to get average zero-one error with respect to the optimal predictions less than $\epsilon-o(1)$ using windows of length smaller than $c\log n/\epsilon^2$ for a fixed constant $c$.\\

Therefore, for all $0<\epsilon<0.5$ and sufficiently large $n$, there exits an HMM with $n$ states such that it is not information theoretically possible to get average relative zero-one loss less than $\epsilon/2<\epsilon-o(1)$ using windows of length smaller than $c\epsilon^{-2}\log n$. The result for relative zero-one loss follows on replacing $\epsilon/2$ by $\epsilon'$ and setting $c'=c/4$. The result follows immediately from this as the expected relative zero-one loss is less than the expected $\ell_1$ loss. For KL-loss we use Pinsker's inequality and Jensen's inequality.
\end{proof}
\section*{Acknowledgements}

Sham Kakade acknowledges funding from the Washington Research Foundation for Innovation in Data-intensive Discovery, and the NSF Award CCF-1637360. Gregory Valiant and Sham Kakade acknowledge funding form NSF Award CCF-1703574. Gregory was also supported by NSF CAREER Award CCF-1351108 and a Sloan Research Fellowship. 

\bibliographystyle{unsrtnat}
\bibliography{all,references}

\end{document}